\documentclass[10pt]{article} 
\usepackage[preprint]{tmlr}
\usepackage{hyperref}
\usepackage{url}
\usepackage{pifont}

\usepackage[margin=1in]{geometry}
\usepackage{tabularx}
\usepackage{booktabs}
\usepackage{makecell}
\usepackage{amsmath, amssymb}
\usepackage{lipsum}
\usepackage{amsmath}
\usepackage{amsthm,amsxtra,mathtools}
\usepackage{dsfont,bm}
\usepackage{array}
\usepackage{graphicx}
\usepackage{algorithm}
\usepackage{algpseudocode}
\usepackage{physics}
\usepackage{bbm}
\usepackage{array,tabularx,multirow}
\usepackage{makecell}
\usepackage{color, soul}

\usepackage{verbatim}
\usepackage{parskip}
\usepackage{parskip}
\usepackage{thmtools}
\usepackage{hyperref,subcaption}

\theoremstyle{plain}
\theoremstyle{remark}

\definecolor{clemson-orange}{RGB}{234,106,32}
\definecolor{chicago-maroon}{RGB}{128,0,0}
\definecolor{cincinnati-red}{RGB}{190,0,0}
\definecolor{soft-cyan}{RGB}{68,85,90}
\definecolor{firebrick}{RGB}{178,34,34}
\definecolor{crimson}{RGB}{220,20,60}
\definecolor{cerrulean}{rgb}{0.165,0.322,0.745}
\definecolor{jaam}{rgb}{0.45,0.0,0.45}

\hypersetup{
  colorlinks   = true,    
  urlcolor     = jaam,    
  linkcolor    = firebrick,    
  citecolor    = blue      
}

\declaretheoremstyle[
    headfont=\bfseries, 
    bodyfont=\normalfont\itshape, spaceabove=10pt,
    spacebelow=10pt]{mystyle}
\theoremstyle{mystyle}
\theoremstyle{definition}
\newtheorem{theorem}{Theorem}[section]
\newtheorem{lemma}[theorem]{Lemma}

\newtheorem{definition}{Definition}
\newtheorem*{remark}{Remark}
\newtheorem{assumption}{Assumption}

\newif\ifsolutions \solutionstrue

\def\final{0}
\newcommand{\reviewer}[3]{
  \expandafter\newcommand\csname #1\endcsname[1]{
    \ifthenelse{\equal{\final}{1}} {
      \textcolor{#3}{}
    } {
      \textcolor{#3}{\begin{center} \textbf{#2} ##1 \end{center}}
    }
  }
}

\reviewer{anirbit}{}{jaam}
\newcolumntype{M}[1]{>{\centering\arraybackslash}m{#1}}

\renewcommand{\Pr}{\mathbb{P}}

\newcommand{\vxi}{\boldsymbol{\xi}}

\def\1{\bm{1}}

\newcommand{\E}{\mathbb{E}}

\def\vzero{{\bm{0}}}

\def\vtheta{{\bm{\theta}}}

\def\vb{{\bm{b}}}

\def\vg{{\bm{g}}}

\def\vu{{\bm{u}}}
\def\vv{{\bm{v}}}
\def\vw{{\bm{w}}}
\def\vx{{\bm{x}}}
\def\vy{{\bm{y}}}

\DeclareMathAlphabet{\mathsfit}{\encodingdefault}{\sfdefault}{m}{sl}
\SetMathAlphabet{\mathsfit}{bold}{\encodingdefault}{\sfdefault}{bx}{n}

\newcommand{\R}{\mathbb R}

\title{Convergence of Stochastic Gradient Methods under Heavy-Tailed Noise and H\"{o}lder Smoothness}
\author{\name Misbah Uz Zaman \email muz21ms243@iiserkol.ac.in \\
       \addr Department of Mathematics and Statistics\\
       Indian Institute of Science Education and Research Kolkata
     \AND   
     \name Anirbit Mukherjee\email anirbit.mukherjee@manchester.ac.uk \\
      \addr Department of Computer Science\\
      The University of Manchester
      }

\def\month{MM}  
\def\year{YYYY} 
\def\openreview{\url{https://openreview.net/forum?id=XXXX}} 
\begin{document}
\maketitle
\begin{abstract}

Classical convergence guarantees for stochastic gradient methods typically assume Lipschitz-smooth
objectives and finite-variance gradient noise, both frequently violated in practice. In contrast, we study
nonconvex stochastic optimization under the joint relaxation of these assumptions: objectives with
$(L,s)$-H\"older continuous gradients, $s\in(0,1]$, and gradient noise satisfying only a bounded
$\alpha$-th moment condition for $\alpha\in(1,2]$. We establish three
convergence results. Firstly, that standard SGD converges at rate $O(T^{-s/(1+s)})$ whenever $\alpha\ge1+s$,
extending the classical nonconvex SGD rate to heavy-tailed noise and Hölder smoothness simultaneously.
Secondly, we analyze $\delta$-regularized gradient clipping ($\delta$-GClip), a provable trainer of wide and deep nets, and establish a stationarity rate of
$O(T^{-2s(\alpha-1)/[(1+s)(2\alpha-1)]})$ under the same condition. Thirdly, we analyze standard
gradient clipping (G-Clip) and show that it recovers the above rate for $\alpha\ge1+s$ while in the very heavy-tailed regime $\alpha<1+s$, it has a convergence rate
$O(T^{-2s(\alpha-1)/[(\alpha-1)+s(2\alpha-1)]})$ --- the first convergence guarantee in this
regime for any stochastic gradient based method. 

\end{abstract}
    
\section{Introduction}

Many tasks in machine learning can be formulated as Empirical Risk Minimization (ERM), where the objective is to minimize
$\hat{R}_n(\theta) = \frac{1}{n} \sum_{i=1}^n \ell(f(x_i;\theta), y_i)$,
based on a dataset $\{(x_i,y_i)\}_{i=1}^n$. In large-scale settings, computing the full gradient is often prohibitively expensive. Stochastic Gradient Descent (SGD) addresses this challenge by using a single sample (or mini-batch) to form an unbiased gradient estimate, reducing the per-iteration cost from $O(n)$ to $O(1)$. The theoretical foundations of such stochastic methods date back to \citet{robbins1951} --- the seminal work that established convergence of iterative procedures using noisy observations under suitable diminishing step sizes. 




A finite-time analysis of SGD in nonconvex settings was developed by \citet{ghadimi2013}, who showed that, under Lipschitz smoothness and bounded variance assumptions, a randomized iterate $x_R$ selected from the first $N$ iterations satisfies $\mathbb{E}\|\nabla f(x_R)\|^2 = O\!\left(\frac{1}{\sqrt{N}}\right)$,
yielding an iteration complexity of $O(\varepsilon^{-4})$ for finding an $\varepsilon$-stationary point.

However, these classical guarantees rely on assumptions that are often violated in modern machine learning applications. \citet{simsekli2019tail} provides empirical evidence across a wide range of architectures, loss functions, and datasets, showing that gradient noise is well-described by $\alpha$-stable distributions with $\alpha<2$ --- where $\alpha \in (1, 2]$ acts as the heavy-tail index characterizing the noise, specifically defining the highest finite moment of the gradient noise that we assume to be bounded, indicating heavy-tailed behavior and suggesting potentially infinite variance.
Furthermore, \citet{suvrit1912} show that  the noise in BERT pretraining exhibits pronounced heavy-tailed behavior, with empirical variance failing to stabilize even at very large sample sizes. Crucially, they also establish theoretical lower bounds for optimization under heavy-tailed noise and prove that gradient clipping mechanisms achieve statistically optimal convergence rates in this regime.
Recently, \citet{gong2025adaptive} proposed a stochastic optimization method that dynamically adapts the tail index of injected $\alpha$-stable noise based on the evolving sharpness of the loss landscape, with empirical results suggesting improved optimization behavior in practice. Taken together, these findings highlight heavy-tailed gradient noise as a practically significant phenomenon in modern deep learning, and motivate the need for optimization algorithms whose convergence guarantees do not rely on finite-variance assumptions.

Beyond the noise model, classical analyses also typically assume that the objective function has a globally Lipschitz continuous 
gradient. This assumption, however, is frequently violated in practice. To capture 
a broader class of objectives, the following generalization has often been considered: a 
differentiable function $f$ is said to be $(L,s)$-H\"{o}lder smooth for 
parameters $L > 0$ and $s \in (0,1]$ if
    $\|\nabla f(\vx) - \nabla f(\vy)\| 
    \leq 
    L\|\vx - \vy\|^s 
    \quad \forall\, \vx, \vy \in \mathbb{R}^d$,
which recovers standard Lipschitz smoothness when $s = 1$.

The above condition arises naturally in machine learning. \citet{wang2021differentially} 
explicitly identify several widely used loss functions that fail to satisfy global 
Lipschitz smoothness: the $q$-norm soft-margin hinge loss 
$(1 - y\mathbf{w}^\top \mathbf{x})_+^q$ for SVMs and the $q$-norm regression loss 
$|y - \mathbf{w}^\top \mathbf{x}|^q$ for $q \in (1,2]$, both of which satisfy $(q-1)$-H\"older continuity of the gradient, i.e., satisfy the above condition with $s = q - 1$. 
When the  smooth hinge loss with parameter $s \in (0,1)$ is used in SVM training, the 
gradient of the resulting empirical risk satisfies the $s$-H\"{o}lder condition 
but fails to be Lipschitz continuous \citet{wang2023holder}. These examples demonstrate that H\"older smoothness arises naturally
in standard supervised learning problems and therefore constitutes a
practically relevant relaxation of classical smoothness assumptions.

Importantly, \citet{wang2021differentially} show that $s \ge 1/2$ is already sufficient to achieve an excess population risk of $\mathcal{O}\big(\frac{\sqrt{d \log(1/\delta)}}{n\epsilon} + \frac{1}{\sqrt{n}}\big)$ with linear gradient complexity $\mathcal{O}(n)$. This upper bound perfectly matches the known information-theoretic lower bounds for differentially private stochastic convex optimization \citet{bassily2019private}, demonstrating that full Lipschitz smoothness is not strictly necessary to attain statistically optimal learning rates.

To the best of our knowledge, there are no existing results that analyze the convergence of SGD under heavy-tailed noise for objectives with $s$-H\"older continuous gradients --- the existing literature has largely treated these challenges separately.

Standard gradient clipping (GClip), studied by \citet{suvrit1912}, carries a fundamental theoretical limitation despite its favorable convergence guarantees under heavy-tailed noise in the Lipschitz smooth setting --- that it has no convergence guarantee for training neural networks.


To overcome this limitation,
\citet{tucat2025regularized} introduced the
$\delta$-regularized gradient clipping ($\delta$-GClip) algorithm which 
modifies standard GClip by enforcing a strictly positive lower bound on the
adaptive scaling factor, thereby preventing the effective step size from
collapsing to zero when gradient norms become very large. Specifically,
$\delta$-GClip applies the update
$
\vx_{k+1}
=
\vx_k
-
\eta
\min\!\left\{
1,\,
\max\!\left\{
\delta,\,
\frac{\gamma}{\|\vg_k\|}
\right\}
\right\}
\vg_k,
$
for a parameter $\delta\in(0,1)$. This modification admits a natural
interpretation as a continuous interpolation between gradient descent
($\delta=1$) and standard gradient clipping ($\delta=0$), with intermediate
values $\delta\in(0,1)$ inheriting desirable properties from both extremes. Crucially, \citet{tucat2025regularized} proved that for sufficiently wide
deep neural networks trained with the squared loss, whose loss landscape
satisfies the $\mu$-PL$^{*}$ condition in a neighborhood of initialization,
$\delta$-GClip admits geometric convergence guarantees to global minima. Thus \citet{tucat2025regularized}
demonstrated the first adaptive
gradient method  with provable global convergence guarantee for deep neural networks of arbitrary depth.

 In the stochastic setting,
\citet{tucat2025regularized} establish stationarity guarantees only under the
bounded noise assumption
$
\|g(\vw)-\nabla L(\vw)\|\leq\theta,
$
which rules out heavy-tailed gradient noise. Moreover, the smoothness
assumption used throughout \citet{tucat2025regularized} is also restricted to the
Lipschitz continuity of the gradient. 

{\em Thus motivated, we ask
whether convergence properties of $\delta$-GClip extend to
settings where simultaneously the gradient noise is heavy-tailed and the objective is only
$(L,s)$-H\"older smooth. To the best of our knowledge, this setting albeit natural has
not been previously analyzed for clipping-based stochastic optimization
methods, which can also train nets provably.}
{\em Secondly, we also ask whether for standard gradient clipping it admits convergence
guarantees under the simultaneous presence of heavy-tailed noise and $(L,s)$-H\"older smoothness.}


\paragraph{\bf Organization}
The remainder of this report is organized as follows: 
Section \ref{sec:summary} of this chapter summarizes the three  main convergence results presented in our work. 
Section \ref{chap:prelim} establishes the mathematical preliminaries, detailing the formal definitions and core assumptions that we use regarding the noise model and the objective landscape. 
Section \ref{chap:sgd} presents the convergence proof for the standard SGD operating in the regime ($\alpha \ge 1+s$). 
Section \ref{chap:dgclip} presents the convergence analysis of the $\delta$-GClip algorithm, for $s \in (0,1]$, $\alpha \ge 1+s$. 
Section \ref{app:gclip} presents the convergence analysis of the G-Clip algorithm, giving a unified guarantee across both
$1+s \le \alpha$ and $1+s > \alpha$ which in the latter regime yields a first-of-its-kind convergence guarantee. 
Section \ref{chap:conclusion} concludes the report and discusses promising avenues for future research.

Appendix \ref{sec:lit_holder} provides a comprehensive literature review of existing convergence guarantees for stochastic gradient methods under H\"older smoothness and heavy-tailed noise separately.
Appendix \ref{subsec:sgdproof1} gives the proofs deferred from Section \ref{chap:sgd}.
Appendix \ref{app:lemma-delta} gives the detailed proofs of the lemmas required for the main theorem in Section \ref{chap:dgclip}.
Appendix \ref{subsec:gclip-proof} gives the proofs deferred from Section \ref{app:gclip}, covering both the $\alpha \ge 1+s$ and $\alpha < 1+s$ regimes for G-Clip.
Appendix \ref{app:lemma-gproof} gives the detailed proofs of the auxiliary lemmas required in Appendix \ref{subsec:gclip-proof}.
Appendix \ref{subsec:experiments} reports an empirical validation of the two central claims of this work: that the convergence rate of SGD is independent of the heavy-tail index $\alpha$ once $\alpha \ge 1+s$, and that G-Clip is the only one of the analysed methods with a provable guarantee once $\alpha < 1+s$.

\section{Summary of Main Results}\label{sec:summary}
\label{subsec:main-results}
This work establishes convergence guarantees for stochastic gradient methods 
under the simultaneous presence of heavy-tailed gradient noise and $(L,s)$-H\"{o}lder 
smooth objective functions. Next we present shortened, informal versions of our two main results. 


\medskip 
\begin{theorem}[{\bf Informal Statement of Theorem ~\ref{thm:SGD-proof}: SGD Under $(L,s)$-H\"older Smoothness and Heavy-Tailed Gradients}]
Consider standard SGD applied to an $(L,s)$-H\"older smooth, possibly nonconvex objective $f$, using the update $\vx_{k+1} = \vx_k - \eta \vg_k$, where $\vg_k = \nabla f(\vx_k) + \vxi_k$ is a stochastic gradient with zero-mean noise $\vxi_k$ satisfying $\mathbb{E}[\|\vxi_k\|^{\alpha} \mid \vx_k] \le \sigma^{\alpha}$ for some $\sigma > 0$  i.e., the noise has a finite $\alpha$-th moment. Assuming the heavy-tail index $\alpha$ of the noise satisfies $\alpha \ge 1+s$, this moment condition holds and a classical descent analysis, without clipping or robustification, applies directly. Running SGD for $T$ iterations with the optimal constant step size $\eta = O(T^{-1/(1+s)})$ yields the convergence rate, 
\begin{equation}
\frac{1}{T}\sum_{k=1}^{T}\mathbb{E}\|\nabla f(\vx_k)\|^2 = O\!\left(T^{-\frac{s}{1+s}}\right).
\end{equation}
\end{theorem}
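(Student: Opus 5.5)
We follow the classical one-step descent argument of \citet{ghadimi2013}, replacing the quadratic upper bound by its H\"older analogue. The starting point is the H\"older descent lemma, obtained by integrating $\langle \nabla f(\vy + t(\vx-\vy)) - \nabla f(\vy), \vx-\vy\rangle$ over $t\in[0,1]$:
\begin{equation*}
f(\vx) \le f(\vy) + \langle \nabla f(\vy), \vx-\vy\rangle + \frac{L}{1+s}\|\vx-\vy\|^{1+s}.
\end{equation*}
Apply this with $\vy=\vx_k$ and $\vx=\vx_{k+1}=\vx_k-\eta(\nabla f(\vx_k)+\vxi_k)$, then take conditional expectation given $\vx_k$. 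Since $\E[\vxi_k\mid\vx_k]=0$, the inner-product term becomes $-\eta\|\nabla f(\vx_k)\|^2$. The remainder is $\frac{L\eta^{1+s}}{1+s}\E\|\nabla f(\vx_k)+\vxi_k\|^{1+s}$.

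Bounding this remainder is the key step, and it is exactly where $\alpha\ge 1+s$ is used. By convexity of $t\mapsto t^{1+s}$,
\begin{equation*}
\|a+b\|^{1+s}\le 2^{s}\big(\|a\|^{1+s}+\|b\|^{1+s}\big).
\end{equation*}
By Jensen (Lyapunov's inequality), $\E[\|\vxi_k\|^{1+s}\mid\vx_k]\le (\E[\|\vxi_k\|^{\alpha}\mid\vx_k])^{(1+s)/\alpha}\le\sigma^{1+s}$. This gives
\begin{equation*}
\E[f(\vx_{k+1})\mid\vx_k]\le f(\vx_k)-\eta\|\nabla f(\vx_k)\|^2+\frac{2^sL\eta^{1+s}}{1+s}\big(\|\nabla f(\vx_k)\|^{1+s}+\sigma^{1+s}\big).
\end{equation*}

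The main obstacle is the term $\eta^{1+s}\|\nabla f(\vx_k)\|^{1+s}$. When $s<1$ it is not of the same power as $\|\nabla f\|^2$, so it cannot simply be absorbed by taking $\eta$ small. We handle it with Young's inequality, using exponents $p=2/(1+s)$ and $q=2/(1-s)$:
\begin{equation*}
\eta^{1+s}\|\nabla f\|^{1+s}=\big(\eta^{(1+s)/2}\|\nabla f\|^{1+s}\big)\,\eta^{(1+s)/2}\le \epsilon\,\eta\|\nabla f\|^2 + C_\epsilon\,\eta^{(1+s)/(1-s)}.
\end{equation*}
Choosing $\epsilon$ so that the first term is at most $\tfrac{\eta}{2}\|\nabla f\|^2$ leaves a residual of order $\eta^{(1+s)/(1-s)}$, which is at most $\eta^{1+s}$ once $\eta\le1$. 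An alternative would be to assume $\|\nabla f\|$ bounded, but we prefer Young's inequality since it keeps the argument assumption-free. The case $s=1$ needs no Young step: there we simply require $\eta\le (1+s)/(2^{s+1}L)$.

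Taking total expectation and telescoping over $k=1,\dots,T$, with $f^\ast=\inf f>-\infty$, gives
\begin{equation*}
\frac{1}{T}\sum_{k=1}^{T}\E\|\nabla f(\vx_k)\|^2\le \frac{2(f(\vx_1)-f^\ast)}{\eta T}+C\,L\,\eta^{s}\big(\sigma^{1+s}+1\big),
\end{equation*}
for a constant $C$ depending only on $s$ and $L$. Balancing the two terms, $1/(\eta T)\asymp\eta^s$, gives $\eta\asymp T^{-1/(1+s)}$, which is $\le 1$ for $T$ large enough. Both terms are then $O(T^{-s/(1+s)})$, which is the claimed rate.
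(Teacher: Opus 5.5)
Your proposal is correct and follows the paper's proof essentially step for step. The shared steps are the H\"older descent lemma, unbiasedness to remove the cross term, the $2^s$ convexity split, Lyapunov's inequality (where $\alpha\ge 1+s$ enters), Young's inequality with exponents $\tfrac{2}{1+s},\tfrac{2}{1-s}$, telescoping, and $\eta\asymp T^{-1/(1+s)}$, with $s=1$ handled separately via $\eta\le\tfrac{1}{2L}$. The only cosmetic difference is how the Young residual $\eta^{(1+s)/(1-s)}$ is treated: you fold it into $\eta^{1+s}$ by imposing $\eta\le 1$, which gives a cleaner non-asymptotic bound, whereas the paper keeps the leftover $\eta^{2s/(1-s)}$ term and argues it is asymptotically dominated by $\eta^{s}$.
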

When $s=1$ and $\alpha=2$, this recovers the classical $O(T^{-1/2})$ nonconvex SGD rate of \citep{ghadimi2013} under Lipschitz smoothness and finite-variance noise; more generally, the rate degrades continuously as $s \to 0$, reflecting the reduced regularity of the objective.

\medskip 
\begin{theorem}[{\bf Informal Statement of Theorem ~\ref{thm:delta-gclip-s-3}: $\delta$-GClip Under $(L,s)$-H\"older Smoothness and Heavy-Tailed Gradients}]\label{thm:dgclip-informal}
For parameters $\gamma > 0$ and $\delta \in (0,1)$, the $\delta$-clipped stochastic gradient estimator is $\hat{\vg}_\delta(\vx_k) = \min\{1, \max\{\delta, \gamma/\|\vg_k\|\}\}\,\vg_k$, where $\vg_k = \nabla f(\vx_k) + \vxi_k$ satisfies the same $\alpha$-th moment bound $\mathbb{E}[\|\vxi_k\|^{\alpha} \mid \vx_k] \le \sigma^{\alpha}$, and $\delta$-GClip updates via $\vx_{k+1} = \vx_k - \eta \hat{\vg}_\delta(\vx_k)$. Assuming that the heavy-tail index satisfies $\alpha \ge 1+s$, running $\delta$-GClip for $T$ iterations with step size $\eta = O(T^{-1/(1+s)})$  yields the convergence rate,
\begin{equation}
\frac{1}{T}\sum_{k=1}^{T}\mathbb{E}\left[\min\{\|\nabla f(\vx_k)\|^2, \gamma\|\nabla f(\vx_k)\|\}\right] = O\!\left(T^{-\frac{2s(\alpha-1)}{(1+s)(2\alpha-1)}}\right).
\end{equation}
\end{theorem}

In the small-gradient regime $\|\nabla f(\vx_k)\| \le \gamma$, the LHS of above reduces to the standard stationarity measure.

\medskip
\begin{theorem}[{\bf Informal Statement of Theorem~\ref{thm:gclip}: G-Clip Under $(L,s)$-H\"older Smoothness and Heavy-Tailed Gradients}]
For a threshold $\gamma>0$, the standard clipped stochastic gradient estimator (the $\delta=0$ case
of $\delta$-GClip) is $\hat{\vg}(\vx_k)=\min\{1,\ \gamma/\|\vg_k\|\}\,\vg_k$, where $\vg_k=\nabla f(\vx_k)+\vxi_k$
satisfies the same $\alpha$-th moment bound $\mathbb{E}[\|\vxi_k\|^{\alpha}\mid \vx_k]\le\sigma^{\alpha}$, and G-Clip
updates via $\vx_{k+1}=\vx_k-\eta\,\hat{\vg}(\vx_k)$. Then running G-Clip for $T$ iterations with a suitably chosen constant step
size $\eta$ and clipping threshold $\gamma$ yields
\begin{equation}
\frac{1}{T}\sum_{k=1}^{T}\mathbb{E}\!\left[\min\{\|\nabla f(\vx_k)\|^2,\ \gamma\|\nabla f(\vx_k)\|\}\right]
=
\begin{cases}
O\!\left(T^{-\frac{2s(\alpha-1)}{(1+s)(2\alpha-1)}}\right), & 1+s\le\alpha,\\[8pt]
O\!\left(T^{-\frac{2s(\alpha-1)}{(\alpha-1)+s(2\alpha-1)}}\right), & 1+s\ge\alpha.
\end{cases}
\end{equation}
\end{theorem}

Thus in the regime $1+s\le\alpha$ the rate coincides with that of $\delta$-GClip (Theorem~\ref{thm:dgclip-informal}), whereas {\em in
the regime $1+s\ge\alpha$ yields a convergence guarantee in the very heavy-tailed setting
$\alpha<1+s$, which lies outside the scope of all preceding and existing results.} As in Theorem~\ref{thm:dgclip-informal}, in the
small-gradient regime $\|\nabla f(\vx_k)\|\le\gamma$ the left-hand side reduces to the standard
stationarity measure.

An empirical study validating the salient aspects of the above results has been given in Appendix~\ref{subsec:experiments}. In particular the experiments are suggestive of there being a regime where $\delta$-GClip converges for $\alpha < 1+s$ but which is not yet visible in theory.

\subsection{Comparison with Existing Results} The behaviour of stochastic gradient methods depends qualitatively on the
relationship between the H\"{o}lder exponent~$s$ and the heavy-tail
index~$\alpha$.
We organise the discussion into four parameter regimes and examine, in each
regime, the convergence rates of the classical SGD~\cite{ghadimi2013},
standard gradient clipping (GClip)~\cite{suvrit1912},
our SGD result (Theorem~\ref{thm:SGD-proof}),
our $\delta$-GClip result (Theorem~\ref{thm:delta-gclip-s-3}),
and our G-Clip result (Theorem~\ref{thm:gclip}).
The last of these analyses the same standard clipping algorithm as~\cite{suvrit1912}
(the $\delta=0$ case of $\delta$-GClip), but under $(L,s)$-H\"older smoothness. 


\medskip
\qquad \noindent\textbf{Regime 1: $s=1,\;\alpha=2$
(Lipschitz Smooth, Finite Variance).}
This is the classical setting where the foundational result of~\cite{ghadimi2013} established that SGD achieves
$O(T^{-1/2})$.
GClip~\cite{suvrit1912} recovers the same rate at $\alpha=2$.

Theorem~\ref{thm:SGD-proof} of the present work also recovers the above raate as a special case.
The $\delta$-GClip algorithm (Theorem~\ref{thm:delta-gclip-s-3}) yields
$O(T^{-1/3})$,
which is strictly slower than the $O(T^{-1/2})$ rate of both SGD and GClip;
the slowdown from exponent $\tfrac{1}{2}$ to $\tfrac{1}{3}$ reflects the
additional bias of the $\delta$-regularisation mechanism, a cost
paid for the provable deep-network training guarantee of~\cite{tucat2025regularized}.
Our G-Clip result (Theorem~\ref{thm:gclip}) recovers the optimal $O(T^{-1/2})$ here, through its
$1+s\ge\alpha$ branch evaluated at $s=1,\alpha=2$. 


\medskip
\qquad \noindent\textbf{Regime 2: $s=1,\;\alpha\in(1,2)$
(Lipschitz Smooth, Heavy-Tailed Noise).}
Both Theorem~\ref{thm:SGD-proof} and Theorem~\ref{thm:delta-gclip-s-3}
share the condition $\alpha \ge 1{+}s$ and hence at $s=1$, neither provides a convergence
guarantee in this regime.
In this classical heavy-tailed regime, standard GClip~\citep{suvrit1912} achieves a provable expected
stationarity rate of $\mathcal{O}(T^{-\frac{2(\alpha-1)}{3\alpha-2}})$, which they proved optimal.
Our G-Clip result (Theorem~\ref{thm:gclip}) does cover this regime recovering as a special case the known optimal GClip rate. Thus,
among the methods analysed in this work, G-Clip is the only one that provides a guarantee in the
classical heavy-tailed Lipschitz regime, and it does so at the optimal rate.

Furthermore, recent advancements have established that gradient normalization, when coupled with
momentum, can also achieve this optimal rate. Specifically, \citet{SunLiuYuan2025GNClip} analyze Normalized
SGD with Clipping (NSGDC), and \citet{liu2025nonconvex} analyze Batched Normalized SGD with Momentum
(Batched NSGDM), and both attain the optimal rate of GClip.
The critical algorithmic difference is that both NSGDC and Batched NSGDM fundamentally rely on
maintaining a running exponential moving average of past gradients (momentum) and strictly
normalizing the update step. Understanding how momentum and gradient normalization interact with
fractional $s$-H\"{o}lder landscapes under heavy-tailed noise remains a highly relevant open question.

\medskip
\qquad \noindent\textbf{Regime 3: $s\in(0,1),\;\alpha\ge1+s$
(H\"{o}lder Smooth, Moderately Heavy-Tailed Noise).}
This regime is a primary novel contribution of the present work.
GClip~\cite{suvrit1912} assumes Lipschitz smoothness ($s=1$) and provides no result here.
Theorems~\ref{thm:SGD-proof}, \ref{thm:delta-gclip-s-3}, and~\ref{thm:gclip} all provide convergence guarantees in this setting.
SGD (Theorem~\ref{thm:SGD-proof}) achieves $O(T^{-\frac{s}{1+s}})$, while both $\delta$-GClip (Theorem~\ref{thm:delta-gclip-s-3})
and G-Clip (Theorem~\ref{thm:gclip}, $1+s\le\alpha$ branch) achieve $O(T^{-\frac{2s(\alpha-1)}{(1+s)(2\alpha-1)}})$;
that is, the two clipping methods coincide in rate here.

Comparing the exponents we always have
$\frac{2s(\alpha-1)}{(1+s)(2\alpha-1)} < \frac{s}{1+s}$, \emph{so SGD is strictly faster than either
clipping method throughout this regime and remains the best available method in this regime.}

Our Theorem \ref{thm:SGD-proof} can also be seen as extending the convergence result for SGD on Holder continous function as given in \cite{wang2023holder} to the heavy-tailed regime of $1+s\le\alpha\le2$.

\medskip
\qquad \noindent\textbf{Regime 4: $s\in(0,1),\;\alpha < 1+s$
(H\"{o}lder Smooth, Very Heavy-Tailed Noise).}
When $\alpha<1{+}s$, the shared condition $\alpha \ge 1{+}s$
(Assumption~\ref{ass:exponent-s}) fails, so neither our SGD result (Theorem~\ref{thm:SGD-proof}) nor our
$\delta$-GClip result (Theorem~\ref{thm:delta-gclip-s-3}) applies, and neither does the GClip analysis
of~\cite{suvrit1912}.

\emph{Our G-Clip result (Theorem~\ref{thm:gclip}), separately handles this regime $2 > 1+s>\alpha > 0$, yielding a rate of
$O(T^{-\frac{2s(\alpha-1)}{(\alpha-1)+s(2\alpha-1)}})$, giving the first convergence guarantee in this
combination which combines heavy-tailed noise and H\"older smoothness}, a setting previously left entirely open. Whether this rate is optimal remains an open problem.

\bigskip
\begin{table}[ht]
\centering
\small
\setlength{\tabcolsep}{7pt}
\renewcommand{\arraystretch}{2.0}
\begin{tabular}{lccc}
\toprule
\textbf{Regime $(s, \alpha)$}
  & \textbf{SGD (Thm.~\ref{thm:SGD-proof})}
  & \textbf{GClip}
  & \textbf{$\delta$-GClip (Thm.~\ref{thm:delta-gclip-s-3})} \\
\midrule
$s=1,\;\alpha=2$
  & $O(T^{-\frac{1}{2}})$
  & $O(T^{-\frac{1}{2}})$ (\cite{suvrit1912},Thm \ref{thm:gclip})
  & {\color{firebrick}$O(T^{-\frac{1}{3}})$}
\\
$s=1,\;\alpha\in(1,2)$
  & ---
  & $O(T^{-\frac{2(\alpha-1)}{3\alpha-2}})$ (\cite{suvrit1912},Thm \ref{thm:gclip})
  & ---
\\
$s\in(0,1),\;\alpha\ge1{+}s$
  & {\color{firebrick}$O(T^{-\frac{s}{1+s}})$}
  & {\color{firebrick}$O(T^{-\frac{2s(\alpha-1)}{(1+s)(2\alpha-1)}})$} (Thm \ref{thm:gclip})
  & {\color{firebrick}$O(T^{-\frac{2s(\alpha-1)}{(1+s)(2\alpha-1)}})$}
\\
$s\in(0,1),\;\alpha < 1{+}s$
  & ---
  & {\color{firebrick}$O(T^{-\frac{2s(\alpha-1)}{(\alpha-1)+s(2\alpha-1)}})$} (Thm \ref{thm:gclip})
  & ---
\\
\bottomrule
\end{tabular}
\caption{Comparison of expected stationarity convergence rates across four
parameter regimes defined by the H\"{o}lder exponent~$s$ and heavy-tail
index~$\alpha$. All the rates mentioned in red entries are new to this work. A dash~(---) indicates that no provable guarantee
is available for that algorithm in that regime.}
\label{tab:comparison}
\end{table}

A detailed survey of related works can be found in Appendix ~\ref{sec:lit_holder}



\section{Mathematical Preliminaries}
\label{chap:prelim}
\begin{definition}[Hölder Continuous Gradient]\label{def:holder-descent}
An at least once differentiable $f : \R^d \rightarrow \R$ is said to be $(L,s)$-Hölder continuous for some $L>0$ and 
$s\in(0,1]$, if $\forall ~\vx,\vy\in\mathbb{R}^d$,
\[
\|\nabla f(\vx) - \nabla f(\vy)\|
\;\le\;
L\,\|\vx-\vy\|^{\,s}.
\]
Equivalently, $f$ satisfies the upper bound,
\[
f(\vy) \;\le\; f(\vx)
+ \langle \nabla f(\vx),\, \vy-\vx \rangle
+ \frac{L}{1+s}\,\|\vy-\vx\|^{\,1+s}.
\]
\end{definition}




\smallskip 
\begin{definition}[$\delta$-GClip Gradient Estimator of \cite{tucat2025regularized}]
For parameters $\gamma>0$ and $\delta \in (0,1)$, the $\delta$-clipped stochastic gradient estimator is defined as,
\[
\hat g_{\delta}(\vx)
=
\min\left\{1, \max\left\{\delta, \frac{\gamma}{\|g(\vx)\|}\right\}\right\} g(\vx).
\]
\end{definition}

\begin{definition}[$\delta$-GClip Iteration of \cite{tucat2025regularized}]
Given a step size $\eta_k > 0$, the $\delta$-GClip update is
\[
\vx_{k+1} = \vx_k - \eta_k \hat g_{\delta}(\vx_k).
\]
\end{definition}

We recall our motivations to focus on the method of \cite{tucat2025regularized}, as that is the only known adapative step-length scheme for gradient based algorithms, which has the dual property, that in its full-batch settings it is a provable trainer of deep-nets and which in its stochastic set-up also competes heuristics like Adam. 

\medskip 
\begin{definition}[G-Clip Gradient Estimator]\label{def:gclip-est}
For a threshold $\gamma>0$, the clipped stochastic gradient estimator is
$\hat{\vg}(\vx)=\min\{1,\ \gamma/\|\vg(\vx)\|\}\,\vg(\vx)$; in particular $\|\hat{\vg}(\vx)\|\le\gamma$.
\end{definition}

\medskip 
\begin{definition}[G-Clip Iteration]\label{def:gclip-iter}
Given a step size $\eta_k>0$, the G-Clip update is $\vx_{k+1}=\vx_k-\eta_k\,\hat{\vg}(\vx_k)$.
\end{definition}

\medskip
\begin{assumption}[Bounded $\alpha$-Moment]\label{ass:noise-s}
The stochastic gradient $\vg$ of the at least once differentiable function $f$ is said to satisfy a finite $\alpha$-moment condition for some $\alpha \in (1,2]$ if for some $\sigma >0$,
\[
\mathbb{E}\|g(\vx) - \nabla f(\vx)\|^{\alpha} \leq \sigma^{\alpha},
\qquad \forall \vx \in \mathbb{R}^d.
\]
\end{assumption}

When $\alpha < 2$, the above corresponds to ``heavy-tailed noise" in the stochastic gradient that may have infinite variance, while satisfying the above at $\alpha = 2$ recovers the classical finite-variance setting.

\medskip
\begin{assumption}[Lowerbound on the Noise Moment Exponent]\label{ass:exponent-s}
Corresponding to $s$ defined for a function  $f$, as given in Definition \ref{def:holder-descent}, the noise exponent \(\alpha\) of the corresponding stochastic gradient in Assumption~\(\ref{ass:noise-s}\) is also assumed to satisfy,
\[
\alpha \;\ge\; 1+s,
\]
\end{assumption}

\begin{assumption}[Unbiased Gradient Oracle]
\label{ass:unbiased}
At each iterate $\vx \in \mathbb{R}^d$, the stochastic gradient oracle corresponding to an at least once differentiable function $f$ returns a random vector $\vg(\vx)$ satisfying,
\[
\mathbb{E}[\vg(\vx)\mid \vx] = \nabla f(\vx).
\]
Equivalently, if the stochastic gradient noise is defined by
$\vxi(\vx) := \vg(\vx)-\nabla f(\vx)$, then
$\mathbb{E}[\vxi(x)\mid \vx] = 0.$
In particular when $\vx_1,\dots, \vx_t$ are t consecutive iterates of the stochastic optimization algorithm then $\vxi(\vx_t)$ is assumed to be independent of the filtration generated by the previous iterates.
\end{assumption}

\medskip
\begin{assumption}[Objective Function Bounded Below]
\label{ass:lower}
The objective function $f:\mathbb{R}^d \to \mathbb{R}$ is bounded below; that is, there exists a constant $f^\star > -\infty$
such that
\[
f(\vx) \geq f^\star,
\qquad \forall \vx \in \mathbb{R}^d.
\]
This assumption prevents the objective from decreasing indefinitely and guarantees that the descent inequalities obtained throughout the analysis remain meaningful.
\end{assumption}




\section{Convergence of Standard SGD under s-Hölder and Heavy-Tailed Noise}
\label{chap:sgd}
In this section we establish convergence of standard SGD under $(L,s)$-H\"{o}lder
smoothness and heavy-tailed noise. 
\begin{theorem}[Convergence of SGD for $\alpha \ge 1+s$]\label{thm:SGD-proof}
Under Assumptions \ref{ass:noise-s}-\ref{ass:lower}, if we run standard SGD with the update rule $\vx_{k+1} = \vx_k - \eta \vg_k$ for $T$ iterations using the constant step size $\eta = \mathcal{O}(T^{-\frac{1}{1+s}})$, the expected squared gradient norm converges at the rate:
\begin{equation*}
\frac{1}{T} \sum_{k=1}^T \mathbb{E}[\|\nabla f(\vx_k)\|^2] = \mathcal{O}\left( T^{-\frac{s}{1+s}} \right)
\end{equation*}
\end{theorem}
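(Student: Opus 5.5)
We will follow the classical one-step descent argument of \citet{ghadimi2013}, with the quadratic smoothness term replaced by the H\"older term of Definition~\ref{def:holder-descent}. Apply the H\"older descent inequality with $\vy=\vx_{k+1}=\vx_k-\eta\vg_k$ and $\vg_k=\nabla f(\vx_k)+\vxi_k$:
\[
f(\vx_{k+1})\le f(\vx_k)-\eta\langle\nabla f(\vx_k),\nabla f(\vx_k)+\vxi_k\rangle+\frac{L\eta^{1+s}}{1+s}\|\nabla f(\vx_k)+\vxi_k\|^{1+s}.
\]
Taking conditional expectation given $\vx_k$, the cross term vanishes by Assumption~\ref{ass:unbiased}, which yields $-\eta\|\nabla f(\vx_k)\|^2$ in the linear part.

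The key step is bounding $\E\|\nabla f(\vx_k)+\vxi_k\|^{1+s}$ without any variance. We will use the elementary inequality $\|a+b\|^{p}\le 2^{p-1}(\|a\|^p+\|b\|^p)$ with $p=1+s\in(1,2]$. The noise contribution is then handled by Jensen (Lyapunov) together with Assumption~\ref{ass:exponent-s}: since $1+s\le\alpha$, we have $\E\|\vxi_k\|^{1+s}\le(\E\|\vxi_k\|^\alpha)^{(1+s)/\alpha}\le\sigma^{1+s}$. This is exactly where the condition $\alpha\ge1+s$ enters.

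The main obstacle is the gradient contribution $\|\nabla f(\vx_k)\|^{1+s}$, which must be absorbed into $-\eta\|\nabla f(\vx_k)\|^2$ even though the two powers differ when $s<1$. We will use Young's inequality with exponents $\frac{2}{1+s}$ and $\frac{2}{1-s}$:
\[
c\,\eta^{1+s}\|\nabla f\|^{1+s}\le \frac{\eta}{2}\|\nabla f\|^2 + C_s\,\eta^{\frac{2s}{1-s}}\cdot\eta^{\frac{1+s}{1-s}\cdot\,0}\ldots
\]
More cleanly, we write $\eta^{1+s}\|\nabla f\|^{1+s}=\eta\cdot(\eta^{s}\|\nabla f\|^{1+s})$ and apply Young to $\eta^{s}\|\nabla f\|^{1+s}\le \frac{1}{2c}\|\nabla f\|^2+C\eta^{2s/(1-s)}$. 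The remainder is $O(\eta^{1+2s/(1-s)})=O(\eta^{(1+s)/(1-s)})$, which is of higher order than $\eta^{1+s}$ for small $\eta$ and hence harmless. For $s=1$ we will instead simply require $\eta$ small enough, namely $2L\eta\le 1/2$. In either case we obtain
\[
\E f(\vx_{k+1})\le \E f(\vx_k)-\tfrac{\eta}{2}\E\|\nabla f(\vx_k)\|^2+C_1\eta^{1+s}\sigma^{1+s}+C_2\eta^{\frac{1+s}{1-s}}.
\]

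Telescoping over $k=1,\dots,T$ and using Assumption~\ref{ass:lower} then gives
\[
\frac1T\sum_k\E\|\nabla f(\vx_k)\|^2\le \frac{2(f(\vx_1)-f^\star)}{\eta T}+2C_1\sigma^{1+s}\eta^{s}+2C_2\eta^{\frac{2s}{1-s}}.
\]
Balancing the first two terms, $1/(\eta T)=\eta^s$, gives $\eta\asymp T^{-1/(1+s)}$. Both terms are then $O(T^{-s/(1+s)})$, and the last term is even smaller because $\frac{2s}{1-s}\ge s$. This proves the claimed rate.
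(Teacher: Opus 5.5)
Your proposal is correct and follows the paper's proof essentially step for step: the H\"older descent inequality plus unbiasedness, the $2^{s}$ splitting with Lyapunov's inequality (which is where $\alpha\ge 1+s$ enters), Young's inequality with exponents $\tfrac{2}{1+s},\tfrac{2}{1-s}$ leaving an $\eta^{\frac{1+s}{1-s}}$ remainder, telescoping, and $\eta\asymp T^{-1/(1+s)}$, with $s=1$ handled by a small-step condition as in Remark~\ref{rem:sgd-s1}. The only blemish is the garbled first Young display (the stray $\eta^{\frac{1+s}{1-s}\cdot 0}\ldots$ factor), which you should delete in favour of the clean version that follows; your $s=1$ condition $2L\eta\le\tfrac12$ is stricter than the needed $L\eta\le\tfrac12$ but still suffices.
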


The proof of the above theorem is given in Appendix \ref{subsec:sgdproof1}.

\begin{remark}\label{rem:sgd-s1}
The proof of Theorem~\ref{thm:SGD-proof} in an intermediate step applies Young's inequality with conjugate exponent
$q=\frac{2}{1-s}$, which is finite only for $s\in(0,1)$ and hence the endpoint $s=1$ (the Lipschitz-smooth
case) is handled directly. When $s=1$, the fractional term in the descent inequality equals
$\frac{L2^s\eta^{1+s}}{1+s}\|\nabla f(\vx_k)\|^{1+s}=L\eta^2\|\nabla f(\vx_k)\|^2$, which can be absorbed
into the descent term $-\eta\|\nabla f(\vx_k)\|^2$ whenever $\eta\le\frac{1}{2L}$ (so that
$L\eta^2\le\frac{\eta}{2}$), giving
\begin{equation*}
\mathbb{E}[f(\vx_{k+1})\mid\vx_k]\le f(\vx_k)-\frac{\eta}{2}\|\nabla f(\vx_k)\|^2+L\sigma^2\eta^2 .
\end{equation*}
Telescoping exactly as in the proof and choosing $\eta=\mathcal{O}(T^{-1/2})$ (which satisfies
$\eta\le\frac{1}{2L}$ for all large $T$) yields
$\frac{1}{T}\sum_{k=1}^T\mathbb{E}[\|\nabla f(\vx_k)\|^2]=\mathcal{O}(T^{-1/2})$, the classical
nonconvex SGD rate, in agreement with $\mathcal{O}(T^{-\frac{s}{1+s}})$ evaluated at $s=1$.
\end{remark}


\section{Convergence of \texorpdfstring{$\delta$}{delta}-GClip under s-Hölder and Heavy-Tailed Noise}
\label{chap:dgclip}
\begin{theorem}[Non-convex convergence of $\delta$-GClip under $(L,s)$-H\"{o}lder 
smoothness]
\label{thm:delta-gclip-s-3}
 
Suppose $f : \mathbb{R}^d \to \mathbb{R}$ is bounded below by $f_\star$ and has 
$(L,s)$-H\"{o}lder continuous gradients for some $s \in (0,1]$. Let the stochastic 
gradients satisfy Assumptions~\ref{ass:noise-s}, ~\ref{ass:exponent-s} and ~\ref{ass:unbiased} with 
heavy-tail index $\alpha \in (1,2]$ and noise scale parameter $\sigma$. Let 
$\{\mathbf{x}_k\}_{k=0}^{T-1}$ be the iterates generated by the $\delta$-GClip 
algorithm with parameter $\delta \in (0, 1]$. If the algorithm is run for $T$ 
iterations with the constant clipping threshold $\gamma$ and learning rate $\eta$ specified in Remark~\ref{rem:theorem-constants}, then for every $T \ge 1$ the iterates satisfy:
\begin{equation*}\label{eq:main-rate}
\frac{1}{T}\sum_{k=1}^{T} \mathbb{E}\!\left[ \min\!\left\{ \|\nabla f(\mathbf{x}_k)
\|^2,\; \gamma\|\nabla f(\mathbf{x}_k)\| \right\} \right] = \mathcal{O}\!\left( 
T^{-\frac{2s(\alpha-1)}{(1+s)(2\alpha-1)}} \right).
\end{equation*}
\end{theorem}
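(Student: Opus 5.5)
The plan is a one-step descent inequality for $\delta$-GClip built from the Hölder upper bound of Definition~\ref{def:holder-descent}, followed by telescoping and tuning of $\gamma$ and $\eta$. Write $\vg_k=\nabla f(\vx_k)+\vxi_k$, $h_k:=\min\{1,\max\{\delta,\gamma/\|\vg_k\|\}\}\in[\delta,1]$ and $\hat{\vg}_k=h_k\vg_k$. The descent lemma gives
\[
f(\vx_{k+1})\le f(\vx_k)-\eta\langle\nabla f(\vx_k),\hat{\vg}_k\rangle+\frac{L\eta^{1+s}}{1+s}\|\hat{\vg}_k\|^{1+s}.
\]
I would split the inner product as $\langle\nabla f,\E\hat{\vg}_k\rangle=\langle\nabla f,\E\hat{\vg}_k-\nabla f\rangle+\|\nabla f\|^2$. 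The reason for writing it this way is that the term $\E\hat{\vg}_k-\nabla f$ is exactly the clipping bias.

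\textbf{Step 1 (second-order term).} Since $\|\hat{\vg}_k\|\le\|\vg_k\|$, we get $\E\|\hat{\vg}_k\|^{1+s}\le 2^{s}(\|\nabla f(\vx_k)\|^{1+s}+\E\|\vxi_k\|^{1+s})$. By Jensen and Assumption~\ref{ass:exponent-s}, $\E\|\vxi_k\|^{1+s}\le\sigma^{1+s}$; this is precisely where $\alpha\ge1+s$ is used. The $\|\nabla f\|^{1+s}$ piece is then absorbed into the descent term after the case split below.

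\textbf{Step 2 (case split on $\|\nabla f(\vx_k)\|$).} I will follow the Zhang et al.\ (\cite{suvrit1912}) style.

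\emph{Case $\|\nabla f(\vx_k)\|\le\gamma/2$.} Bias arises only on the event $\{\|\vg_k\|>\gamma\}$, which forces $\|\vxi_k\|>\gamma/2$. On that event $|1-h_k|\le1$ and $\|\vg_k\|\le 2\|\vxi_k\|$, so
\[
\|\E\hat{\vg}_k-\nabla f\|\le\E\big[\|\vg_k\|\mathbf 1\{\|\vxi_k\|>\gamma/2\}\big]\lesssim\sigma^\alpha\gamma^{1-\alpha}.
\]
This bound follows from Markov's inequality applied to the $\alpha$-th moment. Young's inequality then gives the bias contribution $\le\tfrac14\|\nabla f\|^2+O(\sigma^{2\alpha}\gamma^{2-2\alpha})$.

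\emph{Case $\|\nabla f(\vx_k)\|>\gamma/2$.} I aim to show $\langle\nabla f,\E\hat{\vg}_k\rangle\gtrsim\gamma\|\nabla f\|$ minus a noise remainder of order $\sigma^\alpha\gamma^{1-\alpha}\cdot\gamma$. On the event $\|\vxi_k\|\le\|\nabla f\|/2$, the vector $\hat{\vg}_k$ is a positive multiple of a vector at angle less than $30^\circ$ from $\nabla f$, with magnitude at least $\min\{\gamma,\|\vg_k\|\}\ge\gamma/4$ because $h_k\ge\gamma/\|\vg_k\|$ or $h_k=1$. The complementary event has probability $\le(2\sigma/\|\nabla f\|)^\alpha$, and its contribution is controlled since $h_k\|\vg_k\|\le\max\{\gamma,\delta\|\vg_k\|\}$. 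The $\delta$-floor is the only difference from plain GClip, and I expect it to contribute $\delta\,\E\|\vxi_k\|\le\delta\sigma$ times $\|\nabla f\|$. \emph{This case is the main obstacle}, since the $\delta$-term is not bounded by $\gamma$; I would handle it by noting that $\delta\|\vg_k\|$ exceeds $\gamma$ only when $\|\vg_k\|\ge\gamma/\delta$, and pair it with the second-order term. Here the factor $\|\hat{\vg}_k\|^{1+s}\le\|\vg_k\|^{1+s}$ still has a finite expectation, again by $\alpha\ge1+s$.

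\textbf{Step 3 (combine).} Both cases together give
\[
\E f(\vx_{k+1})\le\E f(\vx_k)-c\,\eta\,\E\min\{\|\nabla f\|^2,\gamma\|\nabla f\|\}+C\big(\eta\,\sigma^{2\alpha}\gamma^{2-2\alpha}+L\eta^{1+s}(\gamma^{1+s}+\sigma^{1+s})\big).
\]
Telescoping and using Assumption~\ref{ass:lower}, the average stationarity measure is bounded by
\[
\frac{f(\vx_1)-f_\star}{\eta T}+\gamma^{2-2\alpha}+\eta^{s}\gamma^{1+s}.
\]
With $\eta\asymp T^{-1/(1+s)}$ the first term is $T^{-s/(1+s)}$. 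Balancing $\gamma^{-2(\alpha-1)}$ against $T^{-s/(1+s)}\gamma^{1+s}$ and the first term leads to choosing $\gamma\asymp T^{s/((1+s)(2\alpha-1))}$, up to the exponent bookkeeping. This yields the rate $T^{-2s(\alpha-1)/((1+s)(2\alpha-1))}$; I would verify the exact exponent bookkeeping and record the constants in Remark~\ref{rem:theorem-constants}.
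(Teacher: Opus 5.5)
Your small-gradient case and your linear lower bound on $\langle\nabla f,\E\hat{\vg}_k\rangle$ for $\|\nabla f(\vx_k)\|>\gamma/2$ are sound, and your Step~1 moment bound is valid. But the step you flag as the main obstacle is a genuine gap, and ``pairing it with the second-order term'' does not close it.

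\textbf{The large-gradient penalty is not absorbed.} In the large-gradient regime, Step~1 leaves the penalty $\frac{2^sL\eta^{1+s}}{1+s}\|\nabla f(\vx_k)\|^{1+s}$. The only descent you establish there is $\gtrsim\eta\gamma\|\nabla f(\vx_k)\|$. Their ratio is of order $L\eta^s\|\nabla f(\vx_k)\|^s/\gamma$, which is unbounded in $\|\nabla f(\vx_k)\|$. So no constant step size lets the linear term absorb the penalty, and the Step~3 inequality (whose remainder contains no $\|\nabla f\|^{1+s}$) does not follow.

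This is not slack in your bound. Once $\|\nabla f(\vx_k)\|\gg\gamma/\delta$, with high probability $\hat{\vg}_k=\delta\vg_k$, so $\E\|\hat{\vg}_k\|^{1+s}$ genuinely grows like $\delta^{1+s}\|\nabla f(\vx_k)\|^{1+s}$. Finiteness of $\E\|\vg_k\|^{1+s}$ is beside the point.

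The missing idea is a \emph{quadratic} descent signal in the extreme regime, together with an explicit step-size restriction. The paper splits the large-gradient case at $2\gamma/\delta$:
\begin{itemize}
\item For $\theta\gamma<\|\nabla f\|\le2\gamma/\delta$, it uses the linear bound with $\|\nabla f\|^{1+s}\le(2\gamma/\delta)^s\|\nabla f\|$. This is absorbed when $\eta^s\lesssim\delta^s\gamma^{1-s}/(LK_v)$.
\item For $\|\nabla f\|>2\gamma/\delta$, it proves $\langle\nabla f,\E\hat{\vg}\rangle\ge\frac14\delta\|\nabla f\|^2$. It does so by transferring the noise out of region $C$ via $\E\vxi=0$ and using $\|\vxi\|>\|\nabla f\|/2$ on $A\cup B$. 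The penalty is then absorbed when $\eta^s\lesssim\delta(2\gamma/\delta)^{1-s}/(LK_v)$, and the bound is converted back to $\gamma\|\nabla f\|$ via $\delta\|\nabla f\|>2\gamma$.
\end{itemize}
Your own small-noise argument can be repaired the same way. On $\{\|\vxi_k\|\le\|\nabla f\|/2\}$ you also have $\|\hat{\vg}_k\|\ge\delta\|\vg_k\|\ge\frac{\delta}{2}\|\nabla f\|$, hence $\langle\nabla f,\hat{\vg}_k\rangle\ge\frac{\delta}{6}\|\nabla f\|^2$ there. The complementary event contributes only lower-order errors $O(\gamma\sigma^\alpha\|\nabla f\|^{1-\alpha}+\delta\sigma^\alpha\|\nabla f\|^{2-\alpha})$. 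But this quadratic bound, and the accompanying step-size condition, must actually be derived and imposed; your proposal does neither.

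\textbf{The final tuning gives the wrong exponent.} Take $\eta\asymp T^{-1/(1+s)}$ and $\gamma\asymp T^{s/((1+s)(2\alpha-1))}$, as you propose. Then the term $\eta^s\gamma^{1+s}$ is of order $T^{-s(2\alpha-2-s)/((1+s)(2\alpha-1))}$, which is strictly slower than $T^{-2s(\alpha-1)/((1+s)(2\alpha-1))}$. At $s=1,\alpha=2$ this is $T^{-1/6}$ instead of $T^{-1/3}$.

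Moreover, balancing $\gamma^{-2(\alpha-1)}$ against $T^{-s/(1+s)}\gamma^{1+s}$, as you describe, actually yields $\gamma\asymp T^{s/((1+s)(2\alpha-1+s))}$. That gives only the weaker rate $T^{-2s(\alpha-1)/((1+s)(2\alpha-1+s))}$.

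The fix is to let the step size scale with the threshold, $\eta\asymp\gamma^{-1}T^{-1/(1+s)}$, as in Remark~\ref{rem:theorem-constants}. This equalizes $\frac{1}{\eta T}$ and $\eta^s\gamma^{1+s}$ at $\gamma T^{-s/(1+s)}$. Only then does $\gamma\asymp T^{s/((1+s)(2\alpha-1))}$ produce the stated rate. You then also need to check that this $\eta$ satisfies the absorption conditions $\eta^s\lesssim\delta^s\gamma^{1-s}$. They reduce to $T^{-s/(1+s)}\lesssim\delta^s\gamma$, which holds for large $T$; the paper enforces them at every $T$ by taking a minimum in $\eta$.
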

 
\begin{remark}[Explicit Constants]
\label{rem:theorem-constants}
The explicit forms of the constants used in the theorem cab be made explicit as follows. Fix any 
``gradient threshold parameter'' $\theta \in (0,1)$. Correspondingly the auxiliary algorithmic constants 
are $M = \min\!\left\{\tfrac{15\theta}{16},\, \tfrac{1}{3}\right\}$ and 
$c = \min\!\left\{\tfrac{1}{2},\, \tfrac{M}{16},\, \tfrac{1}{4}\right\}$. The variance 
bounding constant $K$, which is independent of $\gamma$ and $T$, is defined such that $2^s\theta^{1+s} + 2^s\sigma^{1+s}\gamma^{-(1+s)} + C_{\mathrm{tail}}(\theta, 
\delta)\sigma^\alpha\gamma^{-\alpha} + 2^s\delta^{1+s}\sigma^{1+s}\gamma^{-(1+s)} 
\leq K$.
The macro-constants for the optimal schedules are then defined as 
$C_1 = \frac{F_0}{c}$, $C_2 = \frac{C_{\mathrm{bias}}(\theta, \delta)\,\sigma^{2\alpha}}
{2\,c}$, $C_3 = \frac{L K}{c(1+s)}$, and $C_4 = 2\!\left(C_1^s C_3\right)^{\!
\frac{1}{1+s}}$, where $F_0 = f(\mathbf{x}_0) - f_\star$. 
 
\medskip
With $K_v:=\theta^{-(1+s)}+2^s\delta^{1+s}$, the clipping threshold and learning rate are set, for each
$T\ge1$, as
\[
\gamma=\max\Big\{\,1,\ \tfrac{16\cdot2^{1/\alpha}}{\theta}\,\sigma,\ \big(\tfrac{4\cdot16^{\alpha-1}}{M\theta^{\alpha-1}}\big)^{1/\alpha}\sigma,\ \tfrac{64}{3M\theta}\,\sigma,\ (C_2/C_4)^{\frac{1}{2\alpha-1}}T^{\frac{s}{(1+s)(2\alpha-1)}}\Big\},
\]
\[
\eta=\min\Big\{\,\tfrac{\delta}{2}\big(\tfrac{M(1+s)}{16LK_v}\big)^{1/s}\gamma^{\frac{1-s}{s}},\ \ \delta\big(\tfrac{(1+s)2^{1-s}}{8LK_v}\big)^{1/s}\gamma^{\frac{1-s}{s}},\ \ (C_1/C_3)^{\frac{1}{1+s}}\gamma^{-1}T^{-\frac{1}{1+s}}\Big\}.
\]
Taking $\gamma$ as a maximum and $\eta$ as a minimum makes both hold at every $T\ge1$; the final entry
of each dominates for large $T$, where $(\eta,\gamma)$ reduce to the schedule that gives the displayed
bound.

\end{remark}

\subsection{Auxiliary Lemmas}
\begin{lemma}[Conditional Probability Bounds for Tail Regions]
\label{lem:prob_bounds-3}
Let Assumption~\ref{ass:noise-s} and ~\ref{ass:exponent-s} hold, meaning the noise satisfies the bounded moment condition $\E[\|\vxi\|^\alpha] \le \sigma^\alpha$ for $\alpha \in (1, 2]$. 
Suppose the true gradient satisfies the condition $\|\nabla f(\vx)\| \le \theta\gamma$ for a fixed constant $\theta \in (0, 1)$. 
Define the stochastic gradient $\vg = \nabla f + \vxi$ and the tail regions $B=\{\gamma<\|\vg\|\le \gamma/\delta\}$ and $C=\{\|\vg\|>\gamma/\delta\}$ for $\gamma > 0$ and $\delta \in (0, 1]$. 
Then the conditional probabilities of the stochastic gradient falling into these regions, given $\nabla f(\vx)$, satisfy
\begin{align}
\Pr\!\left(B \mid \|\nabla f(\vx)\| \le \theta\gamma \right)
&\le
\frac{\sigma^\alpha}{(1-\theta)^\alpha \gamma^\alpha},
\label{eq:prob_B_lem}
\\
\Pr\!\left(C \mid \|\nabla f(\vx)\| \le \theta\gamma\right)
&\le
\left(\frac{\delta}{1-\theta\delta}\right)^\alpha
\frac{\sigma^\alpha}{\gamma^\alpha}.
\label{eq:prob_C_lem}
\end{align}
\end{lemma}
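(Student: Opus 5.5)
The plan is to reduce both bounds to a Markov inequality on the noise norm $\|\vxi\|$, using the triangle inequality and the assumed bound $\|\nabla f(\vx)\|\le\theta\gamma$. Throughout we condition on $\vx$, or equivalently on $\nabla f(\vx)$, with $\|\nabla f(\vx)\|\le\theta\gamma$. By Assumption~\ref{ass:unbiased}, the noise $\vxi$ at the current iterate is independent of the past, so the moment bound $\E[\|\vxi\|^\alpha\mid\vx]\le\sigma^\alpha$ of Assumption~\ref{ass:noise-s} remains valid under this conditioning. Assumption~\ref{ass:exponent-s} plays no role here; only the $\alpha$-moment is used.

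\textbf{Region $B$.} On $B$ we have $\|\vg\|>\gamma$. By the triangle inequality,
\[
\|\vxi\|\ \ge\ \|\vg\|-\|\nabla f(\vx)\|\ >\ \gamma-\theta\gamma=(1-\theta)\gamma>0 .
\]
Hence $B\subseteq\{\|\vxi\|>(1-\theta)\gamma\}$, where we also drop the upper constraint $\|\vg\|\le\gamma/\delta$. Markov's inequality applied to $\|\vxi\|^\alpha$ then gives
\[
\Pr(B\mid\cdot)\ \le\ \frac{\sigma^\alpha}{(1-\theta)^\alpha\gamma^\alpha},
\]
which is \eqref{eq:prob_B_lem}.

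\textbf{Region $C$.} On $C$ we have $\|\vg\|>\gamma/\delta$. The same triangle-inequality step gives
\[
\|\vxi\|\ >\ \frac{\gamma}{\delta}-\theta\gamma=\frac{\gamma(1-\theta\delta)}{\delta},
\]
and this threshold is positive because $\theta\delta<1$. Markov's inequality then yields
\[
\Pr(C\mid\cdot)\ \le\ \Big(\frac{\delta}{1-\theta\delta}\Big)^{\alpha}\frac{\sigma^\alpha}{\gamma^\alpha},
\]
which is \eqref{eq:prob_C_lem}.

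The argument is elementary, and the only point needing care is the conditioning. We must make sure that conditioning on the event $\{\|\nabla f(\vx)\|\le\theta\gamma\}$, which is $\vx$-measurable, does not spoil the moment bound. We handle this by first bounding $\Pr(\cdot\mid\vx)$ pointwise for every $\vx$ in this event, using the conditional moment bound. We then average over $\vx$ restricted to the event, which preserves the bound because the right-hand side does not depend on $\vx$.
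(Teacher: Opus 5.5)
Your proposal is correct and matches the paper's proof essentially verbatim: the reverse triangle inequality gives $\|\vxi\|>(1-\theta)\gamma$ on $B$ and $\|\vxi\|>\gamma(1-\theta\delta)/\delta$ on $C$, and Markov's inequality on $\|\vxi\|^\alpha$ then yields both bounds. Your remark on the conditioning (bound pointwise in $\vx$, then average over the event) is a careful addition that the paper leaves implicit. Appealing to Assumption~\ref{ass:unbiased} for it is harmless but unnecessary, since Assumption~\ref{ass:noise-s} already bounds the noise moment at every fixed $\vx$.
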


The proof is provided in \autoref{subsec:probability-bounds-3}.

\begin{lemma}[Conditional Bias and Variance Bounds for $\|\nabla f(\vx)\| \le \theta\gamma$]
\label{lem:clipped-bias-delta-s-3}
Let Assumption~\ref{ass:noise-s} and ~\ref{ass:exponent-s} hold, 
Consider the $\delta$-GClip estimator $\hat{\vg}_\delta(\vx)$ with clipping threshold $\gamma > 0$ and parameter $\delta \in [0,1]$, and suppose the true gradient satisfies $\|\nabla f(\vx)\| \le \theta \gamma$ for some fixed $\theta \in (0,1)$ (Case 1). Define the conditional bias $B_\delta(\vx) := \E[\hat{\vg}_\delta(\vx) \mid \nabla f(\vx)] - \nabla f(\vx)$. Then, for any $s \in (0,\alpha-1]$, the following conditional bounds hold:
\begin{enumerate}
    \item \textbf{Squared Bias Bound:} The squared norm of the estimator bias, $B_\delta = \mathbb{E}[\hat{\vg}_\delta \mid \vx] - \nabla f(\vx)$, is strictly bounded by:
    \begin{equation}\label{eq:lemma1_bias}
        \|B_\delta(\vx)\|^2 \le \underbrace{\left( \frac{2}{(1-\theta)^{2\alpha}} + 2 (1-\delta)^2 \left[ \theta\left(\frac{\delta}{1-\theta\delta}\right)^{\!\alpha} + \left(\frac{\delta}{1-\theta\delta}\right)^{\!\alpha-1} \right]^2 \right)}_{C_{bias}(\theta, \delta)} \sigma^{2\alpha} \gamma^{2(1-\alpha)}.
    \end{equation}
    
    \item \textbf{Moment Bound:} The $(1+s)$-th moment of the estimator is bounded by:
    \begin{multline}\label{eq:lemma1_variance}
      \mathbb{E}\!\left[\|\hat{\mathbf{g}}_\delta(\mathbf{x})\|^{1+s}
        \mid \|\nabla f(\mathbf{x})\| \le \theta\gamma\right]
      \le 2^s\|\nabla f(\mathbf{x})\|^{1+s} + 2^s\sigma^{1+s} \\
      + \underbrace{\left(
          \frac{1}{(1-\theta)^\alpha}
          + 2^s\delta^{1+s}\!\left[
              \theta^{1+s}\!\left(\frac{\delta}{1-\theta\delta}\right)^{\!\alpha}
              +\left(\frac{\delta}{1-\theta\delta}\right)^{\!\alpha-(1+s)}
            \right]
        \right)}_{C_{\mathrm{tail}}(\theta,\delta)}
      \sigma^\alpha\gamma^{1+s-\alpha}.
    \end{multline}
\end{enumerate}
\end{lemma}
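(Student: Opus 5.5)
We work throughout conditionally on $\vx$ with $\|\nabla f(\vx)\|\le\theta\gamma$. We partition the sample space into the three regions
\[
A=\{\|\vg\|\le\gamma\},\qquad B=\{\gamma<\|\vg\|\le\gamma/\delta\},\qquad C=\{\|\vg\|>\gamma/\delta\}.
\]
On these regions $\hat{\vg}_\delta$ equals $\vg$, $\gamma\vg/\|\vg\|$ and $\delta\vg$ respectively. The tail probabilities of $B$ and $C$ are controlled by Lemma~\ref{lem:prob_bounds-3}. The basic deterministic fact we use is that on $B\cup C$ we have $\|\vg\|>\gamma$, so $\|\vxi\|\ge\|\vg\|-\|\nabla f\|>(1-\theta)\gamma$. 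Similarly, on $C$ we have $\|\vxi\|>\gamma/\delta-\theta\gamma=\gamma(1-\theta\delta)/\delta$. These lower bounds let us trade any power of $\gamma$ for moments of $\vxi$ by Markov-type arguments: for example, $\E[\|\vxi\|\mathbbm 1_C]\le\E\|\vxi\|^\alpha\,\big(\tfrac{\delta}{(1-\theta\delta)\gamma}\big)^{\alpha-1}$.

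\textbf{Bias bound.} Since $\E\vxi=0$, we can write $B_\delta=\E[\hat{\vg}_\delta-\vg]$. The integrand vanishes on $A$, so
\[
B_\delta=\E[(\gamma/\|\vg\|-1)\vg\,\mathbbm 1_B]+\E[(\delta-1)\vg\,\mathbbm 1_C].
\]
We then use $\|a+b\|^2\le2\|a\|^2+2\|b\|^2$.

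For the $B$-term, note $\|(\gamma/\|\vg\|-1)\vg\|=\|\vg\|-\gamma\le\|\vxi\|+\theta\gamma-\gamma\le\|\vxi\|$. Hence this term is at most $\E[\|\vxi\|\mathbbm 1_{B}]\le\sigma^\alpha((1-\theta)\gamma)^{1-\alpha}$. Squaring gives a bound of $\sigma^{2\alpha}\gamma^{2(1-\alpha)}(1-\theta)^{-2(\alpha-1)}$, which is dominated by the $2/(1-\theta)^{2\alpha}$ constant.

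For the $C$-term, we bound $(1-\delta)\E[(\|\nabla f\|+\|\vxi\|)\mathbbm 1_C]$. The gradient part contributes $\theta\gamma\Pr(C)$, and Lemma~\ref{lem:prob_bounds-3} turns this into $\theta(\delta/(1-\theta\delta))^\alpha\sigma^\alpha\gamma^{1-\alpha}$. The noise part is handled by the Markov-type trick above, which gives $(\delta/(1-\theta\delta))^{\alpha-1}\sigma^\alpha\gamma^{1-\alpha}$. Squaring reproduces the bracketed expression in $C_{bias}$ exactly.

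\textbf{Moment bound.} We split $\E\|\hat{\vg}_\delta\|^{1+s}$ over the three regions.

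On $A$, we use $\|\vg\|^{1+s}\le2^s(\|\nabla f\|^{1+s}+\|\vxi\|^{1+s})$. Since $1+s\le\alpha$ (Assumption~\ref{ass:exponent-s}), Jensen's inequality gives $\E\|\vxi\|^{1+s}\le\sigma^{1+s}$. This yields the first two terms.

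On $B$, $\|\hat{\vg}_\delta\|^{1+s}=\gamma^{1+s}$, and Lemma~\ref{lem:prob_bounds-3} gives $\gamma^{1+s}\Pr(B)\le(1-\theta)^{-\alpha}\sigma^\alpha\gamma^{1+s-\alpha}$.

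On $C$, $\|\hat{\vg}_\delta\|^{1+s}=\delta^{1+s}\|\vg\|^{1+s}\le2^s\delta^{1+s}(\|\nabla f\|^{1+s}+\|\vxi\|^{1+s})\mathbbm 1_C$. The gradient part is at most $(\theta\gamma)^{1+s}\Pr(C)$, which gives the $\theta^{1+s}(\delta/(1-\theta\delta))^\alpha$ term. For the noise part we use $\|\vxi\|^{1+s}=\|\vxi\|^\alpha\|\vxi\|^{1+s-\alpha}$. Because $\alpha\ge1+s$, the exponent $1+s-\alpha$ is nonpositive, so the lower bound on $\|\vxi\|$ over $C$ gives
\[
\|\vxi\|^{1+s-\alpha}\le\big(\gamma(1-\theta\delta)/\delta\big)^{1+s-\alpha}.
\]
This produces the $(\delta/(1-\theta\delta))^{\alpha-(1+s)}\sigma^\alpha\gamma^{1+s-\alpha}$ term.

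\textbf{Expected obstacle.} This last step is the only place where the sign of $\alpha-(1+s)$ matters, and it is exactly where Assumption~\ref{ass:exponent-s} enters. The same sign condition was needed for the Jensen step on $A$. The remaining work is careful bookkeeping of the constants. We expect it to be mildly delicate in two places: handling the conditioning event consistently, and ensuring the $B$-region bias constant fits under $2/(1-\theta)^{2\alpha}$. The latter holds since $(1-\theta)^{-2(\alpha-1)}\le(1-\theta)^{-2\alpha}$.
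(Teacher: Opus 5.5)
Your proposal is correct and follows the paper's proof essentially step for step: the same $A/B/C$ partition and piecewise form of $\hat{\vg}_\delta$, the probability bounds of Lemma~\ref{lem:prob_bounds-3}, Lyapunov's inequality on $A$ (where $1+s\le\alpha$ enters), the convexity bound $(a+b)^{1+s}\le 2^s(a^{1+s}+b^{1+s})$ on $C$, and a final $(a+b)^2\le 2a^2+2b^2$ for the squared bias. The only variation is in the truncated noise moments: you bound them by writing $\|\vxi\|^{p}=\|\vxi\|^{\alpha}\|\vxi\|^{p-\alpha}$ and using the deterministic lower bound on $\|\vxi\|$ in the tail region, whereas the paper uses H\"older's inequality $\E[\|\vxi\|^p\mathbf{1}_S]\le\sigma^p\Pr(S)^{1-p/\alpha}$ together with the Markov bound on $\Pr(S)$. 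The two give identical constants on $C$, and on $B$ your bound $\|\vg\|-\gamma\le\|\vxi\|$ yields the slightly sharper factor $(1-\theta)^{1-\alpha}$, which still fits under $C_{bias}(\theta,\delta)$.
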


The proof is provided in \autoref{subsec:case1-bounds-3}.


\begin{lemma}[Inner-product lower bound for $\delta$-GClip for $\|\nabla f(\vx)\| > \theta\gamma$]
\label{lem:inner-gclip-case2-3}
Let Assumption~\ref{ass:noise-s} and ~\ref{ass:exponent-s} hold, 
Suppose the true gradient satisfies the (Case 2) condition $\|\nabla f(\vx)\| > \theta\gamma$ for a fixed constant $\theta \in (0, 1)$. 
Define the constant
$M = \min\big\{\frac{15}{16}\theta, \frac{1}{3}\big\}$. If the clipping threshold $\gamma$ satisfies the following conditions:
\[
\gamma \;\ge\; \max\!\left\{ 
    \frac{16 \cdot 2^{1/\alpha}}{\theta},\; 
    \left(\frac{4 \cdot 16^{\alpha-1}}{M \theta^{\alpha-1}}\right)^{\!1/\alpha},\;
    \frac{64}{3 M \theta}
\right\} \sigma,
\]
then the expected inner product guarantees a strictly positive linear descent direction:
\[
\E\!\big[\langle \nabla f(\vx), \hat{\vg}_{\delta}(\vx)\rangle \mid \vx\big] \;\ge\; \frac{1}{8} M \,\gamma\,\|\nabla f(\vx)\|.
\]
The proof is provided in \autoref{subsec:ip-bound-case2}.
\end{lemma}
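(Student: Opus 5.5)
We have to prove Lemma~\ref{lem:inner-gclip-case2-3}: when $\|\nabla f(\vx)\|>\theta\gamma$, the expected inner product $\langle\nabla f(\vx),\hat{\vg}_\delta(\vx)\rangle$ is at least $\tfrac18 M\gamma\|\nabla f(\vx)\|$. Write $\vg=\nabla f(\vx)+\vxi$ and $h(\vg)=\min\{1,\max\{\delta,\gamma/\|\vg\|\}\}\in[\delta,1]$, so that $\hat{\vg}_\delta=h(\vg)\vg$. We will bound the expectation from below by splitting on a ``good'' event where the noise is small relative to the gradient, $E=\{\|\vxi\|\le \tfrac12\|\nabla f(\vx)\|\}$ (or with $\tfrac{\theta\gamma}{16}$-type thresholds, chosen to match the constants in the hypothesis on $\gamma$), and its complement $E^c$.

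\textbf{Good event.} On $E$ we have $\langle\nabla f,\vg\rangle\ge\|\nabla f\|^2-\|\nabla f\|\|\vxi\|\ge\tfrac12\|\nabla f\|^2>0$ and $\|\vg\|\le\tfrac32\|\nabla f\|$. Since $h(\vg)\ge\min\{1,\gamma/\|\vg\|\}$ (the $\delta$ floor only helps), we get
\[
h(\vg)\langle\nabla f,\vg\rangle\ \ge\ \min\Big\{\tfrac12\|\nabla f\|^2,\ \tfrac{\gamma}{3}\|\nabla f\|\Big\}\ \ge\ \min\{\tfrac{\theta}{2},\tfrac13\}\,\gamma\|\nabla f\|,
\]
using $\|\nabla f\|>\theta\gamma$. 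This is where $M=\min\{\tfrac{15}{16}\theta,\tfrac13\}$ comes from; with a tighter threshold such as $\|\vxi\|\le\|\nabla f\|/16$ one obtains exactly the factor $\tfrac{15}{16}\theta$. Markov's inequality with Assumption~\ref{ass:noise-s} gives $\Pr(E^c)\le 16^\alpha\sigma^\alpha/\|\nabla f\|^\alpha\le 16^\alpha\sigma^\alpha/(\theta\gamma)^\alpha$. The condition $\gamma\ge 16\cdot2^{1/\alpha}\sigma/\theta$ then makes $\Pr(E)\ge\tfrac12$, so the good event contributes at least $\tfrac12 M\gamma\|\nabla f\|$.

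\textbf{Bad event.} On $E^c$ the inner product may be negative, but the bound $\langle\nabla f,h\vg\rangle\ge\|\nabla f\|\,h\,\langle\hat{\nabla f},\vg\rangle$ together with $h\|\vg\|\le\max\{\gamma,\delta\|\vg\|\}$ gives
\[
\langle\nabla f,h\vg\rangle\ \ge\ -\|\nabla f\|\,\big(\gamma+\delta\|\vxi\|+\delta\|\nabla f\|\big)
\]
(indeed $h\|\vg\|\le\gamma$ unless $h=\delta$). Bounding more carefully, the contribution of $\|\nabla f\|$ itself is nonnegative: $h\langle\nabla f,\nabla f\rangle\ge0$. Hence
\[
h\langle\nabla f,\vg\rangle\ \ge\ -h\|\nabla f\|\|\vxi\|\ \ge\ -\|\nabla f\|\,\min\{\|\vxi\|,\ \gamma\|\vxi\|/\|\vg\|\}\quad\text{or}\quad-\delta\|\nabla f\|\|\vxi\|.
\]
On $E^c$ we have $\|\vxi\|\gtrsim\|\nabla f\|$, and since $h\le1$ it suffices to bound $\E[\|\vxi\|\mathbf 1_{E^c}]$. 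By H\"older's inequality and the Markov bound,
\[
\E[\|\vxi\|\mathbf 1_{E^c}]\ \le\ \sigma^\alpha\big(\|\nabla f\|/16\big)^{1-\alpha}\ \le\ 16^{\alpha-1}\sigma^\alpha(\theta\gamma)^{1-\alpha}.
\]
The conditions $\gamma^\alpha\ge\frac{4\cdot16^{\alpha-1}}{M\theta^{\alpha-1}}\sigma^\alpha$ and $\gamma\ge\frac{64\sigma}{3M\theta}$ are designed precisely so that this loss is at most $\tfrac14 M\gamma\|\nabla f\|$ (and similarly for a residual cross term of order $\sigma\|\nabla f\|$). Subtracting these losses from the good-event gain of $\tfrac12 M\gamma\|\nabla f\|$ leaves at least $\tfrac18 M\gamma\|\nabla f\|$.

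\textbf{Main obstacle.} The delicate step is the bad event: an unclipped or $\delta$-floored large $\vg$ can have a large negative projection. The resolution I would pursue is the decomposition $h\langle\nabla f,\vg\rangle=h\|\nabla f\|^2+h\langle\nabla f,\vxi\rangle\ge h\langle\nabla f,\vxi\rangle$, together with $h\le1$, which reduces everything to the truncated first moment of $\|\vxi\|$. This is controlled by the $\alpha$-th moment, since $\alpha>1$. The remaining work is matching the constants to the three stated lower bounds on $\gamma$.
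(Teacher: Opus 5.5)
Your argument is correct, and it takes a genuinely different, somewhat sharper route than the paper.

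\textbf{The paper's route.} The paper partitions by the clipping regions $A=\{\|\vg\|\le\gamma\}$, $B=\{\gamma<\|\vg\|\le\gamma/\delta\}$, $C=\{\|\vg\|>\gamma/\delta\}$. It intersects $C$ with the small-noise event $E=\{\|\vxi\|\le\|\nabla f\|/16\}$ and applies the Cutkosky--Mehta directional inequality on all of $B$ and on $C\cap E$. This produces a loss $\tfrac83\gamma\,\E[\|\vxi\|\mathbf{1}_{B\cup(C\cap E)}]\le\tfrac83\gamma\sigma$, which must be absorbed using the third condition $\gamma\ge\tfrac{64}{3M\theta}\sigma$. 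The bookkeeping then closes at $(\tfrac12-\tfrac14-\tfrac18)M=\tfrac M8$.

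\textbf{Your route.} You split only on $E$.
\begin{itemize}
\item On $E$, you combine the pointwise bound $h(\vg)\ge\min\{1,\gamma/\|\vg\|\}$ with $\langle\nabla f,\vg\rangle\ge\tfrac{15}{16}\|\nabla f\|^2$ and $\|\vg\|\le\tfrac{17}{16}\|\nabla f\|$. This gives $h\langle\nabla f,\vg\rangle\ge\min\{\tfrac{15}{16}\theta,\tfrac{15}{17}\}\gamma\|\nabla f\|\ge M\gamma\|\nabla f\|$ with no directional lemma. So the $\tfrac13$ in $M$ is only an artifact of Cutkosky--Mehta.
\item On $E^c$, you discard $h\|\nabla f\|^2\ge0$ and use $0\le h\le1$. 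The only loss is then $\|\nabla f\|\,\E[\|\vxi\|\mathbf{1}_{E^c}]\le 16^{\alpha-1}\sigma^\alpha(\theta\gamma)^{1-\alpha}\|\nabla f\|\le\tfrac M4\gamma\|\nabla f\|$, by the second condition.
\item The first condition gives $\Pr(E)\ge\tfrac12$, so the total is at least $(\tfrac M2-\tfrac M4)\gamma\|\nabla f\|=\tfrac M4\gamma\|\nabla f\|$.
\end{itemize}
Your route is therefore more elementary, needs only the first two conditions on $\gamma$, and gives twice the stated constant. The paper's region-by-region decomposition is more modular, but it pays for that with an extra hypothesis and a worse constant.

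\textbf{Points to fix in the write-up.}
\begin{itemize}
\item Commit to the threshold $\|\nabla f\|/16$ from the start. Your $\tfrac12$-threshold display yields $\min\{\theta/2,\tfrac13\}$, which is smaller than $M$ for $\theta<\tfrac23$, so on its own it does not deliver the factor $M$.
\item Spell out the clipped-side computation that gives $\tfrac{15}{17}\ge\tfrac13$.
\item Delete the exploratory inequalities in the bad-event paragraph.
\item Drop the mention of a ``residual cross term of order $\sigma\|\nabla f\|$'' and of the condition $\gamma\ge\tfrac{64}{3M\theta}\sigma$. Neither appears in your decomposition.
\end{itemize}
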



\begin{lemma}[Moment bound for $\delta$-GClip for $\|\nabla f(\vx)\| > \theta\gamma$]
\label{lem:moment-gclip-case2-3}
Let Assumption~\ref{ass:noise-s} and ~\ref{ass:exponent-s} hold, 
Suppose the true gradient satisfies the (Case 2) condition $\|\nabla f(\vx)\| > \theta\gamma$ for a fixed constant $\theta \in (0, 1)$. 
Then, for any $s \in (0, \alpha-1]$, the $(1+s)$-th moment of the estimator satisfies the following upper bound:
\[
\E\big[\|\hat{\vg}_\delta(\vx)\|^{1+s} \mid \vx\big] \le \left( \frac{1}{\theta^{1+s}} + 2^s \delta^{1+s} \right) \|\nabla f(\vx)\|^{1+s} + 2^s \delta^{1+s} \sigma^{1+s}.
\]
\end{lemma}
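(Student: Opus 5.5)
The plan is to bound $\|\hat{\vg}_\delta(\vx)\|$ pointwise, before taking any expectation, by a quantity that splits into a deterministic part controlled by $\gamma$ and a part proportional to $\delta\|\vg\|$. I will use the three regions of Lemma~\ref{lem:prob_bounds-3}: $A=\{\|\vg\|\le\gamma\}$, $B=\{\gamma<\|\vg\|\le\gamma/\delta\}$ and $C=\{\|\vg\|>\gamma/\delta\}$. Unfolding the definition of the scaling factor $\min\{1,\max\{\delta,\gamma/\|\vg\|\}\}$ gives the norm on each region:
\begin{itemize}
\item on $A$ the factor is $1$, so $\|\hat{\vg}_\delta\|=\|\vg\|\le\gamma$;
\item on $B$ the factor is $\gamma/\|\vg\|$, so $\|\hat{\vg}_\delta\|=\gamma$;
\item on $C$ the factor is $\delta$, so $\|\hat{\vg}_\delta\|=\delta\|\vg\|$.
\end{itemize}
The three regions are disjoint, so raising to the power $1+s$ and summing the two possible cases yields the pointwise bound
\[
\|\hat{\vg}_\delta(\vx)\|^{1+s}\le \gamma^{1+s}\,\mathbf{1}_{A\cup B}+\delta^{1+s}\|\vg\|^{1+s}\,\mathbf{1}_{C}\le \gamma^{1+s}+\delta^{1+s}\|\vg\|^{1+s}.
\]
Unlike in Case 1, no probability estimates for $B$ or $C$ are needed here.

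Next I use the Case 2 hypothesis. Since $\|\nabla f(\vx)\|>\theta\gamma$, we have $\gamma<\|\nabla f(\vx)\|/\theta$, and hence $\gamma^{1+s}\le\theta^{-(1+s)}\|\nabla f(\vx)\|^{1+s}$. For the second term I write $\vg=\nabla f(\vx)+\vxi$. Convexity of $t\mapsto t^{1+s}$ gives $(a+b)^{1+s}\le 2^s(a^{1+s}+b^{1+s})$, so
\[
\|\vg\|^{1+s}\le 2^s\|\nabla f(\vx)\|^{1+s}+2^s\|\vxi\|^{1+s}.
\]

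Finally I take the conditional expectation given $\vx$; the gradient $\nabla f(\vx)$ is deterministic under this conditioning. The only step that needs an assumption is controlling $\E[\|\vxi\|^{1+s}\mid\vx]$ when only an $\alpha$-th moment is available. Here Assumption~\ref{ass:exponent-s} is essential: the statement's range $s\in(0,\alpha-1]$ means $1+s\le\alpha$. Then $t\mapsto t^{\alpha/(1+s)}$ is convex, and Jensen's (equivalently Lyapunov's) inequality gives
\[
\E\|\vxi\|^{1+s}\le\big(\E\|\vxi\|^{\alpha}\big)^{(1+s)/\alpha}\le\sigma^{1+s}.
\]
Collecting the terms gives $\big(\theta^{-(1+s)}+2^s\delta^{1+s}\big)\|\nabla f(\vx)\|^{1+s}+2^s\delta^{1+s}\sigma^{1+s}$, which is the claimed bound. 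I expect no real obstacle beyond checking the three-region case analysis carefully, in particular the boundary cases $\delta=1$ (where $B=\emptyset$) and $\|\vg\|=\gamma$.
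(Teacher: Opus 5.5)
Your proposal is correct and follows essentially the same route as the paper. The paper also splits into $A\cup B$, where $\|\hat{\vg}_\delta\|\le\gamma$, and $C$, where $\hat{\vg}_\delta=\delta\vg$; it then applies $(a+b)^{1+s}\le 2^s(a^{1+s}+b^{1+s})$ and Lyapunov's inequality using $1+s\le\alpha$, and absorbs $\gamma^{1+s}$ via $\gamma<\|\nabla f(\vx)\|/\theta$. The only difference is cosmetic: you bound pointwise and drop the indicators before taking expectations, whereas the paper splits the expectation first and then relaxes $\Pr(A\cup B)\le 1$ and $\Pr(C)\le 1$.
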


The proof is provided in \autoref{subsec:var-bound-case2}.

\begin{lemma}[Inner Product Bound {\normalfont (Cutkosky and Mehta, 2020)}] \label{lem:cutkoskyandmehta-ss}
For any $\vv \in \mathbb{R}^d$ and point $\vx$:
\[
\left\langle \frac{\vv}{\|\vv\|}, \nabla f(\vx) \right\rangle 
\ge \tfrac{1}{3}\|\nabla f(\vx)\| - \tfrac{8}{3}\|\vv - \nabla f(\vx)\|.
\]
This result is adapted from Lemma 2 of~\cite{cutkosky2020momentum}.
\end{lemma}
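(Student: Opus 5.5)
The plan is to prove the inequality by elementary geometry on the sphere, following the argument of \cite{cutkosky2020momentum}. Write $\vg := \nabla f(\vx)$ and $\vu := \vv - \vg$. If $\vg=\vzero$ the claim reads $\langle \vv/\|\vv\|,\vzero\rangle \ge -\tfrac{8}{3}\|\vv\|$, which is trivial; we also adopt the convention that $\vv\neq\vzero$ so the left side is defined. The proof then splits on the size of the error $\|\vu\|$ relative to $\|\vg\|$, with threshold $\|\vu\| \le \tfrac12\|\vg\|$ versus $\|\vu\| > \tfrac12\|\vg\|$.

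\emph{Large-error case} $\|\vu\| > \tfrac12\|\vg\|$. By Cauchy--Schwarz, $\langle \vv/\|\vv\|,\vg\rangle \ge -\|\vg\|$. It then suffices to check
\[
-\|\vg\| \ge \tfrac13\|\vg\| - \tfrac83\|\vu\|, \quad\text{i.e.}\quad \|\vu\| \ge \tfrac12\|\vg\|,
\]
which is exactly the case hypothesis. The constants $\tfrac13,\tfrac83$ are chosen precisely so that this threshold is $\tfrac12$.

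\emph{Small-error case} $\|\vu\|\le\tfrac12\|\vg\|$. Here $\|\vv\|\le \|\vg\|+\|\vu\|\le \tfrac32\|\vg\|$, and $\langle \vv,\vg\rangle = \|\vg\|^2+\langle \vu,\vg\rangle \ge \|\vg\|^2-\|\vu\|\|\vg\| \ge \tfrac12\|\vg\|^2 >0$. Hence
\[
\Big\langle \frac{\vv}{\|\vv\|},\vg\Big\rangle \ge \frac{\|\vg\|^2-\|\vu\|\|\vg\|}{\tfrac32\|\vg\|} = \tfrac23\|\vg\| - \tfrac23\|\vu\| \ge \tfrac13\|\vg\| - \tfrac83\|\vu\|.
\]
The last step uses $\tfrac23 \ge \tfrac13$ and $-\tfrac23 \ge -\tfrac83$. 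The denominator bound is legitimate because the numerator is nonnegative in this case.

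Combining the two cases gives the lemma for every $\vv$, with the stated constants (in fact with room to spare in the second case). I expect no real obstacle here. The only points needing care are handling the sign of the numerator before replacing $\|\vv\|$ by its upper bound, and settling the degenerate case $\vv=\vzero$ by convention: in the applications $\vv$ will be the stochastic gradient $\vg_k$, which is nonzero almost surely or treated via $\hat\vg=\vzero$ in that event.
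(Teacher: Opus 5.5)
Your proof is correct: the split at $\|\vv-\nabla f(\vx)\|=\tfrac12\|\nabla f(\vx)\|$ with Cauchy--Schwarz in the large-error case, and $\langle\vv,\nabla f(\vx)\rangle\ge\tfrac12\|\nabla f(\vx)\|^2$ together with $\|\vv\|\le\tfrac32\|\nabla f(\vx)\|$ in the small-error case, all check out. The paper gives no proof of its own and only cites Lemma~2 of \cite{cutkosky2020momentum}, which uses exactly this two-case argument, so your route is the intended one; your convention $\vv\neq\vzero$ is harmless, since the paper only applies the lemma on events where $\|\vg\|>\gamma>0$.
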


\subsection{Proof of Theorem \ref{thm:delta-gclip-s-3}}
\begin{proof}[Proof of Theorem \ref{thm:delta-gclip-s-3}]
~\\
\medskip\noindent\textbf{Descent under $s$-H\"older smoothness.}
Since $\nabla f$ is $(L, s)$-H\"older continuous, meaning that for all $\vx, \vy \in \R^d$,
\[
\|\nabla f(\vx)-\nabla f(\vy)\| \;\le\; L\|\vx-\vy\|^{s}, \qquad s\in(0,1],
\]
the generalized descent lemma implies that for any $\vx, \vy \in \R^d$,
\begin{equation} \label{eq:sholder-descent-lemma}
f(\vy) \;\le\; f(\vx)
+ \langle\nabla f(\vx), \vy-\vx\rangle
+ \frac{L}{1+s}\,\|\vy-\vx\|^{\,1+s}.
\end{equation}

We apply \eqref{eq:sholder-descent-lemma} to the update
\[
\vx_k \;=\; \vx_{k-1} - \eta_{k-1}\,\hat{\vg}_{\delta,k-1}(\vx_{k-1}),
\]
where $\hat{\vg}_{\delta,k-1}(\vx_{k-1})$ is the $\delta$-clipped stochastic gradient at $\vx_{k-1}$.
For a fixed iterate $\vx_{k-1}$ and a specific realization of $\hat{\vg}_{\delta,k-1}$,
\[
f(\vx_k)
\;\le\; f(\vx_{k-1})
- \eta_{k-1}\langle\nabla f(\vx_{k-1}), \hat{\vg}_{\delta,k-1}(\vx_{k-1})\rangle
+ \frac{L}{1+s}\,\eta_{k-1}^{\,1+s}\,
\|\hat{\vg}_{\delta,k-1}(\vx_{k-1})\|^{\,1+s}.
\]

Taking the conditional expectation with respect to all randomness in $\hat{\vg}_{\delta,k-1}$ yields
\begin{equation} \label{eq:main-descent-sholder-start-3}
\begin{aligned}
\E\!\left[f(\vx_k)\mid\vx_{k-1}\right]
&\le\;
f(\vx_{k-1})
- \eta_{k-1}
\Big\langle\nabla f(\vx_{k-1}),
\E[\hat{\vg}_{\delta,k-1}(\vx_{k-1})\mid\vx_{k-1}]\Big\rangle \\
&\qquad
+ \frac{L}{1+s}\,\eta_{k-1}^{\,1+s}\,
\E[\|\hat{\vg}_{\delta,k-1}(\vx_{k-1})\|^{\,1+s}\mid\vx_{k-1}].
\end{aligned}
\end{equation}
To obtain a descent inequality from \eqref{eq:main-descent-sholder-start-3}, we must bound the two stochastic contributions: the expected inner product (descent signal) and the expected $(1+s)$-th moment (variance penalty). 

These quantities behave differently depending on the size of the true gradient relative to the clipping threshold $\gamma$ and the scaling parameter $\delta$. Accordingly, we partition the state space into three disjoint regimes:
\[
\resizebox{\linewidth}{!}{$\displaystyle
  \textbf{Case~1:}\ \|\nabla f(\mathbf{x}_{k-1})\| \le \theta\gamma,\quad
  \textbf{Case~2a:}\ \theta\gamma < \|\nabla f(\mathbf{x}_{k-1})\| \le 2\gamma/\delta,\quad
  \textbf{Case~2b:}\ \|\nabla f(\mathbf{x}_{k-1})\| > 2\gamma/\delta.
$}
\]
In all cases, we bound the two terms separately by decomposing the unclipped stochastic gradient $\vg_{k-1}$ into three distinct geometric regions $A$, $B$, and $C$, defined as,
\begin{equation}\label{defnABC-s}
A := \{\|\vg_{k-1}\|\le \gamma\},\qquad
B := \{\gamma<\|\vg_{k-1}\|\le \gamma/\delta\},\qquad
C := \{\|\vg_{k-1}\|>\gamma/\delta\}.
\end{equation}
We now systematically bound the expected progress in each of the three cases. For brevity in the following derivations, we drop the iteration index $k-1$ and denote the current iterate as $\vx = \vx_{k-1}$ and the learning rate as $\eta = \eta_{k-1}$.

\medskip\noindent\textbf{Case 1: $\|\nabla f(\vx_{k-1})\| \le \theta\gamma$.}
We define the estimator bias as
\[
\vb_{k-1} := \E[\hat{\vg}_{\delta,k-1}(\vx_{k-1})\mid\vx_{k-1}] - \nabla f(\vx_{k-1}).
\]
Then from \eqref{eq:main-descent-sholder-start-3} we have
\begin{align}
\E[f(\vx_k)\mid\vx_{k-1}]
&\le f(\vx_{k-1})
- \eta_{k-1}
\Big\langle \nabla f(\vx_{k-1}),
\E[\hat{\vg}_{\delta,k-1}(\vx_{k-1})\mid\vx_{k-1}] \Big\rangle
\nonumber\\
&\qquad
+ \frac{L}{1+s}\,\eta_{k-1}^{\,1+s}\,
\E\!\left[
\|\hat{\vg}_{\delta,k-1}(\vx_{k-1})\|^{\,1+s}
\,\middle|\,
\vx_{k-1}
\right] \nonumber\\
&= f(\vx_{k-1})
- \eta_{k-1}
\big\langle \nabla f(\vx_{k-1}),
\nabla f(\vx_{k-1}) + \vb_{k-1} \big\rangle
\nonumber\\
&\qquad
+ \frac{L}{1+s}\,\eta_{k-1}^{\,1+s}\,
\E\!\left[
\|\hat{\vg}_{\delta,k-1}(\vx_{k-1})\|^{\,1+s}
\,\middle|\,
\vx_{k-1}
\right] \nonumber\\
&= f(\vx_{k-1})
- \eta_{k-1}\|\nabla f(\vx_{k-1})\|^2
- \eta_{k-1}\langle\nabla f(\vx_{k-1}),\vb_{k-1}\rangle
\nonumber\\
&\qquad
+ \frac{L}{1+s}\,\eta_{k-1}^{\,1+s}\,
\E\!\left[
\|\hat{\vg}_{\delta,k-1}(\vx_{k-1})\|^{\,1+s}
\,\middle|\,
\vx_{k-1}
\right].
\label{eq:case1-pre-young-3}
\end{align}

Using $-\langle u,v\rangle \le \tfrac{1}{2}(\|u\|^2+\|v\|^2)$ with
$u=\nabla f(\vx_{k-1})$ and $v=\vb_{k-1}$, we obtain
\begin{equation}\label{eq:case1-biasbound-s-3}
-\eta_{k-1}\langle\nabla f(\vx_{k-1}),\vb_{k-1}\rangle
\le \frac{\eta_{k-1}}{2}\|\nabla f(\vx_{k-1})\|^2
+ \frac{\eta_{k-1}}{2}\|\vb_{k-1}\|^2.
\end{equation}
Substituting \eqref{eq:case1-biasbound-s-3} into \eqref{eq:case1-pre-young-3} yields
\begin{equation}\label{eq:case1-descent-s}
\begin{aligned}
\E[f(\vx_k)\mid\vx_{k-1}]
&\le f(\vx_{k-1})
- \frac{\eta_{k-1}}{2}\|\nabla f(\vx_{k-1})\|^2
+ \frac{\eta_{k-1}}{2}\|\vb_{k-1}\|^2 
+ \frac{L}{1+s}\,\eta_{k-1}^{\,1+s}\,
\E[\|\hat{\vg}_{\delta,k-1}(\vx_{k-1})\|^{\,1+s}\mid\vx_{k-1}].
\end{aligned}
\end{equation}

Using Lemma \ref{lem:clipped-bias-delta-s-3} on the last two terms of the RHS above,
\begin{equation}\label{eq:case1-descent-s-final-explicit-3}
\begin{aligned}
\E[f(\vx_k)\mid\vx_{k-1}]
&\le f(\vx_{k-1})
- \frac{\eta_{k-1}}{2}\|\nabla f(\vx_{k-1})\|^2
+ \frac{\eta_{k-1}}{2} C_{bias}(\theta, \delta) \sigma^{2\alpha} \gamma^{2(1-\alpha)} \\[6pt]
&\quad
+ \frac{L\,\eta_{k-1}^{\,1+s}}{1+s}
\Bigg(
2^{s}\|\nabla f(\vx_{k-1})\|^{1+s}
+ 2^{s}\sigma^{1+s}
+ C_{tail}(\theta, \delta) \sigma^\alpha \gamma^{1+s-\alpha}
\Bigg)\\ 
&\;\le\;
f(\vx_{k-1})
- \frac{\eta_{k-1}}{2}\|\nabla f(\vx_{k-1})\|^2
+ \frac{\eta_{k-1}}{2} C_{bias}(\theta, \delta) \sigma^{2\alpha} \gamma^{2(1-\alpha)} \\[6pt]
&\quad
+\frac{L\,\eta_{k-1}^{\,1+s}}{1+s}
\Big(
2^{s}(\theta\gamma)^{1+s}
+ 2^{s}\sigma^{1+s}
+ C_{tail}(\theta, \delta) \sigma^\alpha \gamma^{1+s-\alpha}
\Big)
\end{aligned}
\end{equation}

In the last line above, since we are in Case~1, we have used $\|\nabla f(\vx_{k-1})\|^{1+s} \le (\theta\gamma)^{1+s}$ as $\|\nabla f(\vx_{k-1})\|\le \theta\gamma$ for Case 1.

Defining the remainder term
\begin{equation}
\begin{aligned}
\mathcal{R}_{1,k-1}
&:=
\frac{\eta_{k-1}}{2} C_{bias}(\theta, \delta) \sigma^{2\alpha} \gamma^{2(1-\alpha)}
+ \frac{L\,\eta_{k-1}^{\,1+s}}{1+s}
\Bigg(
2^{s}(\theta\gamma)^{1+s}
+ 2^{s}\sigma^{1+s}
+ C_{tail}(\theta, \delta) \sigma^\alpha \gamma^{1+s-\alpha}
\Bigg),
\end{aligned}
\label{eq:R1k}
\end{equation}
the inequality \eqref{eq:case1-descent-s-final-explicit-3} becomes
\begin{equation}\label{eq:case1-descent}
\E\big[f(\vx_k)\mid\vx_{k-1}\big]
\le
f(\vx_{k-1})
- \frac{\eta_{k-1}}{2}\,\|\nabla f(\vx_{k-1})\|^2
+ \mathcal{R}_{1,k-1}.
\end{equation}

This establishes the simplified descent inequality for Case~1.

\medskip\noindent\textbf{Case 2a: $\theta\gamma < \|\nabla f(\vx_{k-1})\| \le 2\gamma/\delta$.}
~\\
From the descent inequality \eqref{eq:main-descent-sholder-start-3} we have 
\begin{align}
    \E[f(\vx_k)\mid\vx_{k-1}]
    &\le f(\vx_{k-1})
    - \eta_{k-1}
    \Big\langle \nabla f(\vx_{k-1}),
    \E[\hat{\vg}_{\delta,k-1}(\vx_{k-1})\mid\vx_{k-1}] \Big\rangle
    \nonumber\\
    &\qquad
    + \frac{L}{1+s}\,\eta_{k-1}^{\,1+s}\,
    \E\!\left[
    \|\hat{\vg}_{\delta,k-1}(\vx_{k-1})\|^{\,1+s}
    \,\middle|\,
    \vx_{k-1}
    \right] 
    \label{eq:case2a-descent-raw}
\end{align}

We directly bound the inner product and the variance terms. Using Lemma \ref{lem:inner-gclip-case2-3} for the expected descent and Lemma \ref{lem:moment-gclip-case2-3} for the expected $(1+s)$-th moment and under the Case 2a condition $\|\nabla f(\vx_{k-1})\| \leq \frac{2\gamma}{\delta}$, we split the gradient norm to absorb it into the linear descent term:
\begin{align}
    \E[f(\vx_k) \mid \vx_{k-1}] &\leq f(\vx_{k-1}) - \eta_{k-1}c_0\gamma\|\nabla f(\vx_{k-1})\| \notag \\
    &\quad + \frac{L\eta_{k-1}^{1+s}}{1+s} \left( K_v\|\nabla f(\vx_{k-1})\|^{1+s} + 2^s\delta^{1+s}\sigma^{1+s} \right) \notag \\
    &= f(\vx_{k-1}) - \eta_{k-1}c_0\gamma\|\nabla f(\vx_{k-1})\| \notag \\
    &\quad + \frac{L\eta_{k-1}^{1+s}}{1+s} K_v \|\nabla f(\vx_{k-1})\|^s \|\nabla f(\vx_{k-1})\| + \frac{L\eta_{k-1}^{1+s}}{1+s} 2^s\delta^{1+s}\sigma^{1+s} \notag \\
    &\leq f(\vx_{k-1}) - \eta_{k-1}c_0\gamma\|\nabla f(\vx_{k-1})\| \notag \\
    &\quad + \frac{L\eta_{k-1}^{1+s}}{1+s} K_v \left(\frac{2\gamma}{\delta}\right)^s \|\nabla f(\vx_{k-1})\| + \frac{L\eta_{k-1}^{1+s}}{1+s} 2^s\delta^{1+s}\sigma^{1+s} \notag \\
    &= f(\vx_{k-1}) - \eta_{k-1} \gamma \left( c_0 - \frac{L\eta_{k-1}^s}{1+s} K_v \left(\frac{2}{\delta}\right)^s \gamma^{s-1} \right) \|\nabla f(\vx_{k-1})\| \notag \\
    &\quad + \frac{L\eta_{k-1}^{1+s}}{1+s} 2^s\delta^{1+s}\sigma^{1+s}. \label{eq:case2a_absorption}
\end{align}

where $c_0 = \frac{1}{8}M$ and $M = \min\big\{\frac{15}{16}\theta, \frac{1}{3}\big\}$.\\
To ensure the negative descent term dominates the gradient-dependent smoothness penalty, we require the learning rate to satisfy:
\begin{equation}
    \frac{L\eta_{k-1}^s}{1+s} K_v \left(\frac{2}{\delta}\right)^s \gamma^{s-1} \leq \frac{c_0}{2} \implies \eta_{k-1}^s \leq \frac{c_0 (1+s)}{2 L K_v} \left(\frac{\delta}{2}\right)^s \gamma^{1-s}. \label{eq:eta_condition_case2a}
\end{equation}

Assuming the condition in \eqref{eq:eta_condition_case2a} holds, substituting it back into \eqref{eq:case2a_absorption} yields the final bounded descent inequality for this case:
\begin{equation}
    \E[f(\vx_k) \mid \vx_{k-1}] \leq f(\vx_{k-1}) - \frac{c_0}{2} \eta_{k-1} \gamma \|\nabla f(\vx_{k-1})\| + \frac{L\eta_{k-1}^{1+s}}{1+s} 2^s\delta^{1+s}\sigma^{1+s}. \label{eq:case2a_final-s-3}
\end{equation}

Defining the remainder term
\begin{equation}
\begin{aligned}
\mathcal{R}_{2a,k-1}
&:=
\frac{L\,\eta_{k-1}^{\,1+s}}{1+s}
\Bigg(
2^{s}\delta^{1+s}\sigma^{1+s}
\Bigg),
\end{aligned}
\label{eq:R2ak}
\end{equation}
the inequality \eqref{eq:case2a_final-s-3} becomes
\begin{equation}\label{eq:case2a-descent}
\E\big[f(\vx_k)\mid\vx_{k-1}\big]
\le
f(\vx_{k-1})
- \eta_{k-1} \tfrac{c_0}{2} \gamma \|\nabla f(\vx_{k-1})\|
+ \mathcal{R}_{2a,k-1}.
\end{equation}

This establishes the simplified descent inequality for Case~2a.

\medskip\noindent\textbf{Case 2b: $\|\nabla f(\vx_{k-1})\| > 2\gamma/\delta$.}
~\\
From the descent inequality \eqref{eq:main-descent-sholder-start-3} we have 
\begin{align}
\E[f(\vx_k)\mid\vx_{k-1}]
&\le f(\vx_{k-1})
- \eta_{k-1}
\Big\langle \nabla f(\vx_{k-1}),
\E[\hat{\vg}_{\delta,k-1}(\vx_{k-1})\mid\vx_{k-1}] \Big\rangle
\nonumber\\
&\qquad
+ \frac{L}{1+s}\,\eta_{k-1}^{\,1+s}\,
\E\!\left[
\|\hat{\vg}_{\delta,k-1}(\vx_{k-1})\|^{\,1+s}
\,\middle|\,
\vx_{k-1}
\right] \nonumber\\.
\label{eq:case1-pre-young-b}
\end{align}

In this extreme scaling regime, the linear descent bound from Lemma \ref{lem:moment-gclip-case2-3} is insufficient to overcome the fractional variance. Instead, we re-derive the expected inner product directly using the unified scaling region $C$ and the geometric buffer zone. 

Partitioning the expectation and utilizing the zero-mean property of the noise ($\E[\vxi_{k-1}\mid\vx_{k-1}] = 0$), we have:
We partition the expectation and expand the inner product through a single chain of bounds:
\begin{align}
\Big\langle \nabla f(\vx_{k-1}),
\E[\hat{\vg}_{\delta,k-1}\mid\vx_{k-1}]
\Big\rangle
&=
\E\!\left[
\begin{aligned}
&
\langle \nabla f(\vx_{k-1}),
\hat{\vg}_{\delta,k-1} \rangle
\mathbf 1_C
\end{aligned}
\;\middle|\;
\vx_{k-1}
\right]
\nonumber
\\
&\quad +
\E\!\left[
\begin{aligned}
&
\langle \nabla f(\vx_{k-1}),
\hat{\vg}_{\delta,k-1} \rangle
\mathbf 1_{A \cup B}
\end{aligned}
\;\middle|\;
\vx_{k-1}
\right]
\nonumber
\\
&\ge
\E\!\left[
\langle \nabla f(\vx_{k-1}),
\delta(\nabla f(\vx_{k-1}) + \vxi_{k-1})
\rangle
\mathbf 1_C
\;\middle|\;
\vx_{k-1}
\right]
\nonumber
\\
&\quad -
\E\!\left[
\|\nabla f(\vx_{k-1})\|
\|\hat{\vg}_{\delta,k-1}\|
\mathbf 1_{A \cup B}
\;\middle|\;
\vx_{k-1}
\right]
\nonumber
\\
&\ge
\delta \|\nabla f(\vx_{k-1})\|^2 \Pr(C)
+
\delta
\Big\langle
\nabla f(\vx_{k-1}),
\E[\vxi_{k-1}\mathbf 1_C \mid \vx_{k-1}]
\Big\rangle
\nonumber
\\
&\quad -
\gamma \|\nabla f(\vx_{k-1})\| \Pr(A \cup B)
\nonumber
\\
&=
\delta \|\nabla f(\vx_{k-1})\|^2 \Pr(C)
-
\delta
\Big\langle
\nabla f(\vx_{k-1}),
\E[\vxi_{k-1}\mathbf 1_{A \cup B} \mid \vx_{k-1}]
\Big\rangle
\nonumber
\\
&\quad -
\gamma \|\nabla f(\vx_{k-1})\| \Pr(A \cup B)
\nonumber
\\
&\ge
\delta \|\nabla f(\vx_{k-1})\|^2 \Pr(C)
-
\delta \|\nabla f(\vx_{k-1})\|
\E\!\left[
\|\vxi_{k-1}\|
\mathbf 1_{A \cup B}
\;\middle|\;
\vx_{k-1}
\right]
\nonumber
\\
&\quad -
\gamma \|\nabla f(\vx_{k-1})\|.
\label{eq:case2b-inner-raw}
\end{align}

The derivation proceeds as follows. First, we decompose the total conditional expectation into two disjoint complementary regions, namely the scaling region $C$ and the bounded clipping regions $A \cup B$, yielding an exact equality by partitioning the sample space. Next, in Region $C$ we substitute the explicit form of the estimator $\hat{\vg}_{\delta,k-1} = \delta(\nabla f(\vx_{k-1}) + \vxi_{k-1})$, while in Regions $A \cup B$ we apply the inequality, $\langle u, v \rangle \ge -\|u\|\|v\|$, to obtain a lower bound on the inner product. We then expand the inner product inside Region $C$ and use the fact that $\nabla f(\vx_{k-1})$ is deterministic conditioned on $\vx_{k-1}$ in order to extract it from the expectation; at the same time, we bound the estimator norm in Regions $A \cup B$ using the structural constraint $\|\hat{\vg}_{\delta,k-1}\| \le \gamma$. Subsequently, we invoke the zero-mean property of the stochastic noise, $\E[\vxi_{k-1}\mid\vx_{k-1}] = 0$, which implies that the total expected noise over all partitioned regions must vanish; therefore,
\[
\E[\vxi_{k-1} \mathbf 1_C \mid \vx_{k-1}]
=
-\,\E[\vxi_{k-1} \mathbf 1_{A \cup B} \mid \vx_{k-1}],
\]
allowing the noise contribution to be transferred entirely outside Region $C$. Finally, we apply the inequality, $\langle u, v \rangle \ge -\|u\|\|v\|$ once more to bound the resulting inner product involving the transferred noise term, and we use the trivial bound $\Pr(A \cup B) \le 1$ to obtain the worst-case linear penalty, completing the argument.

To evaluate $\E\!\left[\|\vxi_{k-1}\|\mathbf 1_{A \cup B}\;\middle|\;\vx_{k-1}\right]$, we first determine the absolute minimum noise magnitude required for the algorithm to land in the clipping regions $A \cup B$. By definition, if the stochastic gradient triggers clipping, its magnitude is bounded by $\|\vg_{k-1}\| \le \gamma/\delta$. Applying the reverse triangle inequality, we lower-bound the necessary noise vector:
\begin{equation}
\|\vxi_{k-1}\| = \|\vg_{k-1} - \nabla f(\vx_{k-1})\| \ge \|\nabla f(\vx_{k-1})\| - \|\vg_{k-1}\| \ge \|\nabla f(\vx_{k-1})\| - \frac{\gamma}{\delta}.
\end{equation}

Because we are strictly analyzing Case 2b, the true gradient satisfies the geometric boundary $\|\nabla f(\vx_{k-1})\| > 2\gamma/\delta$. Rearranging this yields $\gamma/\delta < \frac{1}{2}\|\nabla f(\vx_{k-1})\|$. Substituting this strict upper bound into our noise inequality establishes a massive geometric buffer zone:
\begin{equation}\label{eq:geometric-buffer}
\|\vxi_{k-1}\| > \|\nabla f(\vx_{k-1})\| - \frac{1}{2}\|\nabla f(\vx_{k-1})\| = \frac{1}{2}\|\nabla f(\vx_{k-1})\|.
\end{equation}

This lower bound physically guarantees that any noise capable of forcing the estimator into the clipping regions must be extremely large. Because it is bounded safely away from zero, it strictly prevents any division-by-zero singularities. 

We now evaluate the adversarial noise expectation. To extract the bounded $\alpha$-th moment ($\E[\|\vxi_{k-1}\|^\alpha \mid \vx_{k-1}] \le \sigma^\alpha$), we multiply and divide the integrand by $\|\vxi_{k-1}\|^{\alpha-1}$. Since we are integrating strictly over the event $A \cup B$, we safely substitute the geometric buffer from \eqref{eq:geometric-buffer} into the denominator, allowing us to pull the deterministic gradient out of the expectation:
\begin{align}
\E\big[\|\vxi_{k-1}\|\1_{A \cup B}\mid\vx_{k-1}\big] 
&= \E\!\left[ \frac{\|\vxi_{k-1}\|^\alpha}{\|\vxi_{k-1}\|^{\alpha-1}} \1_{A \cup B} \,\middle|\, \vx_{k-1} \right] 
\le \E\!\left[ \frac{\|\vxi_{k-1}\|^\alpha}{\left(\frac{1}{2}\|\nabla f(\vx_{k-1})\|\right)^{\alpha-1}} \1_{A \cup B} \,\middle|\, \vx_{k-1} \right] \nonumber \\
&= \frac{2^{\alpha-1}}{\|\nabla f(\vx_{k-1})\|^{\alpha-1}} \E\big[\|\vxi_{k-1}\|^\alpha \1_{A \cup B}\mid\vx_{k-1}\big] \nonumber \\
&\le \frac{2^{\alpha-1}\sigma^\alpha}{\|\nabla f(\vx_{k-1})\|^{\alpha-1}}.
\label{eq:case2b-moment-extraction}
\end{align}

Substituting the bounded moment \eqref{eq:case2b-moment-extraction} into the inner product expansion \eqref{eq:case2b-inner-raw} yields the penalized descent signal:
\begin{align}
\Big\langle
\nabla f(\vx_{k-1}),
\E[\hat{\vg}_{\delta,k-1}\mid\vx_{k-1}]
\Big\rangle
&\ge
\delta\|\nabla f(\vx_{k-1})\|^2 \Pr(C)
\nonumber
-
\delta\|\nabla f(\vx_{k-1})\|
\left(
\frac{2^{\alpha-1}\sigma^\alpha}
{\|\nabla f(\vx_{k-1})\|^{\alpha-1}}
\right)
-
\gamma\|\nabla f(\vx_{k-1})\|
\nonumber
\\
&=
\delta\|\nabla f(\vx_{k-1})\|^2 \Pr(C)
-
2^{\alpha-1}\delta\sigma^\alpha
\|\nabla f(\vx_{k-1})\|^{2-\alpha}
-
\gamma\|\nabla f(\vx_{k-1})\|.
\label{eq:case2b-inner-sub1}
\end{align}

To establish a strictly quadratic lower bound, we must first control the probability mass $\Pr(C)$. Since Region $C$ is the complement of $A \cup B$, we apply Markov's inequality directly to the geometric buffer event established in \eqref{eq:geometric-buffer}:
\begin{equation}
\Pr(A \cup B) \le \Pr\left(\|\vxi_{k-1}\| > \frac{1}{2}\|\nabla f(\vx_{k-1})\|\right) \le \frac{\E[\|\vxi_{k-1}\|^\alpha\mid\vx_{k-1}]}{\left(\frac{1}{2}\|\nabla f(\vx_{k-1})\|\right)^\alpha} \le \frac{2^\alpha \sigma^\alpha}{\|\nabla f(\vx_{k-1})\|^\alpha}.
\end{equation}
Therefore, the probability of remaining in the scaling zone is lower-bounded by $\Pr(C) = 1 - \Pr(A \cup B) \ge 1 - \frac{2^\alpha \sigma^\alpha}{\|\nabla f(\vx_{k-1})\|^\alpha}$.

Substituting this probability bound into~\eqref{eq:case2b-inner-sub1} and factoring out the leading quadratic term $\delta\|\nabla f(\vx_{k-1})\|^2$ gives
\begin{align}
\Big\langle
\nabla f(\vx_{k-1}),
\E[\hat{\vg}_{\delta,k-1}\mid\vx_{k-1}]
\Big\rangle
&\ge
\delta\|\nabla f(\vx_{k-1})\|^2
\left(
1 -
\frac{2^\alpha\sigma^\alpha}
{\|\nabla f(\vx_{k-1})\|^\alpha}
\right)
\nonumber
-
2^{\alpha-1}\delta\sigma^\alpha
\|\nabla f(\vx_{k-1})\|^{2-\alpha}
-
\gamma\|\nabla f(\vx_{k-1})\|
\nonumber
\\
&=
\delta\|\nabla f(\vx_{k-1})\|^2
\Bigg[
1
-
\frac{2^\alpha\sigma^\alpha}
{\|\nabla f(\vx_{k-1})\|^\alpha}
\nonumber
-
\frac{2^{\alpha-1}\sigma^\alpha}
{\|\nabla f(\vx_{k-1})\|^\alpha}
-
\frac{\gamma}
{\delta\|\nabla f(\vx_{k-1})\|}
\Bigg]
\nonumber
\\
&=
\delta\|\nabla f(\vx_{k-1})\|^2
\left[
1
-
\frac{3\cdot2^{\alpha-1}\sigma^\alpha}
{\|\nabla f(\vx_{k-1})\|^\alpha}
-
\frac{\gamma}
{\delta\|\nabla f(\vx_{k-1})\|}
\right].
\label{eq:case2b-inner-bracket}
\end{align}

Since we are in Case~2b, the gradient satisfies $\|\nabla f(\vx_{k-1})\|>2\gamma/\delta$, which implies
\[
\frac{\gamma}{\delta\|\nabla f(\vx_{k-1})\|}<\frac12 .
\]
Applying this bound to~\eqref{eq:case2b-inner-bracket} yields
\begin{equation}
\Big\langle \nabla f(\vx_{k-1}), \E[\hat{\vg}_{\delta,k-1}\mid\vx_{k-1}] \Big\rangle
>
\delta\|\nabla f(\vx_{k-1})\|^2
\left[
\frac12 - \frac{3\cdot2^{\alpha-1}\sigma^\alpha}{\|\nabla f(\vx_{k-1})\|^\alpha}
\right].
\label{eq:case2b-inner-reduced}
\end{equation}

To ensure the descent signal remains positive, we require
\[
\frac{3\cdot2^{\alpha-1}\sigma^\alpha}{\|\nabla f(\vx_{k-1})\|^\alpha}\le\frac14,
\]
which holds whenever
\begin{equation}
\|\nabla f(\mathbf{x}_{k-1})\|
\ge (12\cdot2^{\alpha-1})^{1/\alpha}\sigma .
\label{eq:trans-condn1}
\end{equation}

Because $\|\nabla f(\mathbf{x}_{k-1})\|>2\gamma/\delta$ in Case~2b, this condition holds throughout the region provided
\begin{equation}
\frac{2\gamma}{\delta}
\ge (12\cdot2^{\alpha-1})^{1/\alpha}\sigma .
\label{eq:trans-condn2}
\end{equation}

Combining~\eqref{eq:trans-condn1} and~\eqref{eq:trans-condn2} yields
\[
\|\nabla f(\mathbf{x}_{k-1})\|>(12\cdot2^{\alpha-1})^{1/\alpha}\sigma,
\]
and therefore
\[
\frac12-\frac{3\cdot2^{\alpha-1}\sigma^\alpha}{\|\nabla f(\mathbf{x}_{k-1})\|^\alpha}
\ge
\frac12-\frac{3\cdot2^{\alpha-1}\sigma^\alpha}{12\cdot2^{\alpha-1}\sigma^\alpha}
=\frac14 .
\]

Consequently,
\begin{equation}
\Big\langle \nabla f(\mathbf{x}_{k-1}),
\mathbb{E}[\hat{\mathbf{g}}_{\delta,k-1}\mid\mathbf{x}_{k-1}]
\Big\rangle
\ge
c_2\,\delta\|\nabla f(\mathbf{x}_{k-1})\|^2,
\qquad
c_2=\tfrac14 .
\label{eq:case2b-inner-final}
\end{equation}

Returning to the master descent inequality \eqref{eq:main-descent-sholder-start-3}, we substitute this strictly quadratic inner product bound \eqref{eq:case2b-inner-final} and the fractional variance bound from Lemma \ref{lem:moment-gclip-case2-3}:
\begin{equation}\label{eq:case2b-descent-pre-absorb}
\begin{aligned}
\E[f(\vx_k)\mid\vx_{k-1}]
&\le f(\vx_{k-1})
- \eta_{k-1} c_2 \delta \|\nabla f(\vx_{k-1})\|^2 \\
&\qquad
+ \frac{L\,\eta_{k-1}^{\,1+s}}{1+s}
\Bigg(
K_v \|\nabla f(\vx_{k-1})\|^{1+s}
+ 2^{s}\delta^{1+s}\sigma^{1+s}
\Bigg).
\end{aligned}
\end{equation}

To mathematically crush the fractional variance, we enforce a learning rate schedule $\eta_{k-1}$ such that the gradient-dependent variance consumes at most half of the quadratic descent signal:
\begin{equation}
\frac{L\,\eta_{k-1}^{\,1+s}}{1+s} K_v \|\nabla f(\vx_{k-1})\|^{1+s} \le \frac{\eta_{k-1}}{2} c_2 \delta \|\nabla f(\vx_{k-1})\|^2.
\end{equation}
Since $\|\nabla f(\vx_{k-1})\| > 2\gamma/\delta$ in Case 2b, satisfying the strict condition $\frac{L\eta_{k-1}^s}{1+s} K_v \le \frac{1}{2}c_2 \delta (2\gamma/\delta)^{1-s}$ guarantees this absorption globally. Substituting this absorption into \eqref{eq:case2b-descent-pre-absorb} yields a pure net quadratic descent:
\begin{equation}\label{eq:case2b-descent-absorbed}
\begin{aligned}
\E[f(\vx_k)\mid\vx_{k-1}]
&\le f(\vx_{k-1})
- \frac{\eta_{k-1}}{2} c_2 \delta \|\nabla f(\vx_{k-1})\|^2
+ \frac{L\,\eta_{k-1}^{\,1+s}}{1+s}
\Big( 2^{s}\delta^{1+s}\sigma^{1+s} \Big).
\end{aligned}
\end{equation}

To unify this with the linear metric of Case 2a without retaining the scaling parameter $\delta$ in the unified coefficient, we apply the geometric substitution enabled exclusively by the Case 2b boundary ($\delta\|\nabla f(\vx_{k-1})\| > 2\gamma$):
\begin{equation}
- \frac{\eta_{k-1}}{2} c_2 \delta \|\nabla f(\vx_{k-1})\|^2
= - \frac{\eta_{k-1}}{2} c_2 \big(\delta \|\nabla f(\vx_{k-1})\|\big) \|\nabla f(\vx_{k-1})\|
\le - \eta_{k-1} c_2 \gamma \|\nabla f(\vx_{k-1})\|.
\end{equation}

Substituting this upper bound into \eqref{eq:case2b-descent-absorbed}, we obtain the finalized form:
\begin{equation}\label{eq:case2b-descent-s-explicit}
\begin{aligned}
\E[f(\vx_k)\mid\vx_{k-1}]
&\le f(\vx_{k-1})
- \eta_{k-1} c_2 \gamma \|\nabla f(\vx_{k-1})\|
+ \frac{L\,\eta_{k-1}^{\,1+s}}{1+s}
\Big(
2^{s}\delta^{1+s}\sigma^{1+s}
\Big).
\end{aligned}
\end{equation}

Defining the remainder term
\begin{equation}
\begin{aligned}
\mathcal{R}_{2b,k-1}
&:=
\frac{L\,\eta_{k-1}^{\,1+s}}{1+s}
\Big(
2^{s}\delta^{1+s}\sigma^{1+s}
\Big),
\end{aligned}
\label{eq:R2bk}
\end{equation}
the inequality \eqref{eq:case2b-descent-s-explicit} becomes
\begin{equation}\label{eq:case2b-descent}
\E\big[f(\vx_k)\mid\vx_{k-1}\big]
\le
f(\vx_{k-1})
- \eta_{k-1} c_2 \gamma \|\nabla f(\vx_{k-1})\|
+ \mathcal{R}_{2b,k-1}.
\end{equation}

This establishes the simplified descent inequality for Case~2b.

\medskip\noindent\textbf{Unified single-step descent inequality.}
Having established the expected progress in all three gradient regimes, we now unify them into a single global descent inequality valid for the entire state space.

Recall the individual descent bounds derived for each case:
\begin{align}
\textbf{Case 1: } \quad 
&\E\big[f(\vx_k)\mid\vx_{k-1}\big]
\le f(\vx_{k-1})
- \tfrac{\eta_{k-1}}{2} \|\nabla f(\vx_{k-1})\|^2
+ \mathcal{R}_{1}, \label{eq:recall-case1} \\
\textbf{Case 2a: } \quad 
&\E\big[f(\vx_k)\mid\vx_{k-1}\big]
\le f(\vx_{k-1})
- \eta_{k-1} \tfrac{c_0}{2} \gamma \|\nabla f(\vx_{k-1})\|
+ \mathcal{R}_{2a}, \label{eq:recall-case2a} \\
\textbf{Case 2b: } \quad 
&\E\big[f(\vx_k)\mid\vx_{k-1}\big]
\le f(\vx_{k-1})
- \eta_{k-1} c_2 \gamma \|\nabla f(\vx_{k-1})\|
+ \mathcal{R}_{2b}, \label{eq:recall-case2b}
\end{align}
where the corresponding remainder terms, tracking the bias and variance penalties, are defined explicitly as:
\begin{align}
\mathcal{R}_{1} &= \frac{\eta_{k-1}}{2} C_{bias}(\theta, \delta) \sigma^{2\alpha} \gamma^{2(1-\alpha)} + \frac{L\,\eta_{k-1}^{\,1+s}}{1+s}
\Big(
2^{s}\theta^{1+s}\gamma^{1+s}
+ 2^{s}\sigma^{1+s}
+ C_{tail}(\theta, \delta) \frac{\sigma^\alpha}{\gamma^{\alpha-s-1}}
\Big), \label{eq:def-R1} \\
\mathcal{R}_{2a} &= \frac{L\,\eta_{k-1}^{\,1+s}}{1+s} \Big( 2^s \delta^{1+s} \sigma^{1+s} \Big), \label{eq:def-R2a} \\
\mathcal{R}_{2b} &= \frac{L\,\eta_{k-1}^{\,1+s}}{1+s} \Big( 2^s \delta^{1+s} \sigma^{1+s} \Big). \label{eq:def-R2b}
\end{align}

Notice that because the gradient-dependent variance terms were completely absorbed into the linear descent signals in both Case 2a and Case 2b, their respective remainders consist strictly of the irreducible noise floor. Consequently, $\mathcal{R}_{2b} = \mathcal{R}_{2a}$. 

To bound the maximum per-step penalty $\widetilde{\mathcal{R}}_{k-1}$ globally, we observe that the worst-case error must be bounded by the sum of the maximum penalties from the small-gradient and large-gradient regimes. Therefore, we define the unified global remainder as:
\begin{equation}\label{eq:unified-remainder-3}
\widetilde{\mathcal{R}}_{k-1} := \mathcal{R}_{1} + \mathcal{R}_{2a}.
\end{equation}

Next, we unify the descent signal. Let $c$ be the smallest of the three per-case descent coefficients,
\begin{equation}\label{eq:unified-coeff}
c := \min\Big\{ \tfrac{1}{2},\ \tfrac{c_0}{2},\ c_2 \Big\} = \tfrac{c_0}{2},
\end{equation}
where the last equality uses $c_0=\tfrac{1}{8}M$ with $M=\min\{\tfrac{15}{16}\theta,\tfrac13\}$ and $c_2=\tfrac14$, so that $\tfrac{c_0}{2}=\tfrac{M}{16}\le\tfrac{1}{48}$ is the minimum. Since $c\le\tfrac12$, $c\le\tfrac{c_0}{2}$, and $c\le c_2$, each case's descent bound in \eqref{eq:recall-case1}--\eqref{eq:recall-case2b} lower-bounds $c\,\eta_{k-1}\min\{\|\nabla f(\vx_{k-1})\|^2,\gamma\|\nabla f(\vx_{k-1})\|\}$: in Case~1 the quadratic term dominates $\min\{\cdot\}$ and $\tfrac12\ge c$, while in Cases~2a and~2b the linear term dominates $\min\{\cdot\}$ and $\tfrac{c_0}{2},c_2\ge c$.

Comparing the isolated descent inequalities \eqref{eq:recall-case1}, \eqref{eq:recall-case2a}, and \eqref{eq:recall-case2b}, we observe the algorithm's geometric behavior: it guarantees a strictly quadratic descent proportional to $\|\nabla f(\vx_{k-1})\|^2$ when the true gradient is small (Case 1), and a linear descent proportional to $\gamma\|\nabla f(\vx_{k-1})\|$ when the true gradient is large (Cases 2a and 2b).

Because the smaller of these two terms always correctly lower-bounds the descent in its respective regime, we can universally lower-bound the algorithmic progress across the entire state space using the generalized metric function:
\begin{equation}\label{eq:generalized-metric}
\min\Big\{ \|\nabla f(\vx_{k-1})\|^2, \gamma\|\nabla f(\vx_{k-1})\| \Big\}.
\end{equation}

Substituting this unified metric \eqref{eq:generalized-metric}, the minimum descent coefficient $c$, and the worst-case remainder $\widetilde{\mathcal{R}}_{k-1}$ from \eqref{eq:unified-remainder-3} into our regime-specific bounds yields the global single-step descent inequality, valid for all iterations $k \ge 1$:
\begin{equation}\label{eq:unified-descent}
\E[f(\vx_k) \mid \vx_{k-1}] \le f(\vx_{k-1}) - \eta_{k-1} c \min\Big\{ \|\nabla f(\vx_{k-1})\|^2, \gamma\|\nabla f(\vx_{k-1})\| \Big\} + \widetilde{\mathcal{R}}_{k-1}.
\end{equation}

\medskip\noindent\textbf{Summation and stationarity.}
We now sum the unified descent inequality over iterations to derive a stationarity guarantee.

Taking total expectation and using the tower property, the unified single-step inequality implies
\begin{equation}\label{eq:unified-descent-expectation-s}
\mathbb{E}\big[f(\vx_k)\big]
\le
\mathbb{E}\big[f(\vx_{k-1})\big]
-
\eta_{k-1} c\,
\mathbb{E}\!\left[
\min\!\left\{
\|\nabla f(\vx_{k-1})\|^2,\,
\gamma\|\nabla f(\vx_{k-1})\|
\right\}
\right]
+
\mathbb{E}\big[\widetilde{\mathcal{R}}_{k-1}\big].
\end{equation}

Summing \eqref{eq:unified-descent-expectation-s} over $k=1,\dots,T$ yields
\begin{equation}\label{eq:summed-descent}
\begin{aligned}
c
\sum_{k=1}^T
\eta_{k-1}
\mathbb{E}\!\left[
\min\!\left\{
\|\nabla f(\vx_{k-1})\|^2,\,
\gamma\|\nabla f(\vx_{k-1})\|
\right\}
\right]
\le
\mathbb{E}\big[f(\vx_0)\big]
-
\mathbb{E}\big[f(\vx_T)\big]
+
\sum_{k=1}^T
\widetilde{\mathcal{R}}_{k-1}.
\end{aligned}
\end{equation}

Assuming that the objective function is bounded below by $f_\star$, we have
$\mathbb{E}[f(\vx_T)] \ge f_\star$, and therefore
\begin{equation}\label{eq:summed-descent-final-s}
c
\sum_{k=1}^T
\eta_{k-1}
\mathbb{E}\!\left[
\min\!\left\{
\|\nabla f(\vx_{k-1})\|^2,\,
\gamma\|\nabla f(\vx_{k-1})\|
\right\}
\right]
\le
f(\vx_0) - f_\star
+
T\cdot\widetilde{\mathcal{R}}_{k-1},
\end{equation}

Dividing both sides of \eqref{eq:summed-descent-final-s} by
$c T \eta_{k-1}$ gives
\begin{equation}\label{eq:average-stationarity-s}
\frac{
\sum_{k=1}^T
\eta_{k-1}
\mathbb{E}\!\left[
\min\!\left\{
\|\nabla f(\vx_{k-1})\|^2,\,
\gamma\|\nabla f(\vx_{k-1})\|
\right\}
\right]
}{
T \eta_{k-1}
}
\le
\frac{f(\vx_0) - f_\star}{c T \eta_{k-1}}
+
\frac{\widetilde{\mathcal{R}}_{k-1}
}{c \eta_{k-1}}.
\end{equation}

In particular, for a constant step size $\eta_{k-1}=\eta$, inequality
\eqref{eq:average-stationarity-s} implies
\begin{equation}\label{eq:average-stationarity-constant-s}
\frac{1}{T}
\sum_{k=1}^T
\mathbb{E}\!\left[
\min\!\left\{
\|\nabla f(\vx_{k-1})\|^2,\,
\gamma\|\nabla f(\vx_{k-1})\|
\right\}
\right]
\le
\frac{f(\vx_0) - f_\star}{c \eta T}
+
\frac{1}{c \eta}
\widetilde{\mathcal{R}}_{k-1}.
\end{equation}

Define $f(\vx_0) - f_\star := F_0$, then \eqref{eq:average-stationarity-constant-s} becomes,
\begin{equation}\label{eq:average-stationarity-mod-s}
\frac{1}{T}
\sum_{k=1}^T
\mathbb{E}\!\left[
\min\!\left\{
\|\nabla f(\vx_{k-1})\|^2,\,
\gamma\|\nabla f(\vx_{k-1})\|
\right\}
\right]
\le
\underbrace{\frac{F_0}{c \eta T}}_{T_1}
+
\underbrace{\frac{1}{c \eta}\,\widetilde{\mathcal{R}}_{k-1}}_{T_2}.
\end{equation}

Consider the explicit expansion of $T_1 + T_2$. Recalling that the gradient-dependent variance in Cases 2a and 2b is fully absorbed, the maximum remainder is bounded by $\widetilde{\mathcal{R}}_{k-1} \le \mathcal{R}_1 + \mathcal{R}_{2a}$:
\begin{align}
T_1 + T_2
&= \frac{F_0}{c \eta T} + \frac{1}{c \eta}\,\Big( \mathcal{R}_1 + \mathcal{R}_{2a} \Big) \nonumber\\[4pt]
&= \frac{F_0}{c \eta T}
+ \frac{1}{c}
\left[
\frac{C_{bias}(\theta, \delta) \sigma^{2\alpha}}{2 \gamma^{2(\alpha-1)}}
\right] \nonumber\\[4pt]
&\quad +
\frac{L\,\eta^{s}}{c (1+s)}
\Bigg(
2^s \theta^{1+s}\gamma^{1+s}
+ 2^{s}\sigma^{1+s}
+ C_{tail}(\theta, \delta) \frac{\sigma^\alpha}{\gamma^{\alpha-s-1}}
+ 2^s \delta^{1+s} \sigma^{1+s}
\Bigg) .
\label{eq:detailed-form-s}
\end{align}

We observe that in the last bracket of \eqref{eq:detailed-form-s}, the term $2^s \theta^{1+s}\gamma^{1+s}$ is the dominant factor as $\gamma$ increases. The term proportional to $\gamma^{-(\alpha-s-1)}$ decays (since $\alpha > 1+s$), and the remaining noise terms are constant with respect to $\gamma$.
Therefore, assuming $\gamma \ge 1$, we can upper bound the bracketed expression as:
\begin{align*}
&2^s \theta^{1+s}\gamma^{1+s}
+ 2^{s}\sigma^{1+s}
+ C_{tail}(\theta, \delta) \frac{\sigma^\alpha}{\gamma^{\alpha-s-1}}
+ 2^s \delta^{1+s} \sigma^{1+s} \\
&\qquad= 
\gamma^{1+s}
\Bigg[
2^s \theta^{1+s}
+ 
\frac{2^{s}\sigma^{1+s}}{\gamma^{1+s}}
+ 
\frac{C_{tail}(\theta, \delta)\sigma^\alpha}{\gamma^{\alpha}}
+ 
\frac{2^s \delta^{1+s} \sigma^{1+s}}{\gamma^{1+s}}
\Bigg]  \\
&\qquad\le 
K\,\gamma^{1+s},
\end{align*}
where $K$ is a constant depending on $\sigma, \delta, \theta, s$ but independent of $\gamma$ and $T$.
Substituting this bound into \eqref{eq:detailed-form-s}, we obtain the simplified upper bound:
\begin{equation}
\label{eq:abstract-form-s}
T_1 + T_2
\le
\frac{C_1}{\eta T}
+ \frac{C_2}{\gamma^{2(\alpha-1)}}
+ C_3\,\eta^{s}\gamma^{1+s},
\end{equation}
where we define the constants:
\[
C_1 = \frac{F_0}{c},
\qquad
C_2 = \frac{C_{bias}(\theta, \delta) \sigma^{2\alpha}}{2 c},
\qquad
C_3 = \frac{L K}{c(1+s)} .
\]

Substituting the parameter choices $\eta = \left(\frac{C_1}{C_3}\right)^{\!\frac{1}{1+s}} T^{-\frac{1}{1+s}} \gamma^{-1}$ and $\gamma = \left(\frac{C_2}{C_4}\right)^{\!\frac{1}{2\alpha-1}} T^{\frac{s}{(1+s)(2\alpha-1)}}$ (where $C_4 = 2(C_1^{s}C_3)^{\frac{1}{1+s}}$) into inequality \eqref{eq:abstract-form-s}, we obtain
\[
\begin{aligned}
T_1 + T_2
&\le
\frac{C_1}{\eta T}
+ \frac{C_2}{\gamma^{2(\alpha-1)}}
+ C_3\, \eta^{s}\gamma^{1+s}
=
\frac{C_1}{T}
\left[
\left(\frac{C_1}{C_3}\right)^{\!\frac{1}{1+s}}
T^{-\frac{1}{1+s}}\gamma^{-1}
\right]^{-1}
+ \frac{C_2}{\gamma^{2(\alpha-1)}}
+ C_3
\left[
\left(\frac{C_1}{C_3}\right)^{\!\frac{1}{1+s}}
T^{-\frac{1}{1+s}}\gamma^{-1}
\right]^{s}
\gamma^{1+s}.
\end{aligned}
\]

Simplifying the first and third terms jointly gives
\[
\begin{aligned}
T_1 + T_2
&=
C_1^{\frac{s}{1+s}} C_3^{\frac{1}{1+s}}
\gamma\, T^{-\frac{s}{1+s}}
+ \frac{C_2}{\gamma^{2(\alpha-1)}} \
+ C_1^{\frac{s}{1+s}} C_3^{\frac{1}{1+s}}
\gamma\, T^{-\frac{s}{1+s}}.
\end{aligned}
\]

Combining the identical contributions yields
\[
T_1 + T_2
=
2\left(C_1^{s}C_3\right)^{\!\frac{1}{1+s}}
\gamma\, T^{-\frac{s}{1+s}}
+ \frac{C_2}{\gamma^{2(\alpha-1)}} .
\]

By the definition \(C_4 = 2(C_1^{s}C_3)^{1/(1+s)}\), this simplifies to
\begin{equation}
T_1 + T_2
\le
\frac{C_2}{\gamma^{2(\alpha-1)}}
+ C_4 \gamma\, T^{-\frac{s}{1+s}}.
\label{eq:rate-intermediate-s}
\end{equation}

Finally, substituting the choice of $\gamma$ into \eqref{eq:rate-intermediate-s} gives
\[
\begin{aligned}
T_1 + T_2
&\le
C_2
\left[
\left(\frac{C_2}{C_4}\right)^{\!\frac{1}{2\alpha-1}}
T^{\frac{s}{(1+s)(2\alpha-1)}}
\right]^{-2(\alpha-1)}  
+ C_4
\left[
\left(\frac{C_2}{C_4}\right)^{\!\frac{1}{2\alpha-1}}
T^{\frac{s}{(1+s)(2\alpha-1)}}
\right]
T^{-\frac{s}{1+s}},
\end{aligned}
\]
which simplifies identically in both exponents to
\begin{equation}\label{eqn:final_rate-s}
T_1 + T_2
=
\mathcal{O}\!\left( 
T^{-\frac{2s(\alpha-1)}{(1+s)(2\alpha-1)}}
\right).
\end{equation}

This establishes the desired convergence rate. Combining \eqref{eqn:final_rate-s} and \eqref{eq:average-stationarity-mod-s}, we obtain
\begin{equation}
\label{eq:final-rate-clean}
\frac{1}{T}\sum_{k=1}^{T} 
\mathbb{E}\!\left[\min\!\left\{\|\nabla f(\vx_k)\|^2,\,
\gamma\|\nabla f(\vx_k)\|\right\}\right]
=
\mathcal{O}\!\left(
T^{-\frac{2s(\alpha-1)}{(1+s)(2\alpha-1)}}
\right).
\end{equation}
\end{proof}

\section{Convergence of G-Clip under \texorpdfstring{$s$}{s}-H\"older and Heavy-Tailed Noise}
\label{app:gclip}

We analyse G-Clip under $(L,s)$-H\"older continuous gradients and heavy-tailed noise across
\emph{both} exponent regimes, $1+s\le\alpha$ and $1+s>\alpha$. Throughout, $\gamma>0$ is the
clipping threshold and $\theta\in(0,1)$ is a fixed gradient-threshold parameter, and every step is
split into a \emph{small-gradient case} $\|\nabla f(\vx)\|\le\theta\gamma$ and a
\emph{large-gradient case} $\|\nabla f(\vx)\|>\theta\gamma$. The two exponent regimes use the same
algorithm, the same case split, and the same set of auxiliary lemmas; the only place they diverge is
the $(1+s)$-th moment bound of Lemma~\ref{lem:gclip-bv}, which is stated separately for each regime.
That single difference is exactly the step that controls $\|\nabla f(\vx)\|^{1+s}$ in the
small-gradient case, and it is what yields the two different rates.


\medskip 
\begin{theorem}[Non-convex convergence of G-Clip in both regimes]\label{thm:gclip}
Suppose $f:\R^d\to\R$ is bounded below by $f^\star$, set $F_0:=f(\vx_0)-f^\star$, and let $\nabla f$
be $(L,s)$-H\"older continuous for some $s\in(0,1]$. Let the stochastic gradients satisfy
Assumption~\ref{ass:noise-s} with heavy-tail index $\alpha\in(1,2]$ and noise scale $\sigma$, and let
$\{\vx_k\}_{k=0}^{T-1}$ be generated by G-Clip run for $T$ iterations at a constant step size and
threshold. There exists a constant $C_1$ and parametric families of constants $C_2,C_3,C_4$,
$C_3'',C_4''$, and $\gamma_0$ (defined explicitly in the remark below). Then the following hold, for
every $T\ge1$.

\medskip
\noindent\textbf{(i) (For $1+s\le\alpha$.)}\; With
\[
\gamma=\max\Big\{\gamma_0,\ (C_2/C_4)^{\frac{1}{2\alpha-1}}T^{\frac{s}{(1+s)(2\alpha-1)}}\Big\},\qquad
\eta=\min\Big\{\Big(\tfrac{5M\theta(1+s)}{16L}\Big)^{1/s}\gamma^{\frac{1-s}{s}},\
(C_1/C_3)^{\frac1{1+s}}\gamma^{-1}T^{-\frac1{1+s}}\Big\},
\]
\[
\frac1T\sum_{k=1}^T\E\Big[\min\big\{\|\nabla f(\vx_k)\|^2,\ \gamma\|\nabla f(\vx_k)\|\big\}\Big]
= O\Big(T^{-\frac{2s(\alpha-1)}{(1+s)(2\alpha-1)}}\Big).
\]

\medskip
\noindent\textbf{(ii) (For $1+s > \alpha$.)}\; With
\[
\gamma=\max\Big\{\gamma_0,\ (C_2/C_4'')^{\frac{1+s}{(\alpha-1)+s(2\alpha-1)}}T^{\frac{s}{(\alpha-1)+s(2\alpha-1)}}\Big\},\qquad
\eta=\min\Big\{\Big(\tfrac{5M\theta(1+s)}{16L}\Big)^{1/s}\gamma^{\frac{1-s}{s}},\
(C_1/C_3'')^{\frac1{1+s}}\gamma^{-\frac{1+s-\alpha}{1+s}}T^{-\frac1{1+s}}\Big\},
\]
\[
\frac1T\sum_{k=1}^T\E\Big[\min\big\{\|\nabla f(\vx_k)\|^2,\ \gamma\|\nabla f(\vx_k)\|\big\}\Big]
= O\Big(T^{-\frac{2s(\alpha-1)}{(\alpha-1)+s(2\alpha-1)}}\Big).
\]
Taking $\gamma$ as a maximum and $\eta$ as a minimum in each part makes the threshold and
absorption requirements hold at every $T\ge1$; the final entry of each dominates for large $T$, where
$(\eta,\gamma)$ reduce to the schedule that gives the displayed bound.
\end{theorem}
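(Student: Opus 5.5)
We follow the template of the proof of Theorem~\ref{thm:delta-gclip-s-3}, now with $\delta=0$. The starting point is the H\"older descent inequality \eqref{eq:sholder-descent-lemma} applied to $\vx_k=\vx_{k-1}-\eta\hat{\vg}(\vx_{k-1})$. After conditioning, as in \eqref{eq:main-descent-sholder-start-3}, we must control two terms: the inner product $\langle\nabla f,\E[\hat{\vg}\mid\vx]\rangle$ and the moment $\E\|\hat{\vg}\|^{1+s}$. We split each step into the small-gradient case $\|\nabla f(\vx)\|\le\theta\gamma$ and the large-gradient case $\|\nabla f(\vx)\|>\theta\gamma$. Case 2b of the $\delta$-GClip analysis is not needed, because with $\delta=0$ region $C$ is empty and $\|\hat{\vg}\|\le\gamma$ always.

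\textbf{Small-gradient case.} Lemma~\ref{lem:clipped-bias-delta-s-3} at $\delta=0$ gives the bias bound $\|B\|^2\le 2(1-\theta)^{-2\alpha}\sigma^{2\alpha}\gamma^{2(1-\alpha)}$. Using Young's inequality as in \eqref{eq:case1-biasbound-s-3}, this yields descent $-\tfrac{\eta}{2}\|\nabla f\|^2$ plus an $\eta\,\sigma^{2\alpha}\gamma^{-2(\alpha-1)}$ error term. For the moment term the two regimes diverge.
\begin{itemize}
\item In regime (i), $1+s\le\alpha$, Lemma~\ref{lem:clipped-bias-delta-s-3}(2) applies directly. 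It gives $\E\|\hat{\vg}\|^{1+s}\lesssim (\theta\gamma)^{1+s}+\sigma^{1+s}+\sigma^\alpha\gamma^{1+s-\alpha}\le K\gamma^{1+s}$ for $\gamma\ge1$.
\item In regime (ii), $1+s>\alpha$, that lemma is unavailable, since $s>\alpha-1$. Instead we use clipping: $\|\hat{\vg}\|^{1+s}\le\gamma^{1+s-\alpha}\|\hat{\vg}\|^\alpha$ together with $\|\hat{\vg}\|\le\|\vg\|\le\|\nabla f\|+\|\vxi\|$. This gives $\E\|\hat{\vg}\|^{1+s}\le 2^{\alpha-1}\gamma^{1+s-\alpha}(\|\nabla f\|^\alpha+\sigma^\alpha)$. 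The term $\gamma^{1+s-\alpha}\|\nabla f\|^\alpha\le\theta^\alpha\gamma^{1+s}$ is still of order $\gamma^{1+s}$.
\end{itemize}
For regime (ii) I would rather keep the finer form. I would absorb $\eta^{1+s}\gamma^{1+s-\alpha}\|\nabla f\|^\alpha$ into $\tfrac{\eta}{4}\|\nabla f\|^2$ via Young's inequality with exponents $2/\alpha$ and $2/(2-\alpha)$. This leaves a remainder $\propto\eta^{(1+s)\cdot\frac{2}{2-\alpha}}\cdots$. I would then check which choice reproduces the stated schedule $\eta\propto\gamma^{-\frac{1+s-\alpha}{1+s}}T^{-\frac1{1+s}}$.

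That schedule is exactly what balances $F_0/(\eta T)$ against $\eta^s\gamma^{1+s-\alpha}\sigma^\alpha$. So the intended per-step bound is
\[
\E\|\hat{\vg}\|^{1+s}\lesssim\gamma^{1+s-\alpha},
\]
which suggests that the paper's Lemma~\ref{lem:gclip-bv} handles the $\|\nabla f\|^\alpha$ part by absorption, or by treating $\|\nabla f\|\le\theta\gamma$ more carefully. This is the main obstacle. The target is a regime-(ii) remainder of the form $C_3''\eta^s\gamma^{1+s-\alpha}$ rather than $\eta^s\gamma^{1+s}$.

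\textbf{Large-gradient case.} Lemma~\ref{lem:inner-gclip-case2-3} holds for $\delta=0$ once $\gamma\ge\gamma_0$, where $\gamma_0$ collects its thresholds, and gives inner product $\ge\tfrac M8\gamma\|\nabla f\|$. The moment satisfies $\E\|\hat{\vg}\|^{1+s}\le\gamma^{1+s}\le\gamma^{1+s}\cdot\|\nabla f\|/(\theta\gamma)$. The penalty $\tfrac{L\eta^{1+s}}{1+s}\gamma^{s}\theta^{-1}\|\nabla f\|$ is absorbed into half the descent provided $\eta^s\le\tfrac{5M\theta(1+s)}{16L}\cdot\gamma^{1-s}$ up to constants. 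This is the first entry of the stated $\eta$, and it leaves no remainder. The same argument works in both regimes.

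\textbf{Assembly and tuning.} We unify the two cases with the metric $\min\{\|\nabla f\|^2,\gamma\|\nabla f\|\}$ and coefficient $c=\min\{\tfrac14,\tfrac M{16}\}$, telescope, and use $f\ge f^\star$. This gives
\[
\frac1T\sum\E[\min\{\cdot\}]\le\frac{C_1}{\eta T}+\frac{C_2}{\gamma^{2(\alpha-1)}}+\text{(moment term)}.
\]
\begin{itemize}
\item In regime (i) the moment term is $C_3\eta^s\gamma^{1+s}$, and the tuning is identical to that of Theorem~\ref{thm:delta-gclip-s-3}.
\item In regime (ii) the moment term is $C_3''\eta^s\gamma^{1+s-\alpha}$. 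Optimizing $\eta$ gives $C_4''\gamma^{\frac{1+s-\alpha}{1+s}}T^{-\frac s{1+s}}$. Balancing this against $C_2\gamma^{-2(\alpha-1)}$ gives $\gamma\propto T^{\frac{s}{(\alpha-1)+s(2\alpha-1)}}$, since $2(\alpha-1)+\frac{1+s-\alpha}{1+s}=\frac{(\alpha-1)+s(2\alpha-1)}{1+s}$. The resulting rate is $T^{-\frac{2s(\alpha-1)}{(\alpha-1)+s(2\alpha-1)}}$.
\end{itemize}
Finally, taking $\gamma$ as a maximum and $\eta$ as a minimum ensures that the threshold and absorption conditions hold for every $T$, while the last entries dominate asymptotically.
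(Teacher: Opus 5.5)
Your skeleton (H\"older descent, the split at $\|\nabla f\|=\theta\gamma$, linear descent $\propto\gamma\|\nabla f\|$ with the absorption condition $\eta^s\lesssim\gamma^{1-s}$, the unified metric, and two-stage tuning) is the paper's, and part~(i) goes through as you describe. The gap is in part~(ii), at exactly the step you call the main obstacle. You never resolve it, and in the assembly you simply declare the regime-(ii) moment contribution to be $C_3''\eta^s\gamma^{1+s-\alpha}$.

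The per-step bound you call ``intended'', $\E\|\hat{\vg}\|^{1+s}\lesssim\gamma^{1+s-\alpha}$, is false. Take $\vxi=0$ and $\|\nabla f\|=\theta\gamma$: the left side is $(\theta\gamma)^{1+s}$. So no lemma can supply it. What actually happens is that the gradient-dependent part of the moment is absorbed into the quadratic descent, at the price of a Young remainder. That remainder must be carried into the final bound and shown to be lower order at the chosen schedule.

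The paper does this by peeling on the unclipped region: $\E[\|\vxi\|^{1+s}\mathbf 1_A]\le((1+\theta)\gamma)^{1+s-\alpha}\sigma^\alpha$, which gives $\E\|\hat{\vg}\|^{1+s}\le 2^s\|\nabla f\|^{1+s}+C_{var}(\theta)\sigma^\alpha\gamma^{1+s-\alpha}$. It then absorbs $2^s\|\nabla f\|^{1+s}$ with Young exponents $\tfrac{2}{1+s},\tfrac{2}{1-s}$, and checks that the leftover $\eta^{2s/(1-s)}$ term decays faster than the rate.

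Your own candidate also closes the gap once you finish it. The bound $\E\|\hat{\vg}\|^{1+s}\le 2^{\alpha-1}\gamma^{1+s-\alpha}(\|\nabla f\|^\alpha+\sigma^\alpha)$ is correct. Young with exponents $\tfrac2\alpha,\tfrac{2}{2-\alpha}$ is legitimate here since $\alpha<1+s\le 2$, but it must pair $\eta^{\alpha/2}\|\nabla f\|^\alpha$ against $\eta^{1+s-\alpha/2}\gamma^{1+s-\alpha}$.
\begin{itemize}
\item The remainder is therefore $\propto\eta^{\frac{2(1+s)-\alpha}{2-\alpha}}\gamma^{\frac{2(1+s-\alpha)}{2-\alpha}}$, not $\eta^{2(1+s)/(2-\alpha)}$.
\item After dividing by $c\eta$ it equals a constant times $(\eta^s\gamma^{1+s-\alpha})^{2/(2-\alpha)}$.
\item At the stated schedule $\eta^s\gamma^{1+s-\alpha}\asymp T^{-2s(\alpha-1)/((\alpha-1)+s(2\alpha-1))}$, and $\tfrac{2}{2-\alpha}>1$, so this term is strictly lower order.
\item Your $\sigma^\alpha$-coefficient satisfies $2^{\alpha-1}<C_{var}(\theta)$, so the paper's $C_3''$ still dominates it.
\end{itemize}
This route is slightly cleaner than the paper's (no region split in the moment bound, no separate $s=1$ case), but the check has to be written out.

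A secondary point concerns constants. Lemma~\ref{lem:inner-gclip-case2-3} only gives $\tfrac M8\gamma\|\nabla f\|$. Under the theorem's first step-size entry, $\eta^s=\tfrac{5M\theta(1+s)}{16L}\gamma^{1-s}$, the large-gradient penalty can reach $\tfrac{5M}{16}\eta\gamma\|\nabla f\|$, which exceeds that descent. So ``up to constants'' does not reproduce the stated $\eta$, $\gamma_0$, and $c=\tfrac{5M}{16}$. The paper instead uses the sharper $\delta=0$ bound of Lemma~\ref{lem:gclip-corr}, with constant $\tfrac{5M}{8}$, obtained from $\Pr(A)+\Pr(B)=1$ and $\E[\|\vxi\|\mathbf 1_S]\le\sigma$. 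This leaves the asymptotic rate intact, since that entry is eventually inactive.

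Finally, Lemmas~\ref{lem:clipped-bias-delta-s-3} and~\ref{lem:inner-gclip-case2-3} are stated under $\alpha\ge 1+s$. The parts you invoke in regime~(ii) do not use that hypothesis, but you should say so.
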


The proof of the above theorem is given in Appendix \ref{subsec:gclip-proof}.

\medskip 
\begin{remark}
    The rate for $1+s\le\alpha$ for GClip as derived above is the same as the rate for $\delta$-GClip as reported in Table ~\ref{tab:comparison}.
    The rate for GClip derived above specialized to the case $s=1$ and $\alpha \le 2$ recovers the known rate for GClip from ~\cite{suvrit1912} as was also reviewed in Table ~\ref{tab:comparison}.
    It is interesting to note that there is a discontinuous jump in the rate of GClip at the boundary $\alpha = 1+s$.
\end{remark}

\medskip 
\begin{remark}[Explicit constants]\label{rem:gclip-const}
Fix $\theta\in(0,1)$ and set $M=\min\{\tfrac{15}{16}\theta,\tfrac13\}$. The small-gradient case
contributes a quadratic descent with coefficient $\tfrac12$ in regime~(i) and $\tfrac14$ in
regime~(ii); the large-gradient case contributes a linear descent with coefficient $\tfrac{5M}{16}$;
since $\tfrac{5M}{16}\le\tfrac14\le\tfrac12$, the unified coefficient is $c=\tfrac{5M}{16}$. The
same coefficient $c$ also governs the step-size absorption condition
$\eta^s\le\tfrac{5M\theta(1+s)}{16L}\gamma^{1-s}$ used in both regimes to keep the large-gradient
smoothness penalty below half the linear descent signal. The
bias, tail, and variance constants are
\begin{equation}\label{eq:C_const}
    C_{bias}(\theta)=\frac{2}{(1-\theta)^{2\alpha}},\qquad
    C_{tail}(\theta)=\frac{1}{(1-\theta)^\alpha},\qquad
    C_{var}(\theta)=2^s(1+\theta)^{1+s-\alpha}+\frac{1}{(1-\theta)^\alpha},
\end{equation}

the Young constant used in regime~(ii) is $C_0=\dfrac{2^s\big(2(1+s)\big)^{(1+s)/2}}{1+s}$, and the
constant threshold floor is
$\gamma_0=\max\big\{1,\ \tfrac{32}{3M\theta}\,\sigma\big\}$.
The shared schedule constants are $C_1=\dfrac{F_0}{c}$ and $C_2=\dfrac{C_{bias}(\theta)\,\sigma^{2\alpha}}{2c}$.
For regime~(i) we additionally use the variance-bounding constant $K$ (independent of $\gamma,T$),
defined for $\gamma\ge1$ by
$2^s\theta^{1+s}+2^s\sigma^{1+s}\gamma^{-(1+s)}+C_{tail}(\theta)\sigma^\alpha\gamma^{-\alpha}\le K$,
together with $C_3=\dfrac{LK}{c(1+s)}$ and $C_4=2\,(C_1^s C_3)^{\frac1{1+s}}$.
For regime~(ii), the variance term scales as $\gamma^{1+s-\alpha}$ rather than $\gamma^{1+s}$, which
changes the balance of exponents; the corresponding constants are
\[
C_3''=\frac{L\,C_{var}(\theta)\,\sigma^\alpha}{c(1+s)},\qquad C_4''=2\,(C_1^s C_3'')^{\frac1{1+s}}.
\]
(One can check directly that balancing $C_1/(\eta T)$ against $C_3''\eta^s\gamma^{1+s-\alpha}$ and then
against $C_2\gamma^{-2(\alpha-1)}$ yields the exponent $\tfrac{s}{(\alpha-1)+s(2\alpha-1)}$ on $T$ for
$\gamma$ above, consistent with the rate in part~(ii); the Young's-inequality remainder term
$\eta^{2s/(1-s)}$ is dominated by the other three terms at this schedule, exactly as in the proof of
part~(ii) below.)
\end{remark}


\subsection{Auxiliary Lemmas}\label{subsec:gclip-lemmas}

The lemmas below are grouped by the case they serve. Lemmas~\ref{lem:gclip-prob}
and~\ref{lem:gclip-bv} are used in the small-gradient case; Lemmas~\ref{lem:gclip-corr} and \ref{lem:gclip-c2mom}
in the large-gradient case. All are common to both regimes except the moment bound of
Lemma~\ref{lem:gclip-bv}, which is stated once per regime.

\medskip
\noindent\paragraph{Lemmas for the small-gradient case ($\|\nabla f(\vx)\|\le\theta\gamma$).}

\begin{lemma}[Conditional probability of the clipped region]\label{lem:gclip-prob}
Let Assumption~\ref{ass:noise-s} hold with $\alpha\in(1,2]$. If $\|\nabla f(\vx)\|\le\theta\gamma$ for a
fixed $\theta\in(0,1)$ and $B=\{\|\vg\|>\gamma\}$, then
$\Pr(B\mid\vx)\le\sigma^\alpha/((1-\theta)^\alpha\gamma^\alpha)$.
\end{lemma}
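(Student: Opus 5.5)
The plan is to reduce the event $B=\{\|\vg\|>\gamma\}$ to a statement about the noise alone and then apply Markov's inequality with the $\alpha$-th moment from Assumption~\ref{ass:noise-s}. This is the same mechanism behind \eqref{eq:prob_B_lem} in Lemma~\ref{lem:prob_bounds-3}; here there is no upper cutoff $\gamma/\delta$, so the argument is slightly simpler.

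\textbf{Step 1: inclusion of events.} Write $\vg=\nabla f(\vx)+\vxi$. By the triangle inequality, $\|\vg\|\le\|\nabla f(\vx)\|+\|\vxi\|\le\theta\gamma+\|\vxi\|$. Hence on $B$ we have $\gamma<\theta\gamma+\|\vxi\|$, that is, $\|\vxi\|>(1-\theta)\gamma$. This gives the inclusion
\[
B\subseteq\{\|\vxi\|>(1-\theta)\gamma\},
\]
valid for every $\vx$ with $\|\nabla f(\vx)\|\le\theta\gamma$. Since $\theta\in(0,1)$, the threshold $(1-\theta)\gamma$ is strictly positive.

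\textbf{Step 2: Markov's inequality.} Conditioning on $\vx$, the gradient $\nabla f(\vx)$ is fixed, so the inclusion holds pathwise in the noise. Applying Markov's inequality to the nonnegative variable $\|\vxi\|^\alpha$ gives
\[
\Pr(B\mid\vx)\le\Pr\big(\|\vxi\|^\alpha>(1-\theta)^\alpha\gamma^\alpha\mid\vx\big)\le\frac{\E[\|\vxi\|^\alpha\mid\vx]}{(1-\theta)^\alpha\gamma^\alpha}\le\frac{\sigma^\alpha}{(1-\theta)^\alpha\gamma^\alpha}.
\]
The last inequality uses Assumption~\ref{ass:noise-s} in its conditional form, $\E[\|\vxi\|^\alpha\mid\vx]\le\sigma^\alpha$.

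\textbf{Main point of care.} The only delicate step is this conditional use of the moment bound. Assumption~\ref{ass:noise-s} is stated pointwise in $\vx$, and Assumption~\ref{ass:unbiased} makes the fresh noise at the current iterate independent of the past filtration. I would make this explicit, so that the bound holds conditionally on $\vx$ even when $\vx$ is itself a random iterate. Beyond that, the argument needs only the triangle inequality and Markov's inequality.
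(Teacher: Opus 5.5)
Your proposal is correct and follows the paper's proof: on $B$ the triangle inequality (equivalently, the paper's reverse triangle inequality $\|\vxi\|\ge\|\vg\|-\|\nabla f(\vx)\|$) gives $\|\vxi\|>(1-\theta)\gamma$, and Markov's inequality on $\|\vxi\|^\alpha$ with Assumption~\ref{ass:noise-s} yields the bound. Your remark on using the moment bound conditionally on a random iterate is a sensible clarification that the paper leaves implicit.
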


The proof is provided in \autoref{subsec:gclip-prob-sec}.

\begin{lemma}[Conditional bias and moment for $\|\nabla f(\vx)\|\le\theta\gamma$]\label{lem:gclip-bv}
Let Assumption~\ref{ass:noise-s} hold, and suppose $\|\nabla f(\vx)\|\le\theta\gamma$ for a fixed
$\theta\in(0,1)$. Let $B(\vx):=\E[\hat{\vg}(\vx)\mid\vx]-\nabla f(\vx)$ be the conditional bias. Then
the squared bias obeys, \textbf{in both regimes},
\begin{equation}\label{eq:gclip-bias}
\|B(\vx)\|^2 \;\le\; C_{bias}(\theta)\,\sigma^{2\alpha}\gamma^{2(1-\alpha)},
\end{equation}
and the conditional $(1+s)$-th moment satisfies:
\begin{align}
\textbf{(i)}\ \ 1+s\le\alpha:\quad
&\E\big[\|\hat{\vg}(\vx)\|^{1+s}\mid\vx\big]\le
2^s\|\nabla f(\vx)\|^{1+s}+2^s\sigma^{1+s}+C_{tail}(\theta)\,\sigma^\alpha\gamma^{1+s-\alpha},
\label{eq:gclip-mom-i}\\[2pt]
\textbf{(ii)}\ \ 1+s>\alpha:\quad
&\E\big[\|\hat{\vg}(\vx)\|^{1+s}\mid\vx\big]\le
2^s\|\nabla f(\vx)\|^{1+s}+C_{var}(\theta)\,\sigma^\alpha\gamma^{1+s-\alpha}.
\label{eq:gclip-mom-ii}
\end{align}
Where $C_{bias}(\theta), C_{tail}(\theta), C_{var}(\theta)$ are as defined in equation \ref{eq:C_const}.
\end{lemma}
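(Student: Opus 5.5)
The plan is to work conditionally on $\vx$ throughout. I split the sample space into $A=\{\|\vg\|\le\gamma\}$, where $\hat{\vg}=\vg$, and $B=\{\|\vg\|>\gamma\}$, where $\hat{\vg}=\gamma\vg/\|\vg\|$. I write $\vxi=\vg-\nabla f(\vx)$. The basic geometric fact I will use on $B$ follows from the reverse triangle inequality and $\|\nabla f(\vx)\|\le\theta\gamma$:
\[
\|\vxi\|\ge\|\vg\|-\|\nabla f(\vx)\|>(1-\theta)\gamma,\qquad \|\vg\|-\gamma\le \|\vg\|-\|\nabla f(\vx)\|\le\|\vxi\|.
\]
In particular $B\subseteq\{\|\vxi\|>(1-\theta)\gamma\}$.

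\emph{Bias.} By unbiasedness (Assumption~\ref{ass:unbiased}), $B(\vx)=\E[(\hat{\vg}-\vg)\mid\vx]=\E[(\hat{\vg}-\vg)\1_B\mid\vx]$. On $B$ we have $\|\hat{\vg}-\vg\|=\|\vg\|-\gamma\le\|\vxi\|$. Hence
\[
\|B(\vx)\|\le\E\big[\|\vxi\|\1_{\{\|\vxi\|>(1-\theta)\gamma\}}\mid\vx\big].
\]
I then write $\|\vxi\|=\|\vxi\|^\alpha/\|\vxi\|^{\alpha-1}$ and use $\|\vxi\|^{\alpha-1}>((1-\theta)\gamma)^{\alpha-1}$ on this event, together with Assumption~\ref{ass:noise-s}. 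This gives
\[
\|B(\vx)\|\le\sigma^\alpha(1-\theta)^{1-\alpha}\gamma^{1-\alpha}\le\sigma^\alpha\gamma^{1-\alpha}(1-\theta)^{-\alpha}.
\]
Squaring yields \eqref{eq:gclip-bias}, with room to spare against the factor $2$ in $C_{bias}(\theta)$. This bound does not depend on the relation between $\alpha$ and $1+s$, so it holds in both regimes.

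\emph{Moment, the $B$ part (common to both regimes).} On $B$, $\|\hat{\vg}\|=\gamma$, so $\E[\|\hat{\vg}\|^{1+s}\1_B\mid\vx]=\gamma^{1+s}\Pr(B\mid\vx)$. By Lemma~\ref{lem:gclip-prob} this is at most $C_{tail}(\theta)\sigma^\alpha\gamma^{1+s-\alpha}$.

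\emph{Moment, the $A$ part.} On $A$, $\|\hat{\vg}\|^{1+s}=\|\vg\|^{1+s}$. Convexity of $t\mapsto t^{1+s}$ gives
\[
\|\vg\|^{1+s}\le 2^s\|\nabla f(\vx)\|^{1+s}+2^s\|\vxi\|^{1+s}.
\]
The two regimes differ only in how $\E[\|\vxi\|^{1+s}\1_A\mid\vx]$ is controlled. In regime (i), $1+s\le\alpha$, so I drop the indicator and use Jensen/Lyapunov: $\E\|\vxi\|^{1+s}\le(\E\|\vxi\|^\alpha)^{(1+s)/\alpha}\le\sigma^{1+s}$. This gives \eqref{eq:gclip-mom-i}.

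In regime (ii), $1+s>\alpha$, the $(1+s)$-th moment of $\vxi$ may be infinite, so the indicator must be kept; this is the main obstacle. The key observation is that on $A$ the noise is automatically truncated: $\|\vxi\|\le\|\vg\|+\|\nabla f(\vx)\|\le(1+\theta)\gamma$. I then bound $\|\vxi\|^{1+s}=\|\vxi\|^\alpha\|\vxi\|^{1+s-\alpha}$. Since $1+s-\alpha>0$, the second factor is at most $((1+\theta)\gamma)^{1+s-\alpha}$ on $A$. Therefore
\[
2^s\E[\|\vxi\|^{1+s}\1_A\mid\vx]\le 2^s(1+\theta)^{1+s-\alpha}\sigma^\alpha\gamma^{1+s-\alpha}.
\]
Adding the $B$ contribution produces exactly $C_{var}(\theta)\sigma^\alpha\gamma^{1+s-\alpha}$, which is \eqref{eq:gclip-mom-ii}. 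There is no separate $\sigma^{1+s}$ term in this regime, consistent with the statement.
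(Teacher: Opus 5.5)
Your proposal is correct and follows the paper's proof essentially step for step: the same $A/B$ split, the same reduction of the bias to $\E[\|\vxi\|\1_{\{\|\vxi\|>(1-\theta)\gamma\}}]$ controlled by $\alpha$-moment truncation, the same $\gamma^{1+s}\Pr(B)$ bound on the clipped region, and the same regime split on $A$ (Lyapunov in (i); in (ii), the deterministic bound $\|\vxi\|\le(1+\theta)\gamma$ used to bound the factor $\|\vxi\|^{1+s-\alpha}$). The only cosmetic difference is that you use $(1-\theta)^{1-\alpha}\le(1-\theta)^{-\alpha}$ before squaring, whereas the paper uses $(1-\theta)^2\le 2$ after squaring; both bounds fit within $C_{bias}(\theta)$.
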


The proof is provided in \autoref{subsec:gclip-bv-sec}.

\medskip
\noindent\paragraph{Lemmas for the large-gradient case ($\|\nabla f(\vx)\|>\theta\gamma$).}




\begin{lemma}[Inner-product lower bound for the large-gradient case]\label{lem:gclip-corr}
Let Assumption~\ref{ass:noise-s} hold, let $M=\min\{\tfrac{15}{16}\theta,\tfrac13\}$, and suppose
$\|\nabla f(\vx)\|>\theta\gamma$ with $\gamma\ge\gamma_0$ (Remark~\ref{rem:gclip-const}). Then, in
\textbf{both regimes},
\begin{equation}\label{eq:gclip-corr}
\E\big[\langle\nabla f(\vx),\hat{\vg}(\vx)\rangle\mid\vx\big]\;\ge\;\frac{5M}{8}\,\gamma\,\|\nabla f(\vx)\|.
\end{equation}
\end{lemma}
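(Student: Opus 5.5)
The plan is to avoid any case split on whether clipping fires. I will write the clipped estimator as $\hat{\vg}(\vx)=\lambda\,\vg(\vx)$ with random factor $\lambda:=\min\{1,\gamma/\|\vg(\vx)\|\}\in(0,1]$, and $\vg=\nabla f(\vx)+\vxi$. Then, pointwise,
\[
\langle\nabla f(\vx),\hat{\vg}(\vx)\rangle=\lambda\|\nabla f(\vx)\|^2+\lambda\langle\nabla f(\vx),\vxi\rangle\;\ge\;\lambda\|\nabla f(\vx)\|^2-\|\nabla f(\vx)\|\,\|\vxi\|,
\]
using $\lambda\le1$ and Cauchy--Schwarz. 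Taking conditional expectations and using Jensen, $\E[\|\vxi\|\mid\vx]\le(\E[\|\vxi\|^\alpha\mid\vx])^{1/\alpha}\le\sigma$ by Assumption~\ref{ass:noise-s}. This gives
\[
\E[\langle\nabla f(\vx),\hat{\vg}(\vx)\rangle\mid\vx]\ge\|\nabla f(\vx)\|^2\,\E[\lambda\mid\vx]-\sigma\|\nabla f(\vx)\|.
\]
The noise penalty is then handled by the threshold floor. Since $\gamma\ge\gamma_0\ge\tfrac{32}{3M\theta}\sigma$, we have $\sigma\le\tfrac{3M\theta}{32}\gamma\le\tfrac{3M}{32}\gamma$, so the penalty is at most $\tfrac{3M}{32}\gamma\|\nabla f(\vx)\|$.

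For the main term I will lower bound $\E[\lambda\mid\vx]$ on a good event $G:=\{\|\vxi\|\le\tfrac1{16}\|\nabla f(\vx)\|\}$. On $G$ we have $\|\vg\|\le\tfrac{17}{16}\|\nabla f(\vx)\|$, hence
\[
\lambda\,\|\nabla f(\vx)\|\;\ge\;\min\Big\{\|\nabla f(\vx)\|,\ \tfrac{16}{17}\gamma\Big\}.
\]
In the large-gradient case $\|\nabla f(\vx)\|>\theta\gamma$, this is at least $\min\{\theta,\tfrac{16}{17}\}\gamma\ge M\gamma$. (Here I use $M\le\tfrac{15}{16}\theta$ and $M\le\tfrac13$; this is exactly where the $\tfrac{15}{16}$ in the definition of $M$ will be matched.) Therefore
\[
\|\nabla f(\vx)\|^2\E[\lambda\mid\vx]\ge M\gamma\|\nabla f(\vx)\|\,\Pr(G\mid\vx).
\]
To control the bad event I apply Markov's inequality with the first moment:
\[
\Pr(G^c\mid\vx)\le\frac{16\sigma}{\|\nabla f(\vx)\|}<\frac{16\sigma}{\theta\gamma}\le\frac{M}{2}\le\frac16 .
\]
The $\alpha$-th moment would also work, but the first moment avoids $\alpha\to1$ degeneracies. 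This gives $\Pr(G\mid\vx)\ge\tfrac56$.

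Combining the two pieces yields
\[
\E[\langle\nabla f(\vx),\hat{\vg}(\vx)\rangle\mid\vx]\ge\Big(\tfrac56-\tfrac{3}{32}\Big)M\gamma\|\nabla f(\vx)\|\ge\tfrac{5M}{8}\gamma\|\nabla f(\vx)\|,
\]
since $\tfrac56-\tfrac3{32}\approx0.74>0.625$. Because the argument only uses the $\alpha$-th moment through Jensen/Markov and never the relation between $1+s$ and $\alpha$, the bound holds in both regimes, as claimed.

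The main obstacle I anticipate is the constant bookkeeping. Specifically, I must make sure that the lower bound $\min\{\theta,\tfrac{16}{17}\}$ really dominates $M$ (it does, since $\tfrac{15}{16}\theta<\theta$ and $\tfrac13<\tfrac{16}{17}$), and that the floor $\gamma_0$ simultaneously kills both the Markov term and the $\sigma\|\nabla f\|$ penalty. If the margin turned out tighter than expected, I would shrink the radius of $G$ (e.g.\ $\tfrac1{32}$) or fall back on Lemma~\ref{lem:cutkoskyandmehta-ss} on the clipped event, as in Lemma~\ref{lem:inner-gclip-case2-3}.
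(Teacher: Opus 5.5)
Most of your argument is sound: the pointwise reduction $\hat{\vg}=\lambda\vg$ with $\langle\nabla f,\hat{\vg}\rangle\ge\lambda\|\nabla f\|^2-\|\nabla f\|\,\|\vxi\|$, the penalty bound $\sigma\|\nabla f\|\le\tfrac{3M}{32}\gamma\|\nabla f\|$, and the bound $\lambda\|\nabla f\|\ge M\gamma$ on $G$ are all fine. The gap is in the Markov step. The floor only gives $\gamma\ge\tfrac{32}{3M\theta}\sigma$, so
\[
\Pr(G^c\mid\vx)<\frac{16\sigma}{\theta\gamma}\le\frac{16}{\theta}\cdot\frac{3M\theta}{32}=\frac{3M}{2},
\]
not $\tfrac M2$. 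The value $\tfrac M2$ would require $\gamma\ge\tfrac{32}{M\theta}\sigma$, which is three times the actual floor. For every $\theta\ge\tfrac{16}{45}$ we have $M=\tfrac13$, so you only get $\Pr(G\mid\vx)\ge\tfrac12$, and your final coefficient becomes $\tfrac12-\tfrac3{32}=\tfrac{13}{32}<\tfrac58$.

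This is not just loose bookkeeping in the last line. Suppose you keep $\min\{\theta,\tfrac{16}{17}\}=\theta$ instead of weakening it to $M$, and use the sharper penalty $\tfrac{3M\theta}{32}$. Then at $\theta=\tfrac{16}{45}$ the method still yields only $\theta(\tfrac12-\tfrac1{32})=\tfrac16<\tfrac5{24}=\tfrac{5M}{8}$. Using the $\alpha$-th moment in Markov does not rescue it uniformly either, since $(3M/2)^\alpha\to 3M/2$ as $\alpha\to1$. So with radius $\tfrac1{16}$ the stated inequality does not follow.

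Your suggested fallback of shrinking the radius to $\tfrac1{32}$ goes the wrong way: a smaller radius enlarges $G^c$ and makes the Markov bound worse. Enlarging the radius repairs the argument. Take $G=\{\|\vxi\|\le\|\nabla f(\vx)\|\}$. On $G$ you have $\|\vg\|\le2\|\nabla f(\vx)\|$, hence $\lambda\|\nabla f(\vx)\|\ge\min\{\theta,\tfrac12\}\gamma\ge M\gamma$. Meanwhile $\Pr(G^c\mid\vx)\le\sigma/(\theta\gamma)\le\tfrac{3M}{32}\le\tfrac1{32}$. Together these give $(1-\tfrac1{32}-\tfrac3{32})M\gamma\|\nabla f(\vx)\|=\tfrac78M\gamma\|\nabla f(\vx)\|$; in fact any radius in $[\tfrac19,2]$ suffices.

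With that change your proof is a correct alternative to the paper's. The paper splits on the clipping event. On the unclipped part it uses $\|\nabla f\|^2\ge\theta\gamma\|\nabla f\|$, on the clipped part it uses the Cutkosky--Mehta directional lemma, and it ends with $M\gamma\|\nabla f\|-\sigma\|\nabla f\|-\tfrac83\gamma\sigma$. Your route needs no directional lemma, only $0<\lambda\le1$ and a good-noise event, at the cost of a Markov estimate whose constant is exactly where you slipped. Both arguments use the noise only through $\E\|\vxi\|\le\sigma$, which is why they hold in both regimes.
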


The proof is provided in \autoref{subsec:gclip-corr-sec}.

\begin{lemma}[Moment bound for the large-gradient case]\label{lem:gclip-c2mom}
In \textbf{both regimes}, $\E[\|\hat{\vg}(\vx)\|^{1+s}\mid\vx]\le\gamma^{1+s}$.
\end{lemma}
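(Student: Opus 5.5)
The plan is to use only the definition of the G-Clip estimator (Definition~\ref{def:gclip-est}), which forces $\|\hat{\vg}(\vx)\|\le\gamma$ pointwise, for every realization of the noise. No distributional assumption on $\vxi$ is needed, so the bound will hold in both regimes $1+s\le\alpha$ and $1+s>\alpha$ simultaneously. It also holds without using the large-gradient hypothesis $\|\nabla f(\vx)\|>\theta\gamma$; that hypothesis only explains where the lemma is invoked.

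\textbf{Step 1 (pointwise bound).} Split on whether $\|\vg(\vx)\|\le\gamma$ or $\|\vg(\vx)\|>\gamma$.
\begin{itemize}
\item If $\|\vg(\vx)\|\le\gamma$, the scaling factor is $1$, so $\|\hat{\vg}(\vx)\|=\|\vg(\vx)\|\le\gamma$.
\item If $\|\vg(\vx)\|>\gamma$, the factor is $\gamma/\|\vg(\vx)\|$, so $\|\hat{\vg}(\vx)\|=\gamma$ exactly.
\end{itemize}
On the null event $\vg(\vx)=\vzero$, the estimator equals $\vzero$ under the usual convention $\min\{1,\gamma/0\}=1$.

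\textbf{Step 2 (raise to a power).} The map $t\mapsto t^{1+s}$ is nondecreasing on $[0,\infty)$ for $s\in(0,1]$. Hence Step 1 gives $\|\hat{\vg}(\vx)\|^{1+s}\le\gamma^{1+s}$ almost surely.

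\textbf{Step 3 (take expectations).} Take the conditional expectation given $\vx$. By monotonicity of conditional expectation, $\E[\|\hat{\vg}(\vx)\|^{1+s}\mid\vx]\le\gamma^{1+s}$.

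There is no real obstacle here. The only point worth stating explicitly is that, in contrast to Lemma~\ref{lem:moment-gclip-case2-3} for $\delta$-GClip, there is no region $C$ in which the estimator scales like $\delta\vg$ and can be arbitrarily large. This missing region is exactly what makes the bound trivial and regime-independent.
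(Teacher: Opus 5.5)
Your proof is correct and follows the paper's argument, which simply notes that the bound is immediate from the pointwise property $\|\hat{\vg}(\vx)\|\le\gamma$ in Definition~\ref{def:gclip-est}. You make the case split, the monotonicity of $t\mapsto t^{1+s}$, and the monotonicity of conditional expectation explicit, and you correctly observe that neither the noise regime nor the large-gradient hypothesis is actually used.
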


The proof is provided in \autoref{subsec:gclip-c2mom-sec}.

\section{Conclusion and Future Directions}
\label{chap:conclusion}
This work established first-of-its-kind convergence guarantees for stochastic gradient methods
under the simultaneous presence of $(L,s)$-H\"{o}lder smoothness and
heavy-tailed gradient noise, for standard SGD, the $\delta$-GClip 
algorithm of~\cite{tucat2025regularized} and the standard GClip. Several natural directions remain open as next steps.

To the best of our knowledge there are no known lowerbounds to the rate of convergence of stochastic algorithms for $(L,s)$-H\''{o}lder smooth functions and particularly so, in the presence of heavy-tailed noise. So firstly, we envisage finding this bound as the next step from here. 


One particularly important open direction is the development of
high-probability convergence guarantees under $(L,s)$-H\"{o}lder smoothness.
Recent work by~\cite{madden2024high} studied SGD for nonconvex
optimization under norm sub-Weibull noise --- a broad class of heavy-tailed
distributions parametrized by a tail index $\theta \ge 1/2$. Specifically,
the stochastic gradient noise $e_t = \nabla f(x_t) - g_t$ is assumed to satisfy
$
\mathbb{E}\!\left[
\exp\!\left(
\left(
\frac{\|e_t\|}{K}
\right)^{\!1/\theta}
\right)
\,\Big|\,
x_t
\right]
\le 2,
$
for some scale parameter $K>0$. Under standard $L$-smoothness and a
diminishing step size schedule $\eta_t = c/\sqrt{t+1}$ with $c \le 1/L$,
they prove that, with probability at least $1-\delta$,
$
\frac{1}{\sqrt{T}}
\sum_{t=0}^{T-1}
\frac{1}{\sqrt{t+1}}
\|\nabla f(x_t)\|^2
=
\mathcal{O}\!\left(
\frac{
\log(T)\,
\log(1/\delta)^{2\theta}
}{
\sqrt{T}
}
\right).
$
In the sub-Gaussian case $\theta = 1/2$, this recovers the classical
$\mathcal{O}(1/\sqrt{T})$ convergence behavior up to logarithmic factors.

Motivated by this result, a natural question is whether analogous
high-probability convergence guarantees can be established for SGD or
$\delta$-GClip or GClip when the smoothness assumption is weakened from Lipschitz
continuity to $(L,s)$-H\"{o}lder continuity of the gradient. Extending
martingale concentration arguments to the H\"{o}lder setting appears
nontrivial, since the descent inequalities involve fractional-order terms
that do not arise in the classical smooth case and the opportunity to understanding these technicalities make this an attractive direction for the future.
\clearpage
\bibliography{tmlr}
\bibliographystyle{tmlr}
\clearpage  
\appendix
\section{Literature Review}\label{sec:lit_holder}
\subsection{Review of Convergence Guarantees for Noisy-GD and SGD for Holder Functions} 
In the following, we review three representative works that study SGD and its variants under $s$-H\"older continuity, which provide the closest available context for our investigation.

Departing from classical analyses that rely on global Lipschitz continuity or bounded variance, \citet{patel2021stochastic} study the global behavior of stochastic gradient descent for nonconvex optimization under weak assumptions on both the objective and the noise model. The authors assume that the objective has only a locally $s$-H\"older continuous gradient and that the stochastic gradients satisfy a general second-moment bound $\mathbb{E}\|\nabla f(\vtheta,X)\|^2 \le G(\vtheta)$. Under these conditions, they show that SGD iterates either diverge to infinity or converge to a stationary point with probability one. Moreover, when the iterates remain bounded, the gradient norm satisfies $\|\nabla F(\vtheta_k)\| \to 0$ almost surely. Under assumptions of global s-H\"older continuity and stronger expected gradient, they further show that $\|\nabla F(\vtheta_k)\| \to 0$ both almost surely and in expectation.
We note that the results in \citet{patel2021stochastic} don't come with a rate of convergence.

Under an $s$-H\"{o}lder continuity assumption that also accommodates sub-gradients, thereby extending to non-differentiable losses such as the hinge and absolute losses, \citet{wang2021differentially} develop a theory of differentially private SGD (DP-SGD) for stochastic convex optimization. Their analysis establishes a precise trade-off between smoothness and computational efficiency. Notably, they show that $s \ge 1/2$ is already sufficient to achieve an excess population risk of $\mathcal{O}\big(\frac{\sqrt{d \log(1/\delta)}}{n\epsilon} + \frac{1}{\sqrt{n}}\big)$ with linear gradient complexity $\mathcal{O}(n)$. This upper bound perfectly matches the known information-theoretic lower bounds for differentially private stochastic convex optimization \citep{bassily2019private}, demonstrating that full Lipschitz smoothness is not strictly necessary to attain statistically optimal learning rates. 



Building upon relaxed smoothness assumptions, \citet{wang2023holder} investigate the convergence of stochastic gradient methods with momentum, including SGD, Heavy Ball, and Nesterov’s accelerated gradient, for nonconvex optimization. Departing from classical analyses the authors assume that the objective function has an $s$-H\"older continuous gradient. Using a stochastic unified momentum (SUM) formulation, they analyze these algorithms simultaneously and show that the expected gradient satisfies $\sum_{t=0}^\infty \gamma_t \mathbb{E}\|\nabla f(\vx_t)\|^2 < \infty$, implying convergence to stationary points in expectation, with rate $\mathbb{E}\|\nabla f(\vx_R)\|^2 = O(T^{-s/(1+s)})$. 

\begin{table}[htbp!]
\centering
\caption{Stochastic optimization under $s$-H\"older smoothness.}
\label{tab:holder_sgd}
\renewcommand{\arraystretch}{1.5}
\begin{tabularx}{\linewidth}{
>{\raggedright\arraybackslash}p{0.18\linewidth}
>{\raggedright\arraybackslash}p{0.18\linewidth}
>{\raggedright\arraybackslash}X
>{\raggedright\arraybackslash}p{0.28\linewidth}
}
\toprule
\textbf{Paper} 
& \textbf{Algorithm} 
& \textbf{Assumptions} 
& \textbf{Guarantee and Rate} \\
\midrule

\citep{wang2021differentially} \newline {\footnotesize Thm. 12} 
& DP-SGD 
& $s$-H\"older smooth gradient; convex loss; $(\varepsilon,\delta)$-DP 
& Excess risk: 
$O\!\left(\frac{\sqrt{d\log(1/\delta)}}{n\varepsilon} + \frac{1}{\sqrt{n}}\right)$; 
gradient complexity $O(n^{\frac{2-s}{1+s}} + n)$ \\

\midrule

\citep{wang2023holder} \newline {\footnotesize Thm. 3.1} 
& SGD, Heavy Ball, NAG 
& $s$-H\"older smooth gradient; nonconvex 
& $\mathbb{E}\|\nabla f(\vx_R)\|^2 = O(T^{-s/(1+s)})$; 
a.s. convergence to stationary points \\

\bottomrule
\end{tabularx}
\end{table}

\subsection{Review of Convergence Guarantees for Noisy-GD and SGD with Heavy-Tailed Noise}




The role of heavy-tailed gradient noise in explaining why adaptive methods outperform SGD in settings such as attention models has been studied empirically and theoretically by \citet{suvrit1912}. The authors show that when stochastic gradients satisfy a bounded $\alpha$-moment condition with $1 < \alpha \le 2$, standard SGD can fail to converge due to the influence of large, rare gradients, whereas \emph{clipped gradient methods (GClip)} remain stable. They prove that for smooth nonconvex objectives, GClip achieves a stationarity rate of $ O\!\left(T^{-\frac{\alpha-1}{3\alpha-2}}\right)$, and for strongly convex functions, an optimal convergence rate of $O\!\left(T^{-\frac{2(\alpha-1)}{\alpha}}\right)$, matching corresponding lower bounds and thus they establish the optimality of clipping-based methods under heavy-tailed noise. Additionally, the paper demonstrates experimentally that gradient noise in attention models is inherently heavy-tailed, thereby linking the structure of noise to optimizer performance.


A complementary relaxation of smoothness, orthogonal to the H\"older-continuity relaxation studied in this work, was proposed by \citet{zhang2020whygradient}, who introduce the $(L_0, L_1)$-smoothness condition, $\|\nabla^2 f(\vx)\| \le L_0 + L_1\|\nabla f(\vx)\|$, motivated by the empirical observation that the local smoothness constant along deep network training trajectories grows with the gradient norm. Under this condition, they show gradient clipping and normalized gradient descent converge arbitrarily faster than fixed-step-size gradient descent as $L_1$ grows. Unlike $(L,s)$-H\"older smoothness, which weakens the global growth rate of the gradient's modulus of continuity, $(L_0, L_1)$-smoothness lets the \emph{local} Lipschitz constant scale with the gradient norm, so the two relaxations are not nested; notably, $(L_0, L_1)$-smoothness underlies the generalized-smoothness assumption used by \citet{liu2025nonconvex}, discussed below.

Under a potentially infinite-variance noise model assuming only a finite $\alpha$-th moment for $\alpha \in [1,2)$ \citet{wang2021convergence}  study SGD and identify a curvature condition on the Hessian, termed \emph{uniform $\alpha$-positive definiteness}.
Under a diminishing step-size $\gamma_t \asymp t^{-\rho}$, they establish convergence rates of $O(t^{-\rho(\alpha-1)})$, showing explicit dependence on the tail index. They further prove that Polyak–Ruppert averaging converges to a multivariate $\beta$-stable distribution (where $\beta \in (1,2]$ denotes the stability index) rather than a Gaussian limit.

Among the earliest works to establish high-probability convergence guarantees for clipping under heavy-tailed noise, \citet{gorbunov2020stochastic} propose clipped-SSTM, an \emph{accelerated} stochastic first-order method for smooth convex optimization with heavy-tailed noise in the stochastic gradients. They show that an $\varepsilon$-solution is reached with probability $1-\beta$ after $O\big(\sqrt{LR_0^2/\varepsilon}\,\ln(LR_0^2/\varepsilon\beta)\big)$ iterations, matching the optimal light-tailed accelerated rate up to logs, and extend the technique to give the first non-trivial high-probability bound for non-accelerated clipped-SGD under heavy-tailed noise. This line directly precedes and motivates \citet{nguyen2023clippedsgd}, discussed next.

High-probability convergence guarantees for clipped gradient methods under heavy-tailed noise, with bounded $\alpha$-th moments for $1 < \alpha \le 2$, are the focus of \citet{nguyen2023clippedsgd}.
 The authors establish optimal high-probability convergence rates for clipped stochastic mirror descent in the convex setting and clipped stochastic gradient descent in the nonconvex setting, achieving rates of $O(T^{\frac{1-\alpha}{\alpha}})$ and $O(T^{\frac{2-2\alpha}{3\alpha-2}})$ respectively, matching known lower bounds. Their approach also allows for time-varying step sizes and clipping parameters when the time horizon is unknown.

Relatedly, \citet{madden2024high} establish high-probability guarantees for SGD under sub-Weibull noise. 


Using a general nonlinear framework  encompassing sign, normalization, and clipping transformations applied to the stochastic gradient \citet{Armacki2025NonlinearSGD} study high-probability convergence of SGD in a streaming setting under heavy-tailed noise. They apply bounded nonlinear transformations such as sign, normalization, and clipping to stochastic gradients. The bounded nonlinear transformation yields an effective noise sequence that is sub-Gaussian, even when the original noise possesses only a finite first moment, which allows the use of moment-generating function bounds to derive high-probability guarantees. Under Lipschitz smoothness and assuming the noise is zero-mean, i.i.d., integrable, with a symmetric density positive around zero, they establish high-probability convergence guarantees. In the nonconvex setting, they obtain a convergence rate arbitrarily close to $O(t^{-1/4})$, i.e., $O(t^{-1/4+\epsilon})$ for any $\epsilon>0$. In the strongly convex case, they establish a rate of $O(t^{-1/2+\epsilon})$ for the squared error of the averaged iterates. 


Contrary to the common intuition that adaptivity confers inherent robustness, \citet{Chezhegov2025ClipAdaptive} show that Adam and AdaGrad can exhibit poor high-probability convergence under heavy-tailed noise with bounded $\alpha$-th moments and iteration complexity depending polynomially on the confidence level $\delta$, due to instability in the adaptive scaling term under large gradient outliers. To address this, they propose clipped variants of Adam-Norm and AdaGrad-Norm, where clipping is applied both to the gradients and the scaling factors. Under standard smoothness assumptions, these methods achieve high-probability convergence with polylogarithmic dependence on $\delta$, requiring $\tilde{O}\!\left(\max\left\{\varepsilon^{-1},\, \varepsilon^{-\frac{\alpha}{\alpha-1}}\right\}\right)$ iterations in the convex setting and $\tilde{O}\!\left(\varepsilon^{-\frac{3\alpha-2}{2\alpha-1}}\right)$ iterations in the nonconvex case, corresponding to the leading-order term in the complexity bound and matching clipped SGD up to logarithmic factors.

The respective roles of gradient clipping and normalization under heavy-tailed noise  with finite $\alpha$-th moments for $\alpha \in (1,2]$ are revisited by \citet{SunLiuYuan2025GNClip} , who show that normalization coupled with momentum alone suffices for convergence under individual smoothness, achieving a leading-order rate of $O\!\left(\sigma^{\frac{2\alpha-2}{3\alpha-2}} T^{-\frac{\alpha-1}{3\alpha-2}}\right)$, matching known lower bounds. Combining normalization with clipping further improves convergence under global smoothness and removes logarithmic factors present in prior analyses. The results extend to variance-reduced methods, yielding a leading-order rate of $O\!\left(\sigma^{\frac{\alpha}{2\alpha-1}} T^{-\frac{\alpha-1}{2\alpha-1}}\right)$, and to accelerated variants under second-order smoothness, with rate $O\!\left(T^{-\frac{2\alpha-2}{4\alpha-1}}\right)$.


Similarly challenging the necessity of clipping, \citet{liu2025nonconvex} explore nonconvex stochastic optimization under a generalized heavy-tailed moment condition of the form $\mathbb{E}[\|\vxi_t\|^\alpha \mid \mathcal{F}_{t-1}] \le \sigma_0^\alpha + \sigma_1^\alpha \|\nabla F(\vx_t)\|^\alpha$ for some $\alpha \in (1,2]$, extending the standard finite $\alpha$-th moment assumption. This work showed that clipping is not necessary by analyzing the Batched Normalized Stochastic Gradient Descent with Momentum (NSGDM) algorithm, which normalizes the momentum direction at each step. Under a generalized smoothness condition of the form $\|\nabla F(\vx)-\nabla F(\vy)\| \le (L_0 + L_1 \|\nabla F(\vx)\|)\|\vx-\vy\|$, the authors prove that NSGDM achieves the same optimal rate $O(T^{\frac{1-\alpha}{3\alpha-2}})$ in expectation, providing the first result to attain optimal convergence without gradient clipping. The paper also considers the practically relevant scenario where the tail index $\alpha$ is unknown and proves a convergence rate of $O(T^{\frac{1-\alpha}{2\alpha}})$, using parameter choices independent of $\alpha$.

Building on the observation that training dynamics tend to operate near the 
\emph{edge of stability} where the leading Hessian eigenvalue $\lambda_{\max}$ 
rises before plateauing near $2/\eta$ \citet{gong2025adaptive} introduce 
Adaptive Heavy-Tailed Stochastic Gradient Descent (AHTSGD), which injects 
L\'evy $\alpha$-stable noise\footnote{The update incorporates L\'evy 
$\alpha$-stable noise $\vxi_t \sim \mathcal{S}_\alpha(0, 1, 0)$, a symmetric 
heavy-tailed distribution parameterized by a tail index $\alpha \in (0, 2]$, 
where $\alpha = 2$ recovers Gaussian noise and smaller values of $\alpha$ 
yield heavier tails with higher probability of large jumps.} into SGD and 
dynamically adapts the tail index $\alpha_t \in [1, 2]$ based on the evolving 
sharpness of the loss landscape. The method injects heavier-tailed noise 
($\alpha_t < 2$) during early training to promote exploration and transitions 
toward Gaussian noise ($\alpha_t \to 2$) as curvature stabilizes. In 
practice, sharpness is estimated using a log-transformed trace of the Hessian via 
Hutchinson's method, rather than the leading eigenvalue directly. The paper presents 
(unproven) theoretical bounds suggesting that AHTSGD achieves tighter expected 
suboptimality than SGD in sharp regions, of the form $\mathbb{E}[f(\vtheta_t) - f(\vtheta^*)] \leq \frac{L\|\vtheta_0 - \vtheta^*\|^2}{2t} + \frac{\eta L \sigma^2}{2} \left(\frac{\lambda_{\max}}{2/\eta}\right)^{2-\alpha_t}$,
and additionally conjectures bounds on the expected squared distance to the optimum 
under an exponential annealing schedule $\alpha_t = 2 - e^{-kt}$, suggesting 
accelerated early-stage convergence when $\alpha_t < 2$. However, these results are 
not formally proved in the paper. Empirical evaluations on MNIST, SVHN, and CIFAR-10 
demonstrate improved early-epoch convergence over SGD, robustness to poor 
initialization including zero initialization where SGD fails to converge, and stable 
performance across a range of learning rates, with the most pronounced gains observed 
on the noisier SVHN dataset.


\begin{table}[htbp!]
\centering
\caption{Provable convergence of noisy GD/SGD under heavy-tailed noise (with clipping).}
\label{tab:ht_sgd_clip}
\setlength{\tabcolsep}{4pt}
\renewcommand{\arraystretch}{1.5} 
\begin{tabularx}{\linewidth}{>{\raggedright\arraybackslash}p{0.18\linewidth} >{\raggedright\arraybackslash}p{0.18\linewidth} >{\raggedright\arraybackslash}X >{\raggedright\arraybackslash}p{0.28\linewidth}}
\toprule
\textbf{Paper} 
& \textbf{Algorithm} 
& \textbf{Assumptions} 
& \textbf{Guarantee and Rate} \\
\midrule
 \citep{suvrit1912}\newline {\footnotesize Thm.~2}
& Clipped SGD
& Smooth nonconvex, \newline heavy-tailed noise ($1  < \alpha \le 2$)
& Expected stationarity: $\mathbb{E}\|\nabla f(\vx_T)\|
= O\!\left(T^{-\frac{\alpha-1}{3\alpha-2}}\right)$ \\

\midrule
\citep{gorbunov2020stochastic}\newline {\footnotesize Thm.~2.1}
& Clipped-SSTM \newline (accelerated) \newline (HP analysis)
& Smooth convex; \newline heavy-tailed noise
& High-probability: $f(\vy^N) - f(\vx^*) = \widetilde{O}(LR_0^2/N^2)$ w.p.\ $\ge 1-\beta$ \\

\midrule
\citep{gorbunov2020stochastic}\newline {\footnotesize Thm.~3.1}
& Clipped SGD \newline (HP analysis)
& Smooth convex; \newline heavy-tailed noise
& High-probability: $f(\bar{\vx}^N) - f(\vx^*) = \widetilde{O}(LR_0^2/N)$ w.p.\ $\ge 1-\beta$ \\

\midrule
\citep{nguyen2023clippedsgd}\newline {\footnotesize Thm.~5}
& Clipped SGD \newline (HP analysis)
& Smooth nonconvex, \newline heavy-tailed noise ($1  < \alpha \le 2$)
& High-probability stationarity: $\frac{1}{T}\sum_{t=1}^T \|\nabla f(\vx_t)\|^2 = O\!\left(T^{\frac{2-2\alpha}{3\alpha-2}}\right)$ w.p.\ $\ge 1-\delta$ \\

\midrule
\citep{nguyen2023clippedsgd}\newline{\footnotesize Thm.~3}
& Clipped SGD \newline (HP analysis)
& Smooth convex; heavy-tailed noise $(1 < \alpha \le 2)$
& High-probability optimality gap:
$\frac{1}{T}\sum_{t=1}^T (f(\vx_t)-f^*)
= O\!\left(T^{\frac{1-\alpha}{\alpha}}\right)$, w.p.\ $\ge 1-\delta$ \\
\midrule

\citep{Armacki2025NonlinearSGD}\newline {\footnotesize Thm.~1}
& Nonlinear SGD\newline (clip/sign/norm) \newline (HP analysis)
& Smooth nonconvex; symmetric heavy-tailed noise (no moment assumption)
& High-probability stationarity:
$\min_{t\le T}\|\nabla f(\vx_t)\|^2
= \widetilde{O}(T^{-1/4})$, w.p.\ $\ge 1-\delta$ \\

\midrule
\citep{Chezhegov2025ClipAdaptive}\newline {\footnotesize Thm.~3.1}
& Clip-AdamD/Clip-AdaGradD-Norm \newline (Averaged iterate)\newline (HP analysis)
& Smooth convex; heavy-tailed noise $(1 < \alpha \le 2)$
& High-probability optimality gap:
$f(\bar{\vx}_T)-f^* = \widetilde{O}\!\left(T^{\frac{1-\alpha}{\alpha}}\right)$, w.p.\ $\ge 1-\delta$ \newline
(where $\bar{x}_T$ is averaged iterate.)\\

\midrule
\citep{Chezhegov2025ClipAdaptive}\newline {\footnotesize Thm.~3.2}
& Clip-AdamD/Clip-M-AdaGradD-Norm \newline (HP analysis)
& Smooth nonconvex ; heavy-tailed noise $(1 < \alpha \le 2)$
& High-probability stationarity:
$\frac{1}{T}\sum_{t=1}^T \|\nabla f(\vx_t)\|^2
= \widetilde{O}\!\left(T^{-\frac{2\alpha-2}{3\alpha-2}}\right)$, w.p.\ $\ge 1-\delta$ \\

\midrule
\citep{SunLiuYuan2025GNClip}\newline {\footnotesize Thm.~3}
& Gradient normalization + clipping
& Smooth nonconvex ; heavy-tailed noise $(1 < \alpha \le 2)$
& Expected stationarity:
$\frac{1}{T}\sum_{t=1}^T \|\nabla f(\vx_t)\|
= \widetilde{O}\!\left(T^{-\frac{\alpha-1}{3\alpha-2}}\right)$ \\
\bottomrule
\end{tabularx}
\end{table}

\begin{table}[htbp!]
\centering
\caption{Provable convergence of noisy GD/SGD under heavy-tailed noise (no clipping).}
\label{tab:ht_sgd_no_clip}
\renewcommand{\arraystretch}{1.5} 
\begin{tabularx}{\linewidth}{>{\raggedright\arraybackslash}p{0.18\linewidth} >{\raggedright\arraybackslash}p{0.18\linewidth} >{\raggedright\arraybackslash}X >{\raggedright\arraybackslash}p{0.28\linewidth}}
\toprule
\textbf{Paper} & \textbf{Algorithm} & \textbf{Assumptions} & \textbf{Guarantee and Rate} \\ \midrule

\citep{wang2021convergence} \newline {\footnotesize Thm.~3} 
& Vanilla SGD \newline (Last Iterate) 
& { Strongly convex; \newline Uniformly $\alpha$-PD Hessian; \newline heavy-tailed noise ($1 < \alpha \le 2$); \newline $\gamma_t \asymp t^{-\rho}, \rho \in (0,1)$} 
& $\mathbb{E}\|\vx_t - \vx^*\|^\alpha = O(t^{-\rho(\alpha-1)})$ \\ \midrule

\citep{liu2025nonconvex}  \newline {\footnotesize Thm.~3.2} 
& Batched NSGDM 
& Nonconvex; generalized smoothness; heavy-tailed noise $(1 < \alpha \le 2)$ 
& $\frac{1}{T}\sum_{t=1}^T \mathbb{E}\|\nabla F(\vx_t)\| = O\!\left(T^{\frac{1-\alpha}{3\alpha-2}}\right)$ \\ 
\midrule

\citep{madden2024high} \newline {\footnotesize Thm.~15} 
& Vanilla SGD 
& nonconvex; heavy-tailed noise (sub-Weibull $\theta \ge 1/2$); $\eta_t \asymp 1/\sqrt{t}$ 
& $\frac{1}{\sqrt{T}}\sum_{t=0}^{T-1}\frac{1}{\sqrt{t+1}}\mathbb{E}\|\nabla f(\vx_t)\|^2 
= O\!\left(\frac{\log(T)\log(1/\delta)^{2\theta} + \gamma(\theta)\log(1/\delta)}{\sqrt{T}}\right)$ \\ \bottomrule
\end{tabularx}
\end{table}
\clearpage
\section{Proof of Theorem \ref{thm:SGD-proof}}\label{subsec:sgdproof1}
\begin{proof}[Proof of Theorem \ref{thm:SGD-proof}]
We begin with the fundamental inequality provided by $(L, s)$-H\"older smoothness (Definition ~\ref{def:holder-descent}). Substituting the SGD update rule, $\vx_{k+1} = \vx_k - \eta \vg_k$, and substituting $\vxi_k \coloneqq \vg_k -\nabla f(\vx_k)$, we obtain,
\begin{align*}
f(\vx_{k+1}) &\le f(\vx_k) - \eta \langle \nabla f(\vx_k), \nabla f(\vx_k) + \vxi_k \rangle + \frac{L}{1+s} \|-\eta (\nabla f(\vx_k) + \vxi_k)\|^{1+s} \\
&= f(\vx_k) - \eta \|\nabla f(\vx_k)\|^2 - \eta \langle \nabla f(\vx_k), \vxi_k \rangle + \frac{L\eta^{1+s}}{1+s} \|\nabla f(\vx_k) + \vxi_k\|^{1+s}
\end{align*}
We take expectation of the above conditioned on the current iterate $\vx_k$ and we recall by Assumption ~\ref{ass:unbiased} that the cross term evaluates to zero. 
Hence we have,
\begin{equation*}
\mathbb{E}[f(\vx_{k+1}) \mid \vx_k] \le f(\vx_k) - \eta \|\nabla f(\vx_k)\|^2 + \frac{L\eta^{1+s}}{1+s} \mathbb{E}[\|\nabla f(\vx_k) + \vxi_k\|^{1+s} \mid \vx_k]
\end{equation*}
We expand the conditional variance term using the algebraic inequality $\|a+b\|^p \le 2^{p-1}(\|a\|^p + \|b\|^p)$ valid for any $p \ge 1$,  
\begin{align*}
\|\nabla f(\vx_k) + \vxi_k\|^{1+s} &\le 2^{(1+s)-1}(\|\nabla f(\vx_k)\|^{1+s} + \|\vxi_k\|^{1+s}) = 2^s\|\nabla f(\vx_k)\|^{1+s} + 2^s\|\vxi_k\|^{1+s}
\end{align*}
Taking the expectation conditioned on $\vx_k$ gives,
\begin{equation*}
\mathbb{E}[\|\nabla f(\vx_k) + \vxi_k\|^{1+s} \mid \vx_k] \le 2^s\|\nabla f(\vx_k)\|^{1+s} + 2^s\mathbb{E}[\|\vxi_k\|^{1+s} \mid \vx_k]
\end{equation*}
To bound the noise moment, we apply Lyapunov's inequality (or monotonicity of $L^p$-norms), which states $(\mathbb{E}[|X|^q])^{\frac{1}{q}} \le (\mathbb{E}[|X|^p])^{\frac{1}{p}}$ for any $0 < q < p$ and a random variable $X$. Setting $X = \|\vxi_k\|$, $q = 1+s$, and $p = \alpha$ (where $1+s < \alpha$), we obtain:
\begin{align*}
\mathbb{E}[\|\vxi_k\|^{1+s} \mid \vx_k] = \left( (\mathbb{E}[\|\vxi_k\|^{1+s} \mid \vx_k])^{\frac{1}{1+s}} \right)^{1+s} 
\le \left( (\mathbb{E}[\|\vxi_k\|^\alpha \mid \vx_k])^{\frac{1}{\alpha}} \right)^{1+s}
= (\mathbb{E}[\|\vxi_k\|^\alpha \mid \vx_k])^{\frac{1+s}{\alpha}}
\end{align*}
Applying Assumption ~\ref{ass:noise-s} 

($\mathbb{E}[\|\vxi_k\|^\alpha \mid \vx_k] \le \sigma^\alpha$), we find, 
$(\mathbb{E}[\|\vxi_k\|^\alpha \mid \vx_k])^{\frac{1+s}{\alpha}} \le (\sigma^\alpha)^{\frac{1+s}{\alpha}} = \sigma^{1+s}
$.
Substituting this finite bound back into our previous descent equation gets us,
\begin{equation*}
\mathbb{E}[f(\vx_{k+1}) \mid \vx_k] \le f(\vx_k) - \eta \|\nabla f(\vx_k)\|^2 + \frac{L 2^s \eta^{1+s}}{1+s} \|\nabla f(\vx_k)\|^{1+s} + \frac{L 2^s \sigma^{1+s}}{1+s} \eta^{1+s}
\end{equation*}

Next, for $s\in(0,1)$ (the endpoint $s=1$ is treated in Remark~\ref{rem:sgd-s1}), we apply Young's inequality, $ab \le \frac{a^p}{p} + \frac{b^q}{q}$, with conjugates $p = \frac{2}{1+s}$ and $q = \frac{2}{1-s}$. (Note that $\frac{1}{p} + \frac{1}{q} = \frac{1+s}{2} + \frac{1-s}{2} = 1$). Corresponding to the above choices of $p$ and $q$ we use $a = \left( \frac{\eta}{1+s} \right)^{\frac{1+s}{2}} \|\nabla f(\vx_k)\|^{1+s}$ and $b = L 2^s \eta^{\frac{1+s}{2}} (1+s)^{-\frac{1-s}{2}}$ to get, 

\begin{equation*}
\frac{L 2^s \eta^{1+s}}{1+s} \|\nabla f(\vx_k)\|^{1+s} \le \frac{\eta}{2} \|\nabla f(\vx_k)\|^2 + C_s L^{\frac{2}{1-s}} \eta^{\frac{1+s}{1-s}}
\end{equation*}

where, $C_s = \frac{1-s}{2(1+s)} 2^{\frac{2s}{1-s}}$. 






Substituting the upper bound into the descent equation and rearranging gets us,



\begin{equation*}
\frac{\eta}{2} \|\nabla f(\vx_k)\|^2 \le f(\vx_k) - \mathbb{E}[f(\vx_{k+1}) \mid \vx_k] + C_s L^{\frac{2}{1-s}} \eta^{\frac{1+s}{1-s}} + \frac{L 2^s \sigma^{1+s}}{1+s} \eta^{1+s}
\end{equation*}
We now take the total expectation of both sides to get, 
\begin{equation*}
\frac{\eta}{2} \mathbb{E}[\|\nabla f(\vx_k)\|^2] \le \mathbb{E}[f(\vx_k)] - \mathbb{E}[f(\vx_{k+1})] + C_s L^{\frac{2}{1-s}} \eta^{\frac{1+s}{1-s}} + \frac{L 2^s \sigma^{1+s}}{1+s} \eta^{1+s}
\end{equation*}
Summing this from $k=1$ to $T$ yields a telescoping series for the function values,
\begin{equation*}
\frac{\eta}{2} \sum_{k=1}^T \mathbb{E}[\|\nabla f(\vx_k)\|^2] \le \sum_{k=1}^T \left( \mathbb{E}[f(\vx_k)] - \mathbb{E}[f(\vx_{k+1})] \right) + T \left( C_s L^{\frac{2}{1-s}} \eta^{\frac{1+s}{1-s}} + \frac{L 2^s \sigma^{1+s}}{1+s} \eta^{1+s} \right)
\end{equation*}
We simplify the telescoping sum as, $\sum_{k=1}^T \left( \mathbb{E}[f(\vx_k)] - \mathbb{E}[f(\vx_{k+1})] \right) = f(\vx_1) - \mathbb{E}[f(\vx_{T+1})] \le f(\vx_1) - f^*$. Further defining, $F_0 \coloneqq f(\vx_1) - f^*$ and dividing the entire inequality by $\frac{\eta T}{2}$ we get,


\begin{equation*}
\frac{1}{T} \sum_{k=1}^T \mathbb{E}[\|\nabla f(\vx_k)\|^2] \le \frac{2F_0}{\eta T} + 2 C_s L^{\frac{2}{1-s}} \eta^{\frac{2s}{1-s}} + \frac{L 2^{s+1} \sigma^{1+s}}{1+s} \eta^s
\end{equation*}
For $s \in (0,1)$, we have $\frac{2s}{1-s} > s$. Therefore, as $\eta \to 0$, the $\eta^{\frac{2s}{1-s}}$ term decays faster and is asymptotically dominated by the $\eta^s$ pure variance term. The bound simplifies asymptotically to:
\begin{equation*}
\frac{1}{T} \sum_{k=1}^T \mathbb{E}[\|\nabla f(\vx_k)\|^2] \le \mathcal{O}\left( \frac{1}{\eta T} \right) + \mathcal{O}\left( \eta^s \right)
\end{equation*}
Plugging the declared step size of $\eta = \mathcal{O}(T^{-\frac{1}{1+s}})$ into the above results in the rate of convergence that was claimed, 
\begin{align*}
\frac{1}{T} \sum_{k=1}^T \mathbb{E}[\|\nabla f(\vx_k)\|^2] = \mathcal{O}\left( \left( T^{-\frac{1}{1+s}} \right)^s \right) = \mathcal{O}\left( T^{-\frac{s}{1+s}} \right)
\end{align*}
\end{proof}
\section{Proofs of Lemmas Needed for Theorem \ref{thm:delta-gclip-s-3}}\label{app:lemma-delta}

\subsection{Proof of lemma \ref{lem:prob_bounds-3}}\label{subsec:probability-bounds-3}
\begin{proof}
For any $\vx \in B$, we have by definition $\|\vg\| > \gamma$. Using the reverse triangle inequality and the condition $\|\nabla f(\vx)\| \le \theta\gamma$, we bound the magnitude of the noise:
\[ \|\vxi\| = \|\vg - \nabla f\| \ge \|\vg\| - \|\nabla f\| > \gamma - \theta\gamma = (1-\theta)\gamma. \]
Applying Markov's inequality to this strictly positive boundary gives the probability bound for region $B$:
\begin{equation}
    \Pr(B) = \Pr(\vx \in B) \le \Pr\big(\|\vxi\| > (1-\theta)\gamma\big) \le \frac{\E[\|\vxi\|^\alpha]}{((1-\theta)\gamma)^\alpha} \le \frac{\sigma^\alpha}{(1-\theta)^\alpha \gamma^\alpha}.
\end{equation}

Similarly, for any $\vx \in C$, we have $\|\vg\| > \gamma/\delta$. The reverse triangle inequality yields:
\[ \|\vxi\| = \|\vg - \nabla f\| \ge \|\vg\| - \|\nabla f\| > \frac{\gamma}{\delta} - \theta\gamma = \gamma\left(\frac{1-\theta\delta}{\delta}\right). \]
Applying Markov's inequality yields the probability bound for region $C$:
\begin{equation}
    \Pr(C) = \Pr(\vx \in C) \le \Pr\left(\|\vxi\| > \gamma\frac{1-\theta\delta}{\delta}\right) \le \frac{\E[\|\vxi\|^\alpha]}{\left(\gamma\frac{1-\theta\delta}{\delta}\right)^\alpha} \le \left(\frac{\delta}{1-\theta\delta}\right)^\alpha \frac{\sigma^\alpha}{\gamma^\alpha}.
\end{equation}
\end{proof}

\subsection{Proof of lemma \ref{lem:clipped-bias-delta-s-3}}\label{subsec:case1-bounds-3}
\begin{proof}
Let $\hat \vg_\delta(\vx) := \min\Big\{1,\; \max\Big\{\delta,\;\frac{\gamma}{\|\vg(\vx)\|}\Big\}\Big\} \cdot \vg(\vx).$
We define the disjoint region $A=\{\|\vg\|\le \gamma\}$ alongside regions $B$ and $C$ defined in Lemma \ref{lem:prob_bounds-3}. The estimator simplifies piecewise to:
\[
\hat g_\delta \;=\;
\begin{cases}
\vg, & \vg \in A, \\
\displaystyle \gamma \,\frac{\vg}{\|\vg\|}, & \vg \in B, \\
\delta \vg, & \vg \in C.
\end{cases}
\]

Throughout the proof, we repeatedly use the elementary inequality for \(a,b\ge0\) and \(p\ge1\):
\begin{equation}\label{eq:ab-ineq-3}
(a+b)^p \le 2^{p-1}(a^p + b^p)
\end{equation}

\noindent\textbf{Moment Bound} \\
We decompose the expectation over the three regions \(A,B,C\):
\begin{equation}
\E\big[\|\hat{\vg}_\delta(\vx)\|^p\big] = \E\big[\|\hat{\vg}_\delta(\vx)\|^p\1_A\big] + \E\big[\|\hat{\vg}_\delta(\vx)\|^p\1_B\big] + \E\big[\|\hat{\vg}_\delta(\vx)\|^p\1_C\big].
\label{eq:ABC-split-s-3}
\end{equation}

\paragraph{Region A: \(\|\vg\|\le\gamma\)}
On \(A\), the estimator is unclipped ($\hat{\vg}_\delta=\vg=\nabla f + \vxi$).
\begin{align}
    \E\big[\|\hat{\vg}_\delta(\vx)\|^p\1_A\big] 
    &= \E\big[\|\nabla f(\vx) + \vxi\|^p\1_A\big] \nonumber \\
    &\le \E\big[2^{p-1}(\|\nabla f(\vx)\|^p + \|\vxi\|^p)\1_A\big] \nonumber \\
    &= 2^{p-1}\|\nabla f(\vx)\|^p \E[\1_A] + 2^{p-1}\E\big[\|\vxi\|^p\1_A\big] \nonumber \\
    &\le 2^{p-1}\|\nabla f(\vx)\|^p + 2^{p-1}\E\big[\|\vxi\|^p\big] \nonumber \\
    &\le 2^{p-1}\|\nabla f(\vx)\|^p + 2^{p-1}\big(\E[\|\vxi\|^\alpha]\big)^{p/\alpha} \nonumber \\
    &\le 2^{p-1}\|\nabla f(\vx)\|^p + 2^{p-1}\sigma^p. \label{eq:regionA-final-s-3}
\end{align}

In the sequence above, the first inequality applies the inequality \ref{eq:ab-ineq-3}.  We then bound the indicator function $\1_A \le 1$ and its expectation $\E[\1_A] \le 1$. Finally, we apply Hölder's inequality (valid since $p \le \alpha$) and substitute the noise moment bound from Assumption 1.

\paragraph{Region B: $\gamma < \|\vg\| \le \gamma/\delta$}
On $B$, the $\delta$-GClip estimator takes the form:
\[ \hat{\vg}_\delta(\vx) = \gamma \frac{\vg(\vx)}{\|\vg(\vx)\|}. \]
Taking the norm of this estimator,
\[ \|\hat{\vg}_{\delta}(\vx)\| = \left\| \gamma \frac{\vg(\vx)}{\|\vg(\vx)\|} \right\| = \gamma \frac{\|\vg(\vx)\|}{\|\vg(\vx)\|} = \gamma. \]
Raising this to the $p$-th power confirms that the estimator has an exact norm of $\|\hat{\vg}_{\delta}(\vx)\|^p = \gamma^p$.
Taking expectation:
\begin{align}
    \E\!\left[\|\hat{\vg}_\delta(\vx)\|^p \1_B\right] 
    &= \E[\gamma^p \1_B] \nonumber \\
    &= \gamma^p \Pr(B) \nonumber \\
    &\le \gamma^p \left( \frac{\sigma^\alpha}{(1-\theta)^\alpha \gamma^\alpha} \right) \nonumber \\
    &= \frac{1}{(1-\theta)^\alpha} \sigma^\alpha \gamma^{p-\alpha}. \label{eq:regionB-final-s-3}
\end{align} 

Here, we factor out the constant $\gamma^p$ and use the definition $\E[\1_B] = \Pr(B)$. The inequality directly substitutes the probability bound for region $B$ derived in Lemma \ref{lem:prob_bounds-3}, Eq. \eqref{eq:prob_B_lem}.

\paragraph{Region C: $\|\vg(\vx)\|>\gamma/\delta$}
On $C$, we have $\hat \vg_\delta(\vx) = \delta \vg(\vx)$.\\
Taking expectation we get,
\begin{align}
    \E\big[\|\hat{\vg}_\delta(\vx)\|^p \1_C\big] 
    &= \delta^p \E\big[\|\nabla f + \vxi\|^p \1_C\big] \nonumber \\
    &\le 2^{p-1}\delta^p \E\big[ (\|\nabla f\|^p + \|\vxi\|^p) \1_C \big] \nonumber \\
    &= 2^{p-1}\delta^p \|\nabla f\|^p \Pr(C) + 2^{p-1}\delta^p \E\big[\|\vxi\|^p \1_C\big] \nonumber \\
    &\le 2^{p-1}\delta^p (\theta\gamma)^p \Pr(C) + 2^{p-1}\delta^p \big(\E\|\vxi\|^\alpha\big)^{p/\alpha} \Pr(C)^{\frac{\alpha-p}{\alpha}} \nonumber \\
    &\le 2^{p-1}\delta^p \theta^p\gamma^p \left[ \left(\frac{\delta}{1-\theta\delta}\right)^\alpha \frac{\sigma^\alpha}{\gamma^\alpha} \right] + 2^{p-1}\delta^p \sigma^p \left[ \left(\frac{\delta}{1-\theta\delta}\right)^\alpha \frac{\sigma^\alpha}{\gamma^\alpha} \right]^{\frac{\alpha-p}{\alpha}} \nonumber \\
    &= 2^{p-1}\delta^p \sigma^\alpha \gamma^{p-\alpha} \left[ \theta^p \left(\frac{\delta}{1-\theta\delta}\right)^\alpha + \left(\frac{\delta}{1-\theta\delta}\right)^{\alpha-p} \right]. \label{eq:regionC-final-s-3}
\end{align}

Here, we first applied inequality \eqref{eq:ab-ineq-3} and the identity $\E[\1_C] = \Pr(C)$. We then bound the true gradient using the Case 1 condition $\|\nabla f\| \le \theta\gamma$ and apply Hölder's inequality to the noise term. Next, we substitute the probability bound for region $C$ from Lemma \ref{lem:prob_bounds-3}, Eq. \eqref{eq:prob_C_lem}, and Assumption 1. 

Substituting the individual regional bounds \eqref{eq:regionA-final-s-3}, \eqref{eq:regionB-final-s-3}, and \eqref{eq:regionC-final-s-3} back into the decomposition \eqref{eq:ABC-split-s-3}, we sum the terms together:
\begin{align}
    \E\big[\|\hat{\vg}_\delta(\vx)\|^p\big] 
    &\le \Big( 2^{p-1}\|\nabla f(\vx)\|^p + 2^{p-1}\sigma^p \Big) \nonumber \\
    &\quad + \left( \frac{1}{(1-\theta)^\alpha} \sigma^\alpha \gamma^{p-\alpha} \right) \nonumber \\
    &\quad + \left( 2^{p-1}\delta^p \sigma^\alpha \gamma^{p-\alpha} \left[ \theta^p \left(\frac{\delta}{1-\theta\delta}\right)^\alpha + \left(\frac{\delta}{1-\theta\delta}\right)^{\alpha-p} \right] \right).
\end{align}
By factoring out the common term $\sigma^\alpha \gamma^{p-\alpha}$ from the Region B and Region C bounds, we obtain the complete, fully expanded variance bound:
\begin{align}\label{eq:combined_p_bound_case1-3}
    \E\big[\|\hat{\vg}_\delta(\vx)\|^p\big] 
    &\le 2^{p-1}\|\nabla f(\vx)\|^p + 2^{p-1}\sigma^p \nonumber \\
    &\quad + \left( \frac{1}{(1-\theta)^\alpha} + 2^{p-1}\delta^p \left[ \theta^p\left(\frac{\delta}{1-\theta\delta}\right)^\alpha + \left(\frac{\delta}{1-\theta\delta}\right)^{\alpha-p} \right] \right) \sigma^\alpha \gamma^{p-\alpha}.
\end{align}
\bigskip
\noindent\textbf{Bias Bound} \\
Define the bias vector $B_\delta(\vx) := \E[\hat \vg_\delta(\vx)] - \nabla f(\vx) = \E\big[\hat{\vg}_{\delta}(\vx) - \vg(\vx)\big]$.
Taking the norm and moving it inside the expectation by Jensen's inequality, we decompose the error over the three disjoint regions $A$, $B$, and $C$:
\begin{align}
    \|B_\delta(\vx)\| &\le \E\big[\|\hat{\vg}_{\delta}(\vx) - \vg(\vx)\|\big] \nonumber \\
    &= \E\big[\|\hat{\vg}_{\delta}(\vx) - \vg(\vx)\| \1_A\big] + \E\big[\|\hat{\vg}_{\delta}(\vx) - \vg(\vx)\| \1_B\big] + \E\big[\|\hat{\vg}_{\delta}(\vx) - \vg(\vx)\| \1_C\big].
\end{align}

Because the estimator is unclipped in region $A$, we have $\hat \vg_\delta(\vx) = \vg(\vx)$, which means the first term evaluates to exactly zero. Thus, the error decomposes over just $B$ and $C$:
\begin{equation}\label{eq:Bias_final_new-s}
    \|B_\delta(\vx)\| \le \E\big[\|\vg(\vx) - \hat \vg_\delta(\vx)\| \1_B\big] + \E\big[\|\vg(\vx) - \hat \vg_\delta(\vx)\| \1_C\big].
\end{equation}

\paragraph{Region B: $\gamma < \|\vg\| \le \gamma/\delta$}
On $B$, the $\delta$-GClip estimator performs standard clipping, scaling the gradient to have an exact norm of $\gamma$:
\[ \hat{\vg}_\delta(\vx) = \gamma \frac{\vg(\vx)}{\|\vg(\vx)\|}. \]
Therefore,
\begin{align*}
    \|\vg - \hat \vg_\delta\| 
    &= \left\| \vg - \gamma \frac{\vg}{\|\vg\|} \right\| 
    = \left\| \left(1 - \frac{\gamma}{\|\vg\|}\right) \vg \right\| = \left| 1 - \frac{\gamma}{\|\vg\|} \right| \|\vg\|.
\end{align*}
In Region B, we know $\|\vg\| > \gamma$, which guarantees that the scalar coefficient $(1 - \gamma/\|\vg\|) > 0$. Removing the absolute value and distributing the norm gives:
\[ \|\vg - \hat \vg_\delta\| = \left( 1 - \frac{\gamma}{\|\vg\|} \right) \|\vg\| = \|\vg\| - \gamma < \|\vg\|. \]
\begin{align}
    \E\big[\|\vg - \hat \vg_\delta\| \1_B\big] 
    &\le \E\big[\|\vg\| \1_B\big] \nonumber \\
    &= \E\big[\|\nabla f + \vxi\| \1_B\big] \nonumber \\
    &\le \|\nabla f\| \Pr(B) + \E\big[\|\vxi\| \1_B\big] \nonumber \\
    &\le \theta\gamma \Pr(B) + \big(\E\|\vxi\|^\alpha\big)^{1/\alpha} \Pr(B)^{\frac{\alpha-1}{\alpha}} \nonumber \\
    &\le \theta\gamma \left( \frac{\sigma^\alpha}{(1-\theta)^\alpha \gamma^\alpha} \right) + \sigma \left( \frac{\sigma^\alpha}{(1-\theta)^\alpha \gamma^\alpha} \right)^{\frac{\alpha-1}{\alpha}} \nonumber \\
    &= \frac{\theta}{(1-\theta)^\alpha} \sigma^\alpha \gamma^{1-\alpha} + \frac{1}{(1-\theta)^{\alpha-1}} \sigma^\alpha \gamma^{1-\alpha} \nonumber \\
    &= \frac{1}{(1-\theta)^\alpha} \sigma^\alpha \gamma^{1-\alpha}. \label{eq:bias_B_unified-3}
\end{align}

The first equality substitutes $\vg = \nabla f + \vxi$. The subsequent inequality applies the triangle inequality. We then apply the Case 1 condition $\|\nabla f\| \le \theta\gamma$ and Hölder's inequality. Finally, we substitute the probability bound for region $B$ from Lemma \ref{lem:prob_bounds-3}, inequality \ref{eq:prob_B_lem}, along with Assumption 1.

\paragraph{Region C: $\|\vg(\vx)\|>\gamma/\delta$}
On $C$, the $\delta$-GClip estimator is :
\[ \hat{\vg}_\delta(\vx) = \delta \vg(\vx). \]
Therefore,
\[ \|\vg - \hat \vg_\delta\| = \|\vg - \delta\vg\| = \|(1-\delta)\vg\| = (1-\delta)\|\vg\|. \]

Taking expectation we get,
\begin{align}
    \E\big[\|\vg - \hat \vg_\delta\| \1_C\big] 
    &= (1-\delta) \E\big[\|\nabla f + \vxi\| \1_C\big] \nonumber \\
    &\le (1-\delta) \|\nabla f\| \Pr(C) + (1-\delta) \E\big[\|\vxi\| \1_C\big] \nonumber \\
    &\le (1-\delta) \theta\gamma \Pr(C) + (1-\delta) \big(\E\|\vxi\|^\alpha\big)^{1/\alpha} \Pr(C)^{\frac{\alpha-1}{\alpha}} \nonumber \\
    &\le (1-\delta) \theta\gamma \left[ \left(\frac{\delta}{1-\theta\delta}\right)^\alpha \frac{\sigma^\alpha}{\gamma^\alpha} \right] + (1-\delta) \sigma \left[ \left(\frac{\delta}{1-\theta\delta}\right)^\alpha \frac{\sigma^\alpha}{\gamma^\alpha} \right]^{\frac{\alpha-1}{\alpha}} \nonumber \\
    &= (1-\delta) \sigma^\alpha \gamma^{1-\alpha} \left[ \theta\left(\frac{\delta}{1-\theta\delta}\right)^\alpha + \left(\frac{\delta}{1-\theta\delta}\right)^{\alpha-1} \right]. \label{eq:final_bias_C-s-3}
\end{align}

We substitute $\vg = \nabla f + \vxi$, apply the triangle inequality. We then apply the Case 1 condition, Hölder's inequality, and substitute the probability bound for region $C$ from Lemma \ref{lem:prob_bounds-3}, inequality \eqref{eq:prob_C_lem}, and Assumption 1.

\paragraph{Combine}
Summing the bounds from inequality \ref{eq:bias_B_unified-3} and \ref{eq:final_bias_C-s-3} directly yields the final Bias Bound:
\begin{equation}\label{eq:final_bias_bound-s-3}
    \|B_\delta(\vx)\| \le \left( \frac{1}{(1-\theta)^\alpha} + (1-\delta) \left[ \theta\left(\frac{\delta}{1-\theta\delta}\right)^\alpha + \left(\frac{\delta}{1-\theta\delta}\right)^{\alpha-1} \right] \right) \sigma^\alpha \gamma^{1-\alpha}.
\end{equation}

Finally, to compute the Squared Bias Bound, we square inequality \ref{eq:final_bias_bound-s-3} and apply the elementary inequality $(a+b)^2 \le 2a^2 + 2b^2$:
\begin{equation}\label{eq:final_squared_bias-s-3}
    \|B_\delta(\vx)\|^2 \le 2 \left( \frac{1}{(1-\theta)^{2\alpha}} \right) \sigma^{2\alpha} \gamma^{2(1-\alpha)} + 2 (1-\delta)^2 \left[ \theta\left(\frac{\delta}{1-\theta\delta}\right)^\alpha + \left(\frac{\delta}{1-\theta\delta}\right)^{\alpha-1} \right]^2 \sigma^{2\alpha} \gamma^{2(1-\alpha)}.
\end{equation}
\end{proof}

\subsection{Proof of lemma \ref{lem:inner-gclip-case2-3}}\label{subsec:ip-bound-case2}

\begin{proof}

Throughout this proof we fix the current iterate $x$ and write
$\nabla f := \nabla f(x)$, $g := \nabla f + \xi$ for the stochastic
gradient, and $\xi := g - \nabla f$ for the zero-mean noise.
All conditional expectations are implicitly conditioned on $x$.

\paragraph{Step 1: Partition of the sample space.}

Depending on the magnitude of the stochastic gradient $g$, the
$\delta$-clipped estimator $\hat{g}_\delta$ takes three structurally
different forms. Accordingly, we partition the sample space into the
three disjoint regions
\[
A \;=\; \{\|g\| \le \gamma\},
\qquad
B \;=\; \{\gamma < \|g\| \le \tfrac{\gamma}{\delta}\},
\qquad
C \;=\; \{\|g\| > \tfrac{\gamma}{\delta}\},
\]
with explicit estimator forms on each region:
\begin{align*}
\text{on } A: &\quad \hat{g}_\delta = g = \nabla f + \xi
  \quad \text{(no clipping),}\\
\text{on } B: &\quad \hat{g}_\delta = \gamma \dfrac{g}{\|g\|}
  \quad \text{(norm clipping to } \gamma\text{),}\\
\text{on } C: &\quad \hat{g}_\delta = \delta g = \delta(\nabla f + \xi)
  \quad \text{(scaling down by } \delta\text{).}
\end{align*}

Since $A$, $B$, $C$ partition the sample space, the expected inner
product decomposes as
\begin{equation}\label{eq:decomp}
\E\big[\langle \nabla f, \hat{g}_\delta\rangle\big]
\;=\;
\underbrace{\E\big[\langle \nabla f, \hat{g}_\delta\rangle \mathbf{1}_A\big]}_{=:\,R_A}
\;+\;
\underbrace{\E\big[\langle \nabla f, \hat{g}_\delta\rangle \mathbf{1}_B\big]}_{=:\,R_B}
\;+\;
\underbrace{\E\big[\langle \nabla f, \hat{g}_\delta\rangle \mathbf{1}_C\big]}_{=:\,R_C}.
\end{equation}

We bound each term from below in turn. Before doing so, we introduce
a small-noise event that will let us handle the noise in region $A$
and part of region $C$ sharply.

\paragraph{Step 2: Small-noise event.}

Define the event
\begin{equation}\label{eq:E-def}
E \;:=\; \left\{\|\xi\| \le \frac{\|\nabla f\|}{16}\right\},
\end{equation}
which captures the regime where the noise is much smaller than the
gradient. Its complement $E^c$ is the heavy-noise event.

The significance of $E$ is that on $E$, the noise contributes at most
a $\frac{1}{16}$ fraction of $\|\nabla f\|^2$ to any inner product
involving $\nabla f$ and $\xi$:
\[
\text{on } E{:}\quad |\langle \nabla f, \xi\rangle|
\;\le\; \|\nabla f\|\,\|\xi\|
\;\le\; \frac{\|\nabla f\|^2}{16}.
\]

We will show later (Step~5) that $\Pr(E^c)$ is small when
$\|\nabla f\| > \theta\gamma$ and $\gamma$ is chosen large enough.

\paragraph{Step 3: Lower bound on $R_A$.}

On $A$ the estimator is unclipped, so $\hat{g}_\delta = \nabla f + \xi$.
Expanding the inner product:
\[
R_A
\;=\;
\E\big[\|\nabla f\|^2 \mathbf{1}_A\big]
\;+\;
\E\big[\langle \nabla f, \xi\rangle \mathbf{1}_A\big].
\]

We split the second expectation according to whether $E$ holds or not:
\[
\E\big[\langle \nabla f, \xi\rangle \mathbf{1}_A\big]
\;=\;
\E\big[\langle \nabla f, \xi\rangle \mathbf{1}_{A \cap E}\big]
\;+\;
\E\big[\langle \nabla f, \xi\rangle \mathbf{1}_{A \cap E^c}\big].
\]

\textit{Contribution from $A \cap E$.}
On this event $\|\xi\| \le \|\nabla f\|/16$, so by Cauchy--Schwarz,
\[
\langle \nabla f, \xi\rangle \ge -\|\nabla f\|\,\|\xi\| \ge -\frac{\|\nabla f\|^2}{16}.
\]
Hence
\[
\E\big[\langle \nabla f, \xi\rangle \mathbf{1}_{A \cap E}\big]
\;\ge\;
-\frac{\|\nabla f\|^2}{16}\,\Pr(A \cap E).
\]

\textit{Contribution from $A \cap E^c$.}
Using only Cauchy--Schwarz:
\[
\E\big[\langle \nabla f, \xi\rangle \mathbf{1}_{A \cap E^c}\big]
\;\ge\;
-\|\nabla f\|\,\E\big[\|\xi\|\,\mathbf{1}_{E^c}\big].
\]

Combining the two contributions:
\begin{align}
R_A
&\;\ge\;
\|\nabla f\|^2 \Pr(A)
- \frac{\|\nabla f\|^2}{16}\Pr(A \cap E)
- \|\nabla f\|\,\E\big[\|\xi\|\,\mathbf{1}_{E^c}\big] \notag\\
&\;\ge\;
\|\nabla f\|^2\!\left(\Pr(A) - \frac{1}{16}\Pr(A)\right)
- \|\nabla f\|\,\E\big[\|\xi\|\,\mathbf{1}_{E^c}\big] \notag\\
&\;=\;
\frac{15}{16}\,\|\nabla f\|^2\,\Pr(A)
- \|\nabla f\|\,\E\big[\|\xi\|\,\mathbf{1}_{E^c}\big].
\label{eq:RA-quad}
\end{align}

Since we are in the regime $\|\nabla f\| > \theta\gamma$, we have
$\|\nabla f\|^2 > \theta\gamma\|\nabla f\|$, so inequality \ref{eq:RA-quad} gives
\begin{equation}\label{eq:RA}
R_A
\;\ge\;
\frac{15}{16}\,\theta\gamma\,\Pr(A)\,\|\nabla f\|
\;-\;
\|\nabla f\|\,\E\big[\|\xi\|\,\mathbf{1}_{E^c}\big].
\end{equation}

\paragraph{Step 4: Lower bounds on $R_B$ and $R_C$.}

\medskip
\noindent\textbf{Region $B$.}
On $B$ the estimator clips the gradient to the sphere of radius
$\gamma$: $\hat{g}_\delta = \gamma\,g/\|g\|$. We apply the
Cutkosky--Mehta directional inequality, which states that for any
vectors $u$ and $v$ with $v = u + \xi$,
\[
\left\langle \frac{u}{\|u\|},\, \frac{v}{\|v\|}\right\rangle
\;\ge\;
\frac{1}{3} \;-\; \frac{8}{3}\,\frac{\|\xi\|}{\|u\|}.
\]
Applying this with $u = \nabla f$, $v = g$:
\[
\left\langle \nabla f,\, \frac{g}{\|g\|}\right\rangle
\;\ge\;
\frac{1}{3}\,\|\nabla f\| \;-\; \frac{8}{3}\,\|\xi\|.
\]
Multiplying through by $\gamma > 0$ and taking expectations over $B$:
\begin{equation}\label{eq:RB}
R_B
\;\ge\;
\frac{1}{3}\,\gamma\,\Pr(B)\,\|\nabla f\|
\;-\;
\frac{8}{3}\,\gamma\,\E\big[\|\xi\|\,\mathbf{1}_B\big].
\end{equation}

\medskip
\noindent\textbf{Region $C$, small noise ($C \cap E$).}
On $C$ the estimator scales: $\hat{g}_\delta = \delta g$. Because
$\|g\| > \gamma/\delta$ on $C$, we have $\delta\|g\| > \gamma$, so
the effective norm $\|\hat{g}_\delta\| = \delta\|g\| > \gamma$.

On $C \cap E$, the noise is small ($\|\xi\| \le \|\nabla f\|/16$), so
we again apply the directional inequality:
\[
\left\langle \nabla f,\, \frac{g}{\|g\|}\right\rangle
\;\ge\;
\frac{1}{3}\,\|\nabla f\| \;-\; \frac{8}{3}\,\|\xi\|.
\]
Moreover, on $C \cap E$ the small-noise bound $\|\xi\| \le \|\nabla f\|/16$ makes this
lower bound strictly positive:
\[
\frac{1}{3}\|\nabla f\| - \frac{8}{3}\|\xi\|
\;\ge\;
\frac{1}{3}\|\nabla f\| - \frac{8}{3}\cdot\frac{\|\nabla f\|}{16}
= \frac{1}{3}\|\nabla f\| - \frac{1}{6}\|\nabla f\|
= \frac{1}{6}\|\nabla f\| \;>\; 0.
\]
Since $\hat{g}_\delta = \delta g$ with $\delta\|g\| > \gamma$, and this lower bound is
nonnegative, replacing the factor $\delta\|g\|$ by the smaller $\gamma$ preserves the
inequality:
\[
\langle \nabla f, \hat{g}_\delta\rangle
= \delta\|g\|\left\langle \nabla f, \frac{g}{\|g\|}\right\rangle
\;\ge\;
\delta\|g\|\left(\frac{1}{3}\|\nabla f\| - \frac{8}{3}\|\xi\|\right)
\;\ge\;
\gamma\left(\frac{1}{3}\|\nabla f\| - \frac{8}{3}\|\xi\|\right).
\]
Taking expectations over $C \cap E$:
\begin{equation}\label{eq:RCE}
R_{C \cap E}
\;\ge\;
\frac{1}{3}\,\gamma\,\Pr(C \cap E)\,\|\nabla f\|
\;-\;
\frac{8}{3}\,\gamma\,\E\big[\|\xi\|\,\mathbf{1}_{C \cap E}\big].
\end{equation}

\medskip
\noindent\textbf{Region $C$, large noise ($C \cap E^c$).}
Here the noise is large and we use only a crude bound.
Since $\hat{g}_\delta = \delta(\nabla f + \xi)$, Cauchy--Schwarz gives
\[
\langle \nabla f, \hat{g}_\delta\rangle
= \delta\big(\|\nabla f\|^2 + \langle \nabla f, \xi\rangle\big)
\;\ge\;
-\delta\,\|\nabla f\|\,\|\xi\|
\;\ge\;
-\|\nabla f\|\,\|\xi\|,
\]
where in the last step we used $\delta \le 1$. Taking expectations:
\begin{equation}\label{eq:RCEc}
R_{C \cap E^c}
\;\ge\;
-\|\nabla f\|\,\E\big[\|\xi\|\,\mathbf{1}_{C \cap E^c}\big].
\end{equation}

\paragraph{Step 5: Combining all regions.}

Adding inequalities \ref{eq:RA}, \ref{eq:RB}, \ref{eq:RCE}, and
\ref{eq:RCEc}:
\begin{align}
\E\big[\langle \nabla f, \hat{g}_\delta\rangle\big]
&\;\ge\;
\gamma\|\nabla f\|
\Big(
\tfrac{15}{16}\theta\,\Pr(A)
+ \tfrac{1}{3}\,\Pr(B)
+ \tfrac{1}{3}\,\Pr(C \cap E)
\Big) \notag\\
&\quad\;
-\;\|\nabla f\|\,\E\big[\|\xi\|\,\mathbf{1}_{E^c}\big] \notag\\
&\quad\;
-\;\tfrac{8}{3}\,\gamma\,\E\big[\|\xi\|\,\mathbf{1}_{B \cup (C \cap E)}\big],
\label{eq:combined}
\end{align}
where we collected the $E^c$ noise terms from $R_A$ and $R_{C \cap E^c}$
into a single term, and the directional-inequality noise terms from
$R_B$ and $R_{C \cap E}$ into a single term. We now bound each of the
three negative terms on the right.

\paragraph{Step 6: Bounding $\Pr(E^c)$ and the probability sum.}

By Markov's inequality applied to $\|\xi\|^\alpha$:
\[
\Pr(E^c)
= \Pr\!\left(\|\xi\| > \frac{\|\nabla f\|}{16}\right)
\;\le\;
\frac{\E[\|\xi\|^\alpha]}{(\|\nabla f\|/16)^\alpha}
\;\le\;
\frac{16^\alpha\,\sigma^\alpha}{\|\nabla f\|^\alpha},
\]
where we used the moment bound $\E[\|\xi\|^\alpha] \le \sigma^\alpha$.
Since $\|\nabla f\| > \theta\gamma$, this gives
\begin{equation}\label{eq:Ec-bound}
\Pr(E^c)
\;\le\;
\frac{16^\alpha\,\sigma^\alpha}{(\theta\gamma)^\alpha}.
\end{equation}

Choosing $\gamma$ large enough so that
\begin{equation}\label{eq:gamma-cond1}
\gamma \;\ge\; \frac{16 \cdot 2^{1/\alpha}}{\theta}\,\sigma,
\end{equation}
ensures $\Pr(E^c) \le 1/2$, and therefore
\begin{equation}\label{eq:prob-sum}
\Pr(A) + \Pr(B) + \Pr(C \cap E)
\ge 1 - \Pr(E^c) \;\ge\; \tfrac{1}{2}.
\end{equation}

Now define
\begin{equation}\label{eq:M-def}
M \;:=\; \min\!\left\{\frac{15}{16}\,\theta,\;\frac{1}{3}\right\} \;>\; 0.
\end{equation}

Since $M$ lower-bounds each coefficient in the probability sum, we have
\begin{equation}\label{eq:prob-coeff}
\frac{15}{16}\theta\,\Pr(A) + \frac{1}{3}\,\Pr(B) + \frac{1}{3}\,\Pr(C \cap E)
\;\ge\;
M\,\big(\Pr(A) + \Pr(B) + \Pr(C \cap E)\big)
\;\ge\;
\frac{M}{2}.
\end{equation}

The first term in inequality \ref{eq:combined} is therefore at least
$\frac{M}{2}\,\gamma\|\nabla f\|$.

\paragraph{Step 7: Bounding $\|\nabla f\|\,\E[\|\xi\|\mathbf{1}_{E^c}]$.}

We bound the $E^c$ noise moment using the $\alpha$-th moment assumption.
Write
\[
\E\big[\|\xi\|\,\mathbf{1}_{E^c}\big]
= \E\!\left[\frac{\|\xi\|^\alpha}{\|\xi\|^{\alpha-1}}\,\mathbf{1}_{E^c}\right].
\]
On $E^c$, we have $\|\xi\| > \|\nabla f\|/16$, so
$\|\xi\|^{\alpha-1} > (\|\nabla f\|/16)^{\alpha-1}$.
Hence $1/\|\xi\|^{\alpha-1} < (16/\|\nabla f\|)^{\alpha-1}$, giving
\[
\E\big[\|\xi\|\,\mathbf{1}_{E^c}\big]
\;\le\;
\frac{16^{\alpha-1}}{\|\nabla f\|^{\alpha-1}}\,
\E\big[\|\xi\|^\alpha\,\mathbf{1}_{E^c}\big]
\;\le\;
\frac{16^{\alpha-1}\,\sigma^\alpha}{\|\nabla f\|^{\alpha-1}}.
\]

Using $\|\nabla f\| > \theta\gamma$:
\begin{equation}\label{eq:noise-Ec}
\|\nabla f\|\,\E\big[\|\xi\|\,\mathbf{1}_{E^c}\big]
\;\le\;
\frac{16^{\alpha-1}\,\sigma^\alpha}{\|\nabla f\|^{\alpha-1}}
\cdot \|\nabla f\|
= \frac{16^{\alpha-1}\,\sigma^\alpha}{\|\nabla f\|^{\alpha-2}}
\;\le\;
\frac{16^{\alpha-1}\,\sigma^\alpha}{(\theta\gamma)^{\alpha-1}}\,\|\nabla f\|.
\end{equation}

We want this to be at most $\frac{M}{4}\,\gamma\|\nabla f\|$. This requires
\[
\frac{16^{\alpha-1}\,\sigma^\alpha}{(\theta\gamma)^{\alpha-1}}
\;\le\; \frac{M}{4}\,\gamma,
\]
i.e.,
\[
\gamma^\alpha \;\ge\; \frac{4 \cdot 16^{\alpha-1}}{M\,\theta^{\alpha-1}}\,\sigma^\alpha,
\]
which is satisfied when
\begin{equation}\label{eq:gamma-cond2}
\gamma \;\ge\; \left(\frac{4 \cdot 16^{\alpha-1}}{M\,\theta^{\alpha-1}}\right)^{\!1/\alpha}\!\sigma.
\end{equation}

Under condition \eqref{eq:gamma-cond2}:
\begin{equation}\label{eq:noise-Ec-final}
\|\nabla f\|\,\E\big[\|\xi\|\,\mathbf{1}_{E^c}\big]
\;\le\;
\frac{M}{4}\,\gamma\|\nabla f\|.
\end{equation}

\paragraph{Step 8: Bounding $\frac{8}{3}\gamma\,\E[\|\xi\|\mathbf{1}_{B\cup(C\cap E)}]$.}

On $B \cup (C \cap E)$ the noise need not be small. We use the crude
bound $\E[\|\xi\|\,\mathbf{1}_{B \cup (C \cap E)}] \le \E[\|\xi\|]$.
By Jensen's inequality (since $\alpha \ge 1$):
\[
\E[\|\xi\|]
\;\le\;
\big(\E[\|\xi\|^\alpha]\big)^{1/\alpha}
\;\le\;
\sigma.
\]

Hence the noise contribution from $B \cup (C \cap E)$ satisfies
\begin{equation}\label{eq:noise-B}
\frac{8}{3}\,\gamma\,\E\big[\|\xi\|\,\mathbf{1}_{B \cup (C \cap E)}\big]
\;\le\;
\frac{8}{3}\,\gamma\,\sigma.
\end{equation}

We want this to be at most $\frac{M}{8}\,\gamma\|\nabla f\|$.
Since $\|\nabla f\| > \theta\gamma$, it suffices to require
\[
\frac{8}{3}\,\sigma
\;\le\;
\frac{M\,\theta}{8}\,\gamma,
\]
i.e.,
\begin{equation}\label{eq:gamma-cond3}
\gamma \;\ge\; \frac{64}{3\,M\,\theta}\,\sigma.
\end{equation}

Under condition \ref{eq:gamma-cond3}:
\begin{equation}\label{eq:noise-B-final}
\frac{8}{3}\,\gamma\,\E\big[\|\xi\|\,\mathbf{1}_{B \cup (C \cap E)}\big]
\;\le\;
\frac{M}{8}\,\gamma\|\nabla f\|.
\end{equation}

\paragraph{Step 9: Assembling the final bound.}

Substituting inequalities \ref{eq:prob-coeff}, \ref{eq:noise-Ec-final}, and
\ref{eq:noise-B-final} into inequality \ref{eq:combined}:
\begin{align}
\E\big[\langle \nabla f, \hat{g}_\delta\rangle\big]
&\;\ge\;
\frac{M}{2}\,\gamma\|\nabla f\|
\;-\;
\frac{M}{4}\,\gamma\|\nabla f\|
\;-\;
\frac{M}{8}\,\gamma\|\nabla f\| \notag\\[4pt]
&\;=\;
\left(\frac{1}{2} - \frac{1}{4} - \frac{1}{8}\right)M\,\gamma\|\nabla f\| \notag\\[4pt]
&\;=\;
\frac{1}{8}\,M\,\gamma\|\nabla f\|.
\label{eq:final}
\end{align}

Setting $c_0 := \frac{1}{8}M > 0$, we conclude that under the
gradient magnitude condition $\|\nabla f(x)\| > \theta\gamma$ and
the three clipping threshold conditions \ref{eq:gamma-cond1},
\ref{eq:gamma-cond2}, \ref{eq:gamma-cond3} on $\gamma$, the
following lower bound holds:
\[
\E\big[\langle \nabla f(x),\,\hat{g}_\delta\rangle \mid x\big]
\;\ge\;
c_0\,\gamma\,\|\nabla f(x)\|,
\]
which is the desired result.

\end{proof}

\subsection{Proof of lemma \ref{lem:moment-gclip-case2-3}}\label{subsec:var-bound-case2}
\begin{proof}
We produce a strict upper bound for the conditional $(1+s)$-th moment $\E\big[\|\hat{\vg}_\delta(\vx)\|^{1+s} \mid \vx\big]$. 
We define the centered noise $\vxi = \vg - \nabla f$ and partition the space into our standard disjoint clipping regions:
\[ A=\{\|\vg\|\le \gamma\},\qquad B=\{\gamma<\|\vg\|\le \gamma/\delta\},\qquad C=\{\|\vg\|>\gamma/\delta\}. \]
We decompose the expected moment into two primary contributions: the strictly bounded regions ($A \cup B$) and the $\delta$-scaled region ($C$).
\begin{equation}\label{eq:moment_split}
    \E\big[\|\hat{\vg}_\delta\|^{1+s} \mid \vx\big] = \E\big[\|\hat{\vg}_\delta\|^{1+s} \1_{A \cup B} \mid \vx\big] + \E\big[\|\hat{\vg}_\delta\|^{1+s} \1_C \mid \vx\big].
\end{equation}

\paragraph{Upper Bound for Regions A and B}
In Region A, the estimator is unclipped ($\hat{\vg}_\delta = \vg$), meaning $\|\hat{\vg}_\delta\| = \|\vg\| \le \gamma$.
In Region B, the estimator is clipped to the boundary ($\hat{\vg}_\delta = \gamma \frac{\vg}{\|\vg\|}$), meaning $\|\hat{\vg}_\delta\| = \gamma$.
Therefore, across the entire union $A \cup B$, the magnitude of the estimator is strictly bounded by $\gamma$. We evaluate this expectation:
\begin{align}
    \E\big[\|\hat{\vg}_\delta\|^{1+s} \1_{A \cup B} \mid \vx\big] 
    &\le \E\big[\gamma^{1+s} \1_{A \cup B} \mid \vx\big] \nonumber \\
    &= \gamma^{1+s} \Pr(A \cup B \mid \vx) \nonumber \\
    &\le \gamma^{1+s}. \label{eq:var_AB_bound}
\end{align}

\paragraph{Upper Bound for Region C}
On event $C$, the estimator applies pure linear scaling: $\hat{\vg}_\delta = \delta\vg = \delta(\nabla f + \vxi)$. 
Taking the $(1+s)$-th power and applying the elementary inequality $\|a+b\|^p \le 2^{p-1}(\|a\|^p + \|b\|^p)$ with $p = 1+s$, we obtain:
\begin{align}
    \E\big[\|\hat{\vg}_\delta\|^{1+s} \1_C \mid \vx\big] 
    &= \delta^{1+s} \E\big[\|\nabla f + \vxi\|^{1+s} \1_C \mid \vx\big] \nonumber \\
    &\le 2^s \delta^{1+s} \E\big[ \big(\|\nabla f\|^{1+s} + \|\vxi\|^{1+s}\big) \1_C \mid \vx \big] \nonumber \\
    &= 2^s \delta^{1+s} \|\nabla f\|^{1+s} \Pr(C \mid \vx) + 2^s \delta^{1+s} \E\big[\|\vxi\|^{1+s} \1_C \mid \vx\big].
\end{align}
We relax the probability $\Pr(C \mid \vx) \le 1$. By Lyapunov's inequality and Assumption 1, because $1+s \le \alpha$, the truncated moment is strictly bounded by the total noise moment: $\E[\|\vxi\|^{1+s} \1_C \mid \vx] \le \E[\|\vxi\|^{1+s} \mid \vx] \le \sigma^{1+s}$. Substituting these yields:
\begin{equation} \label{eq:var_C_bound}
    \E\big[\|\hat{\vg}_\delta\|^{1+s} \1_C \mid \vx\big] \le 2^s \delta^{1+s} \|\nabla f\|^{1+s} + 2^s \delta^{1+s} \sigma^{1+s}.
\end{equation}

\paragraph{Combine and Apply Case 2 Condition}
Summing the regional bounds \ref{eq:var_AB_bound} and \ref{eq:var_C_bound}, we assemble the total moment:
\begin{equation}
    \E\big[\|\hat{\vg}_\delta\|^{1+s} \mid \vx\big] \le \gamma^{1+s} + 2^s \delta^{1+s} \|\nabla f\|^{1+s} + 2^s \delta^{1+s} \sigma^{1+s}.
\end{equation}
Because we are specifically analyzing Case 2 ($\|\nabla f\| > \theta\gamma$), we can algebraically absorb the clipping threshold into the true gradient. By rearranging the condition, we have $\gamma < \frac{1}{\theta}\|\nabla f\|$. Raising this to the $(1+s)$-th power yields:
\begin{equation}
    \gamma^{1+s} < \frac{1}{\theta^{1+s}} \|\nabla f\|^{1+s}.
\end{equation}
Substituting this directly into our combined bound seamlessly consolidates the signal terms:
\begin{align}
    \E\big[\|\hat{\vg}_\delta\|^{1+s} \mid \vx\big] 
    &\le \frac{1}{\theta^{1+s}} \|\nabla f\|^{1+s} + 2^s \delta^{1+s} \|\nabla f\|^{1+s} + 2^s \delta^{1+s} \sigma^{1+s} \nonumber \\
    &= \left( \frac{1}{\theta^{1+s}} + 2^s \delta^{1+s} \right) \|\nabla f(\vx)\|^{1+s} + 2^s \delta^{1+s} \sigma^{1+s}.
\end{align}
\end{proof}

\section{Proof of Theorem~\ref{thm:gclip}}\label{subsec:gclip-proof}

\begin{proof}[Proof of Theorem~\ref{thm:gclip}]
We prove the two parts separately. Throughout a single step we write $\vx=\vx_{k-1}$,
$\eta=\eta_{k-1}$, $\hat{\vg}=\hat{\vg}(\vx_{k-1})$, and set $c=\tfrac{5M}{16}$.

\subsubsection*{Part (i): regime $1+s\le\alpha$}

\noindent\textbf{Descent inequality.}\; By the $(L,s)$-H\"older descent inequality
(Definition~\ref{def:holder-descent}) applied to $\vx_k=\vx_{k-1}-\eta\hat{\vg}$ together with
$\|\hat{\vg}\|\le\gamma$, taking conditional expectation,
\begin{equation}\label{eq:desc-i}
\E[f(\vx_k)\mid\vx_{k-1}]\le f(\vx_{k-1})
-\eta\big\langle\nabla f(\vx_{k-1}),\E[\hat{\vg}\mid\vx_{k-1}]\big\rangle
+\frac{L\eta^{1+s}}{1+s}\,\E\big[\|\hat{\vg}\|^{1+s}\mid\vx_{k-1}\big].
\end{equation}

\medskip
\noindent\textbf{Small-gradient case: $\|\nabla f(\vx_{k-1})\|\le\theta\gamma$.}\;
With $\vb:=\E[\hat{\vg}\mid\vx_{k-1}]-\nabla f$ and $-\langle u,v\rangle\le\tfrac12(\|u\|^2+\|v\|^2)$
($u=\nabla f$, $v=\vb$), inequality \ref{eq:desc-i} and the squared-bias
bound \ref{eq:gclip-bias} give
\[
\E[f(\vx_k)\mid\vx_{k-1}]\le f(\vx_{k-1})-\frac{\eta}{2}\|\nabla f\|^2
+\frac{\eta}{2}C_{bias}(\theta)\sigma^{2\alpha}\gamma^{2(1-\alpha)}
+\frac{L\eta^{1+s}}{1+s}\,\E\big[\|\hat{\vg}\|^{1+s}\mid\vx_{k-1}\big].
\]
Applying the moment bound~\ref{eq:gclip-mom-i} and substituting the case condition
$\|\nabla f\|^{1+s}\le(\theta\gamma)^{1+s}$, we obtain, with
\[
R_1:=\frac{\eta}{2}C_{bias}(\theta)\sigma^{2\alpha}\gamma^{2(1-\alpha)}
+\frac{L\eta^{1+s}}{1+s}\Big(2^s(\theta\gamma)^{1+s}+2^s\sigma^{1+s}+C_{tail}(\theta)\sigma^\alpha\gamma^{1+s-\alpha}\Big),
\]
the small-gradient descent inequality
\begin{equation}\label{eq:c1-i}
\E[f(\vx_k)\mid\vx_{k-1}]\le f(\vx_{k-1})-\frac{\eta}{2}\|\nabla f(\vx_{k-1})\|^2+R_1 .
\end{equation}

\medskip
\noindent\textbf{Large-gradient case: $\|\nabla f(\vx_{k-1})\|>\theta\gamma$.}\;
Substituting the inner-product lower bound of lemma~\ref{lem:gclip-corr} for the descent signal and
the moment bound $\E[\|\hat{\vg}\|^{1+s}\mid\vx_{k-1}]\le\gamma^{1+s}$ of lemma~\ref{lem:gclip-c2mom}
for the smoothness term into inequality \ref{eq:desc-i}, under $\gamma\ge\gamma_0$,
\begin{equation}\label{eq:c2-i-raw}
\E[f(\vx_k)\mid\vx_{k-1}]\le f(\vx_{k-1})-\frac{5M}{8}\,\eta\gamma\|\nabla f\|+\frac{L\eta^{1+s}}{1+s}\gamma^{1+s}.
\end{equation}
We now choose the step size so that the smoothness penalty consumes at most half of the linear
descent signal, i.e.
\begin{equation}\label{eq:half-i}
\frac{L\eta^{1+s}}{1+s}\gamma^{1+s}\;\le\;\frac{5M}{16}\,\eta\gamma\|\nabla f\| .
\end{equation}
Dividing inequality \ref{eq:half-i} by $\eta\gamma>0$ and isolating $\eta^s$ (using $\eta^{1+s}/\eta=\eta^s$
and $\gamma^{1+s}/\gamma=\gamma^s$), this is equivalent to
\[
\eta^s\;\le\;\frac{5M(1+s)}{16L}\,\gamma^{-s}\,\|\nabla f\| .
\]
The right-hand side is increasing in $\|\nabla f\|$, so over the range $\|\nabla f\|>\theta\gamma$ it
is smallest in the limit $\|\nabla f\|\to\theta\gamma$; it therefore suffices to impose the
$\|\nabla f\|$-free condition
\begin{equation}\label{eq:step-i}
\eta^s\;\le\;\frac{5M(1+s)}{16L}\,\gamma^{-s}\cdot\theta\gamma
\;=\;\frac{5M\theta(1+s)}{16L}\,\gamma^{1-s},
\end{equation}
which then implies inequality \ref{eq:half-i} for every $\|\nabla f\|>\theta\gamma$. Under condition \ref{eq:step-i}
the last term of inequality \ref{eq:c2-i-raw} is at most $\tfrac{5M}{16}\eta\gamma\|\nabla f\|$, so
\begin{equation}\label{eq:c2-i}
\E[f(\vx_k)\mid\vx_{k-1}]\le f(\vx_{k-1})-\frac{5M}{8}\eta\gamma\|\nabla f\|+\frac{5M}{16}\eta\gamma\|\nabla f\|
= f(\vx_{k-1})-\frac{5M}{16}\,\eta\gamma\|\nabla f(\vx_{k-1})\| .
\end{equation}

\medskip
\noindent\textbf{Unified single-step inequality.}\; Since $\|\nabla f\|^2$ and $\gamma\|\nabla f\|$
each dominate $\min\{\|\nabla f\|^2,\gamma\|\nabla f\|\}$, and since $c=\tfrac{5M}{16}\le\tfrac12$,
inequalities \ref{eq:c1-i} and \ref{eq:c2-i} both give (adding the nonnegative $R_1$, absent in
the large-gradient case)
\begin{equation}\label{eq:uni-i}
\E[f(\vx_k)\mid\vx_{k-1}]\le f(\vx_{k-1})
-c\,\eta\,\min\big\{\|\nabla f(\vx_{k-1})\|^2,\ \gamma\|\nabla f(\vx_{k-1})\|\big\}+R_1 .
\end{equation}

\medskip
\noindent\textbf{Summation.}\; Taking total expectation, summing inequality \ref{eq:uni-i} over
$k=1,\dots,T$, using $\E f(\vx_T)\ge f^\star$ so that the left telescopes to $F_0=f(\vx_0)-f^\star$,
and dividing by $c\eta T$,
\begin{equation}\label{eq:sum-i}
\frac1T\sum_{k=1}^T\E\Big[\min\big\{\|\nabla f(\vx_{k-1})\|^2,\gamma\|\nabla f(\vx_{k-1})\|\big\}\Big]
\le\frac{F_0}{c\eta T}+\frac1{c\eta}R_1 .
\end{equation}
Expand $\tfrac1{c\eta}R_1$. The bias part is
$\tfrac1{c\eta}\cdot\tfrac{\eta}{2}C_{bias}(\theta)\sigma^{2\alpha}\gamma^{2(1-\alpha)}
=\tfrac{C_{bias}(\theta)\sigma^{2\alpha}}{2c}\,\gamma^{-2(\alpha-1)}=C_2\,\gamma^{-2(\alpha-1)}$. The
moment part is,

\begin{align*}
&\frac1{c\eta}\cdot\frac{L\eta^{1+s}}{1+s}\Big(2^s(\theta\gamma)^{1+s}+2^s\sigma^{1+s}+C_{tail}(\theta)\sigma^\alpha\gamma^{1+s-\alpha}\Big)\\
&=\frac{L\eta^s}{c(1+s)}\,\gamma^{1+s}\underbrace{\Big[2^s\theta^{1+s}+2^s\sigma^{1+s}\gamma^{-(1+s)}+C_{tail}(\theta)\sigma^\alpha\gamma^{-\alpha}\Big]}_{\le\,K\ \text{for}\ \gamma\ge1}\\
&\le C_3\,\eta^s\gamma^{1+s},
\end{align*}

where $C_3=\tfrac{LK}{c(1+s)}$. With $C_1=F_0/c$ this turns inequality \ref{eq:sum-i} into
\begin{equation}\label{eq:abs-i}
\frac1T\sum_{k=1}^T\E\Big[\min\big\{\|\nabla f(\vx_{k-1})\|^2,\gamma\|\nabla f(\vx_{k-1})\|\big\}\Big]
\le\frac{C_1}{\eta T}+\frac{C_2}{\gamma^{2(\alpha-1)}}+C_3\,\eta^{s}\gamma^{1+s}.
\end{equation}

\medskip
\noindent\textbf{Choosing $\eta$, then $\gamma$.}\; First substitute
$\eta=(C_1/C_3)^{\frac1{1+s}}T^{-\frac1{1+s}}\gamma^{-1}$ into the first and third terms
of inequality \ref{eq:abs-i}:
\[
\frac{C_1}{\eta T}
=C_1\Big(\tfrac{C_1}{C_3}\Big)^{-\frac1{1+s}}T^{\frac1{1+s}}\gamma\cdot T^{-1}
=C_1^{\frac{s}{1+s}}C_3^{\frac1{1+s}}\,\gamma\,T^{-\frac{s}{1+s}},\qquad
C_3\,\eta^s\gamma^{1+s}
=C_3\Big(\tfrac{C_1}{C_3}\Big)^{\frac{s}{1+s}}T^{-\frac{s}{1+s}}\gamma^{-s}\cdot\gamma^{1+s}
=C_1^{\frac{s}{1+s}}C_3^{\frac1{1+s}}\,\gamma\,T^{-\frac{s}{1+s}} .
\]
These two are equal, and their sum is $2(C_1^sC_3)^{\frac1{1+s}}\gamma\,T^{-\frac{s}{1+s}}=C_4\,\gamma\,T^{-\frac{s}{1+s}}$
(by the definition of $C_4$). Thus inequality \ref{eq:abs-i} becomes
$\frac1T\sum_k\E[\cdots]\le C_2\gamma^{-2(\alpha-1)}+C_4\gamma T^{-\frac{s}{1+s}}$. Now substitute
$\gamma=(C_2/C_4)^{\frac1{2\alpha-1}}T^{\frac{s}{(1+s)(2\alpha-1)}}$:
\begin{align*}
\frac{C_2}{\gamma^{2(\alpha-1)}}
&=C_2\Big(\tfrac{C_2}{C_4}\Big)^{-\frac{2(\alpha-1)}{2\alpha-1}}T^{-\frac{2(\alpha-1)\,s}{(1+s)(2\alpha-1)}}
=C_2^{\frac1{2\alpha-1}}C_4^{\frac{2(\alpha-1)}{2\alpha-1}}\,T^{-\frac{2s(\alpha-1)}{(1+s)(2\alpha-1)}},\\
C_4\,\gamma\,T^{-\frac{s}{1+s}}
&=C_4\Big(\tfrac{C_2}{C_4}\Big)^{\frac1{2\alpha-1}}T^{\frac{s}{(1+s)(2\alpha-1)}-\frac{s}{1+s}}
=C_2^{\frac1{2\alpha-1}}C_4^{\frac{2(\alpha-1)}{2\alpha-1}}\,T^{-\frac{2s(\alpha-1)}{(1+s)(2\alpha-1)}},
\end{align*}
where the last exponent uses
$\tfrac{s}{(1+s)(2\alpha-1)}-\tfrac{s}{1+s}=\tfrac{s}{1+s}\big(\tfrac1{2\alpha-1}-1\big)
=\tfrac{s}{1+s}\cdot\tfrac{2-2\alpha}{2\alpha-1}=-\tfrac{2s(\alpha-1)}{(1+s)(2\alpha-1)}$. Both terms
coincide, so their sum is $2\,C_2^{\frac1{2\alpha-1}}C_4^{\frac{2(\alpha-1)}{2\alpha-1}}\,T^{-\frac{2s(\alpha-1)}{(1+s)(2\alpha-1)}}$, giving
\[
\frac1T\sum_{k=1}^T\E\Big[\min\big\{\|\nabla f(\vx_k)\|^2,\gamma\|\nabla f(\vx_k)\|\big\}\Big]
=O\Big(T^{-\frac{2s(\alpha-1)}{(1+s)(2\alpha-1)}}\Big),
\]
which is part~(i). 

\subsubsection*{Part (ii): regime $1+s>\alpha$}

\noindent\textbf{Descent inequality.}\; Exactly as in part~(i), the H\"older descent inequality
(Definition~\ref{def:holder-descent}) and $\|\hat{\vg}\|\le\gamma$ give
\begin{equation}\label{eq:desc-ii}
\E[f(\vx_k)\mid\vx_{k-1}]\le f(\vx_{k-1})
-\eta\big\langle\nabla f(\vx_{k-1}),\E[\hat{\vg}\mid\vx_{k-1}]\big\rangle
+\frac{L\eta^{1+s}}{1+s}\,\E\big[\|\hat{\vg}\|^{1+s}\mid\vx_{k-1}\big].
\end{equation}

\medskip
\noindent\textbf{Small-gradient case: $\|\nabla f(\vx_{k-1})\|\le\theta\gamma$.}\;
With $\vb:=\E[\hat{\vg}\mid\vx_{k-1}]-\nabla f$ and $-\langle u,v\rangle\le\tfrac12(\|u\|^2+\|v\|^2)$,
inequality \ref{eq:desc-ii} and the squared-bias bound \ref{eq:gclip-bias} give
\[
\E[f(\vx_k)\mid\vx_{k-1}]\le f(\vx_{k-1})-\frac{\eta}{2}\|\nabla f\|^2
+\frac{\eta}{2}C_{bias}(\theta)\sigma^{2\alpha}\gamma^{2(1-\alpha)}
+\frac{L\eta^{1+s}}{1+s}\,\E\big[\|\hat{\vg}\|^{1+s}\mid\vx_{k-1}\big].
\]
Applying the moment bound \ref{eq:gclip-mom-ii} and absorbing the resulting $\|\nabla f\|^{1+s}$
into the quadratic descent by Young's inequality with conjugate exponents
$\tfrac{2}{1+s},\tfrac{2}{1-s}$ (for $s\in(0,1)$; for $s=1$ the term is absorbed directly and
$T_D=0$),
\[
\frac{2^sL\eta^{1+s}}{1+s}\|\nabla f\|^{1+s}\le\frac{\eta}{4}\|\nabla f\|^2
+\underbrace{\frac{1-s}{2}\big(C_0L\big)^{\frac{2}{1-s}}\eta^{\frac{1+s}{1-s}}}_{=:T_D},
\]
we obtain, with
\[
R_1:=\frac{\eta}{2}C_{bias}(\theta)\sigma^{2\alpha}\gamma^{2(1-\alpha)}
+\frac{L\eta^{1+s}}{1+s}C_{var}(\theta)\sigma^\alpha\gamma^{1+s-\alpha}+T_D ,
\]
and using $\tfrac{\eta}{2}-\tfrac{\eta}{4}=\tfrac{\eta}{4}$, the small-gradient descent inequality
\begin{equation}\label{eq:c1-ii}
\E[f(\vx_k)\mid\vx_{k-1}]\le f(\vx_{k-1})-\frac{\eta}{4}\|\nabla f(\vx_{k-1})\|^2+R_1 .
\end{equation}

\medskip
\noindent\textbf{Large-gradient case: $\|\nabla f(\vx_{k-1})\|>\theta\gamma$.}\;
Substituting the inner-product lower bound of lemma~\ref{lem:gclip-corr} and the moment bound
$\E[\|\hat{\vg}\|^{1+s}\mid\vx_{k-1}]\le\gamma^{1+s}$ of lemma~\ref{lem:gclip-c2mom}
into inequality \ref{eq:desc-ii}, under $\gamma\ge\gamma_0$,
\begin{equation}\label{eq:c2-ii-raw}
\E[f(\vx_k)\mid\vx_{k-1}]\le f(\vx_{k-1})-\frac{5M}{8}\,\eta\gamma\|\nabla f\|+\frac{L\eta^{1+s}}{1+s}\gamma^{1+s}.
\end{equation}
Requiring, as before, that the smoothness penalty be at most half the linear descent signal,
$\tfrac{L\eta^{1+s}}{1+s}\gamma^{1+s}\le\tfrac{5M}{16}\eta\gamma\|\nabla f\|$, dividing by $\eta\gamma$
gives $\eta^s\le\tfrac{5M(1+s)}{16L}\gamma^{-s}\|\nabla f\|$; as this right-hand side increases with
$\|\nabla f\|$ and $\|\nabla f\|>\theta\gamma$, it suffices to impose the $\|\nabla f\|$-free condition
\begin{equation}\label{eq:step-ii}
\eta^s\;\le\;\frac{5M(1+s)}{16L}\,\gamma^{-s}\cdot\theta\gamma\;=\;\frac{5M\theta(1+s)}{16L}\,\gamma^{1-s}.
\end{equation}
Under condition \ref{eq:step-ii} the last term of inequality \ref{eq:c2-ii-raw} is at most
$\tfrac{5M}{16}\eta\gamma\|\nabla f\|$, so
\begin{equation}\label{eq:c2-ii}
\E[f(\vx_k)\mid\vx_{k-1}]\le f(\vx_{k-1})-\frac{5M}{8}\eta\gamma\|\nabla f\|+\frac{5M}{16}\eta\gamma\|\nabla f\|
= f(\vx_{k-1})-\frac{5M}{16}\,\eta\gamma\|\nabla f(\vx_{k-1})\| .
\end{equation}

\medskip
\noindent\textbf{Unified single-step inequality.}\; Since $\|\nabla f\|^2$ and $\gamma\|\nabla f\|$
each dominate $\min\{\|\nabla f\|^2,\gamma\|\nabla f\|\}$, and since $c=\tfrac{5M}{16}\le\tfrac14$,
inequalities \ref{eq:c1-ii} and \ref{eq:c2-ii} both give
\begin{equation}\label{eq:uni-ii}
\E[f(\vx_k)\mid\vx_{k-1}]\le f(\vx_{k-1})
-c\,\eta\,\min\big\{\|\nabla f(\vx_{k-1})\|^2,\ \gamma\|\nabla f(\vx_{k-1})\|\big\}+R_1 .
\end{equation}

\medskip
\noindent\textbf{Summation.}\; Taking total expectation, summing inequality \ref{eq:uni-ii} over
$k=1,\dots,T$, using $\E f(\vx_T)\ge f^\star$, and dividing by $c\eta T$, and expanding $R_1=T_C+T_B+T_D$
(with $T_C=\tfrac{\eta}{2}C_{bias}(\theta)\sigma^{2\alpha}\gamma^{2(1-\alpha)}$ and
$T_B=\tfrac{L\eta^{1+s}}{1+s}C_{var}(\theta)\sigma^\alpha\gamma^{1+s-\alpha}$), the bias part again gives
$\tfrac1{c\eta}T_C=C_2\gamma^{-2(\alpha-1)}$, the variance part gives
$\tfrac1{c\eta}T_B=\tfrac{L\,C_{var}(\theta)\sigma^\alpha}{c(1+s)}\eta^s\gamma^{1+s-\alpha}$, and the
Young part gives
$\tfrac1{c\eta}T_D=\tfrac{(1-s)(C_0L)^{2/(1-s)}}{2c}\,\eta^{\frac{1+s}{1-s}-1}
=\tfrac{(1-s)(C_0L)^{2/(1-s)}}{2c}\,\eta^{\frac{2s}{1-s}}$. With $C_1=F_0/c$,
\begin{equation}\label{eq:abs-ii}
\frac1T\sum_{k=1}^T\E\Big[\min\big\{\|\nabla f(\vx_{k-1})\|^2,\gamma\|\nabla f(\vx_{k-1})\|\big\}\Big]
\le\frac{C_1}{\eta T}+\frac{C_2}{\gamma^{2(\alpha-1)}}
+\frac{L\,C_{var}(\theta)\sigma^\alpha}{c(1+s)}\,\eta^{s}\gamma^{1+s-\alpha}
+\frac{(1-s)(C_0L)^{2/(1-s)}}{2c}\,\eta^{\frac{2s}{1-s}} .
\end{equation}

\medskip
Write inequality~\ref{eq:abs-ii} as
\[
\frac1T\sum_{k=1}^T\E\Big[\min\big\{\|\nabla f(\vx_{k-1})\|^2,\gamma\|\nabla f(\vx_{k-1})\|\big\}\Big]
\le\frac{C_1}{\eta T}+\frac{C_2}{\gamma^{2(\alpha-1)}}+C_3''\,\eta^{s}\gamma^{1+s-\alpha}
+\frac{(1-s)(C_0L)^{2/(1-s)}}{2c}\,\eta^{\frac{2s}{1-s}},
\]
where, exactly as $C_3=\tfrac{LK}{c(1+s)}$ named the analogous coefficient in
inequality~\ref{eq:abs-i},
\begin{equation}\label{eq:C3pp-def}
C_3'':=\frac{L\,C_{var}(\theta)\,\sigma^\alpha}{c(1+s)}.
\end{equation}

\medskip
\noindent\textbf{Choosing $\eta$, then $\gamma$.}\; As in part~(i), we first balance the
$\tfrac{C_1}{\eta T}$ term against the $C_3''\eta^s\gamma^{1+s-\alpha}$ term. Substitute
\[
\eta=\Big(\frac{C_1}{C_3''}\Big)^{\frac1{1+s}}T^{-\frac1{1+s}}\gamma^{-\frac{1+s-\alpha}{1+s}}
\]
into these two terms:
\[
\frac{C_1}{\eta T}
=C_1\Big(\tfrac{C_1}{C_3''}\Big)^{-\frac1{1+s}}T^{\frac1{1+s}}\gamma^{\frac{1+s-\alpha}{1+s}}\cdot T^{-1}
=C_1^{\frac{s}{1+s}}(C_3'')^{\frac1{1+s}}\,\gamma^{\frac{1+s-\alpha}{1+s}}\,T^{-\frac{s}{1+s}},
\]
\[
C_3''\,\eta^{s}\gamma^{1+s-\alpha}
=C_3''\Big(\tfrac{C_1}{C_3''}\Big)^{\frac{s}{1+s}}T^{-\frac{s}{1+s}}\gamma^{-\frac{s(1+s-\alpha)}{1+s}}\cdot\gamma^{1+s-\alpha}
=C_1^{\frac{s}{1+s}}(C_3'')^{\frac1{1+s}}\,\gamma^{\frac{1+s-\alpha}{1+s}}\,T^{-\frac{s}{1+s}} ,
\]
using $(1+s-\alpha)\big(1-\tfrac{s}{1+s}\big)=\tfrac{1+s-\alpha}{1+s}$. These two terms are equal, and
their sum is
\begin{equation}\label{eq:C4pp-def}
2\big(C_1^s\,C_3''\big)^{\frac1{1+s}}\,\gamma^{\frac{1+s-\alpha}{1+s}}\,T^{-\frac{s}{1+s}}
=:C_4''\,\gamma^{\frac{1+s-\alpha}{1+s}}\,T^{-\frac{s}{1+s}},
\qquad C_4'':=2\big(C_1^s\,C_3''\big)^{\frac1{1+s}} .
\end{equation}
The inequality above therefore becomes
\[
\frac1T\sum_{k=1}^T\E[\cdots]
\;\le\;\frac{C_2}{\gamma^{2(\alpha-1)}}+C_4''\,\gamma^{\frac{1+s-\alpha}{1+s}}\,T^{-\frac{s}{1+s}}
\;+\;\frac{(1-s)(C_0L)^{2/(1-s)}}{2c}\,\eta^{\frac{2s}{1-s}} .
\]
Now balance the first two terms by substituting
$\gamma=(C_2/C_4'')^{\frac{1+s}{(\alpha-1)+s(2\alpha-1)}}T^{\frac{s}{(\alpha-1)+s(2\alpha-1)}}$:
setting the two exponents equal,
\[
\frac{1+s-\alpha}{1+s}+2(\alpha-1)=\frac{(\alpha-1)+s(2\alpha-1)}{1+s},
\]
(a direct algebraic identity, and exactly the exponent used above for $\gamma$), a computation
identical in structure to the one closing part~(i) gives
\[
\frac{C_2}{\gamma^{2(\alpha-1)}}+C_4''\,\gamma^{\frac{1+s-\alpha}{1+s}}\,T^{-\frac{s}{1+s}}
=O\Big(T^{-\frac{2s(\alpha-1)}{(\alpha-1)+s(2\alpha-1)}}\Big).
\]

\medskip
\noindent\textbf{The Young's-inequality term is lower order.}\; With this choice of $\gamma$, the
step-size $\eta=(C_1/C_3'')^{\frac1{1+s}}T^{-\frac1{1+s}}\gamma^{-\frac{1+s-\alpha}{1+s}}$ satisfies
$\eta\asymp T^{-\frac{(\alpha-1)+s}{(\alpha-1)+s(2\alpha-1)}}$ (the same rate as before; substituting
the explicit $\gamma$ above into the formula for $\eta$ and simplifying the exponent of $T$ recovers
this), so
\[
\eta^{\frac{2s}{1-s}}\asymp T^{-\frac{2s}{1-s}\cdot\frac{(\alpha-1)+s}{(\alpha-1)+s(2\alpha-1)}},
\]
and exactly as before this decays strictly faster than $T^{-\frac{2s(\alpha-1)}{(\alpha-1)+s(2\alpha-1)}}$
since $\tfrac{(\alpha-1)+s}{1-s}\ge\alpha-1\iff(\alpha-1)+s\ge(\alpha-1)(1-s)\iff s\alpha\ge0$, which
holds. Hence
\[
\frac1T\sum_{k=1}^T\E\Big[\min\big\{\|\nabla f(\vx_k)\|^2,\gamma\|\nabla f(\vx_k)\|\big\}\Big]
=O\Big(T^{-\frac{2s(\alpha-1)}{(\alpha-1)+s(2\alpha-1)}}\Big),
\]
which is part~(ii), now with the explicit (non-asymptotic) schedule
\[
\gamma=\max\Big\{\gamma_0,\ (C_2/C_4'')^{\frac{1+s}{(\alpha-1)+s(2\alpha-1)}}T^{\frac{s}{(\alpha-1)+s(2\alpha-1)}}\Big\},\qquad
\eta=\min\Big\{\Big(\tfrac{5M\theta(1+s)}{16L}\Big)^{1/s}\gamma^{\frac{1-s}{s}},\
(C_1/C_3'')^{\frac1{1+s}}\gamma^{-\frac{1+s-\alpha}{1+s}}T^{-\frac1{1+s}}\Big\},
\]
matching Theorem~\ref{thm:gclip}(ii), where the first entry of $\eta$'s minimum enforces
condition~\ref{eq:step-ii} and the floor $\gamma_0$ in $\gamma$'s maximum enforces
$\gamma\ge\gamma_0$, so both hold at every $T\ge1$, with the second entry of each dominating for
large $T$.
\end{proof}

\section{Proofs of Lemmas Needed for Theorem \ref{thm:gclip}}\label{app:lemma-gproof}

\subsection{Proof of lemma \ref{lem:gclip-prob}}\label{subsec:gclip-prob-sec}
\begin{proof}
On $B$, the reverse triangle inequality and $\|\nabla f(\vx)\|\le\theta\gamma$ give
$\|\vxi\|=\|\vg-\nabla f\|\ge\|\vg\|-\|\nabla f\|>(1-\theta)\gamma$; Markov's inequality on
$\|\vxi\|^\alpha$ yields the bound.
\end{proof}

\subsection{Proof of lemma \ref{lem:gclip-bv}}\label{subsec:gclip-bv-sec}
\begin{proof}
Throughout, condition on $\vx$ with $\|\nabla f\|\le\theta\gamma$ (writing $\nabla f\equiv\nabla f(\vx)$),
and decompose the sample space by whether clipping is active:
\[
A=\{\|\vg\|\le\gamma\}\quad(\text{unclipped: }\hat{\vg}=\vg),\qquad
B=\{\|\vg\|>\gamma\}\quad(\text{clipped: }\|\hat{\vg}\|=\gamma).
\]
We first record two facts used below. On $B$ the noise must be large: since $\|\vg\|>\gamma$ and
$\|\nabla f\|\le\theta\gamma$, the reverse triangle inequality gives
\begin{equation}\label{eq:bv-Bnoise}
\|\vxi\|=\|\vg-\nabla f\|\ge\|\vg\|-\|\nabla f\|>\gamma-\theta\gamma=(1-\theta)\gamma,
\qquad\text{so}\qquad B\subseteq\{\|\vxi\|>(1-\theta)\gamma\}.
\end{equation}
And for any $t>0$, truncating the $\alpha$-moment (on $\{\|\vxi\|>t\}$ one has $\|\vxi\|^{1-\alpha}\le t^{1-\alpha}$
since $1-\alpha\le0$) gives
\begin{equation}\label{eq:bv-trunc}
\E\big[\|\vxi\|\,\mathbf 1_{\{\|\vxi\|>t\}}\big]
=\E\big[\|\vxi\|^{\alpha}\,\|\vxi\|^{1-\alpha}\mathbf 1_{\{\|\vxi\|>t\}}\big]
\le t^{1-\alpha}\,\E\|\vxi\|^{\alpha}\le t^{1-\alpha}\sigma^\alpha .
\end{equation}

\medskip
\noindent\textbf{Step 1: squared bias~\ref{eq:gclip-bias}.}\;
Since the noise is unbiased, $\E[\vg\mid\vx]=\nabla f$, and since $\hat{\vg}=\vg$ on $A$, the bias is
supported on $B$:
\[
B(\vx)=\E[\hat{\vg}\mid\vx]-\nabla f=\E[\hat{\vg}-\vg\mid\vx]=\E[(\hat{\vg}-\vg)\mathbf 1_B\mid\vx].
\]
On $B$, $\hat{\vg}-\vg=\big(\tfrac{\gamma}{\|\vg\|}-1\big)\vg$ has norm
$\big|\gamma-\|\vg\|\big|=\|\vg\|-\gamma$, so by $\|\E[\,\cdot\,]\|\le\E\|\cdot\|$,
\[
\|B(\vx)\|\le\E\big[(\|\vg\|-\gamma)\,\mathbf 1_B\big].
\]
On $B$ we have $\gamma\ge\|\nabla f\|$ (as $\|\nabla f\|\le\theta\gamma<\gamma$), hence
$\|\vg\|-\gamma\le\|\vg\|-\|\nabla f\|\le\|\vxi\|$ by the reverse triangle inequality. Combining this
with inequality \ref{eq:bv-Bnoise} and then applying ineaulity \ref{eq:bv-trunc} at $t=(1-\theta)\gamma$,
\[
\|B(\vx)\|\le\E\big[\|\vxi\|\,\mathbf 1_{\{\|\vxi\|>(1-\theta)\gamma\}}\big]
\le\big((1-\theta)\gamma\big)^{1-\alpha}\sigma^\alpha
=(1-\theta)^{1-\alpha}\sigma^\alpha\gamma^{1-\alpha}.
\]
Squaring and using $(1-\theta)^{2(1-\alpha)}=(1-\theta)^2(1-\theta)^{-2\alpha}\le 2(1-\theta)^{-2\alpha}$
yields $\|B(\vx)\|^2\le\tfrac{2}{(1-\theta)^{2\alpha}}\sigma^{2\alpha}\gamma^{2(1-\alpha)}
=C_{bias}(\theta)\,\sigma^{2\alpha}\gamma^{2(1-\alpha)}$, which is inequality \ref{eq:gclip-bias}.

\medskip
\noindent\textbf{Step 2: moment on the clipped region $B$ (common to both regimes).}\;
Since $\|\hat{\vg}\|=\gamma$ on $B$, lemma~\ref{lem:gclip-prob} bounds the clip probability and
\begin{equation}\label{eq:bv-momB}
\E\big[\|\hat{\vg}\|^{1+s}\mathbf 1_B\big]=\gamma^{1+s}\Pr(B)
\le\gamma^{1+s}\cdot\frac{\sigma^\alpha}{(1-\theta)^\alpha\gamma^\alpha}
=\frac{1}{(1-\theta)^\alpha}\,\sigma^\alpha\gamma^{1+s-\alpha}
=C_{tail}(\theta)\,\sigma^\alpha\gamma^{1+s-\alpha}.
\end{equation}

\medskip
\noindent\textbf{Step 3: moment on the unclipped region $A$.}\;
On $A$, $\hat{\vg}=\vg=\nabla f+\vxi$, so by the convexity bound $(a+b)^{1+s}\le 2^s(a^{1+s}+b^{1+s})$
(valid as $1+s\ge1$) and $\Pr(A)\le1$,
\begin{equation}\label{eq:bv-momA}
\E\big[\|\hat{\vg}\|^{1+s}\mathbf 1_A\big]
\le 2^s\|\nabla f\|^{1+s}\Pr(A)+2^s\,\E\big[\|\vxi\|^{1+s}\mathbf 1_A\big]
\le 2^s\|\nabla f\|^{1+s}+2^s\,\E\big[\|\vxi\|^{1+s}\mathbf 1_A\big].
\end{equation}
Only the noise moment $\E[\|\vxi\|^{1+s}\mathbf 1_A]$ depends on the regime.

\smallskip
\noindent\emph{Regime (i), $1+s\le\alpha$.}\; Here $\E\|\vxi\|^{1+s}$ is finite, and Lyapunov's inequality
gives the $\gamma$-free bound
$\E[\|\vxi\|^{1+s}\mathbf 1_A]\le\E\|\vxi\|^{1+s}\le(\E\|\vxi\|^{\alpha})^{(1+s)/\alpha}\le\sigma^{1+s}$.
Substituting into inequality \ref{eq:bv-momA} and adding inequality \ref{eq:bv-momB},
\[
\E\big[\|\hat{\vg}\|^{1+s}\mid\vx\big]
\le 2^s\|\nabla f\|^{1+s}+2^s\sigma^{1+s}+C_{tail}(\theta)\,\sigma^\alpha\gamma^{1+s-\alpha},
\]
which is inequality \ref{eq:gclip-mom-i}.

\smallskip
\noindent\emph{Regime (ii), $1+s>\alpha$.}\; Now $\E\|\vxi\|^{1+s}$ may be infinite, so Lyapunov is
unavailable. Instead we use that clipping is \emph{inactive} on $A$: from $\|\vg\|\le\gamma$,
\[
\|\vxi\|=\|\vg-\nabla f\|\le\|\vg\|+\|\nabla f\|\le\gamma+\theta\gamma=(1+\theta)\gamma\quad\text{on }A.
\]
Since $1+s-\alpha\ge0$, we peel off the excess power against this deterministic bound and keep the
integrable $\alpha$-moment:
\[
\E\big[\|\vxi\|^{1+s}\mathbf 1_A\big]
=\E\big[\|\vxi\|^{\alpha}\,\|\vxi\|^{1+s-\alpha}\mathbf 1_A\big]
\le\big((1+\theta)\gamma\big)^{1+s-\alpha}\E\big[\|\vxi\|^{\alpha}\mathbf 1_A\big]
\le(1+\theta)^{1+s-\alpha}\sigma^\alpha\gamma^{1+s-\alpha}.
\]
Substituting into inequality \ref{eq:bv-momA} and adding inequality \ref{eq:bv-momB}, the two
$\sigma^\alpha\gamma^{1+s-\alpha}$ terms combine into
$C_{var}(\theta)=2^s(1+\theta)^{1+s-\alpha}+\tfrac{1}{(1-\theta)^\alpha}$:
\[
\E\big[\|\hat{\vg}\|^{1+s}\mid\vx\big]
\le 2^s\|\nabla f\|^{1+s}
+\Big(2^s(1+\theta)^{1+s-\alpha}+\tfrac{1}{(1-\theta)^\alpha}\Big)\sigma^\alpha\gamma^{1+s-\alpha}
= 2^s\|\nabla f\|^{1+s}+C_{var}(\theta)\,\sigma^\alpha\gamma^{1+s-\alpha},
\]
which is inequality \ref{eq:gclip-mom-ii}.
\end{proof}

\subsection{Proof of lemma \ref{lem:gclip-corr}}\label{subsec:gclip-corr-sec}
\begin{proof}
Write $\nabla f\equiv\nabla f(\vx)$ and split the expectation according to whether the estimator is
clipped: on $A=\{\|\vg\|\le\gamma\}$ we have $\hat{\vg}=\vg$, and on $B=\{\|\vg\|>\gamma\}$ we have
$\hat{\vg}=\gamma\vg/\|\vg\|$, with $\Pr(A)+\Pr(B)=1$. Thus
\begin{equation}\label{eq:corr-split}
\E\big[\langle\nabla f,\hat{\vg}\rangle\mid\vx\big]
=\underbrace{\E\big[\langle\nabla f,\vg\rangle\,\mathbf 1_A\big]}_{=:R_A}
\;+\;\underbrace{\E\Big[\big\langle\nabla f,\tfrac{\gamma}{\|\vg\|}\vg\big\rangle\,\mathbf 1_B\Big]}_{=:R_B}.
\end{equation}
We lower-bound $R_A$ and $R_B$ separately; throughout we use $\E\|\vxi\|\le(\E\|\vxi\|^\alpha)^{1/\alpha}\le\sigma$
(Lyapunov's inequality, valid since $\alpha\ge1$), hence $\E[\|\vxi\|\mathbf 1_S]\le\sigma$ for any event $S$.

\smallskip
\noindent\emph{Unclipped term $R_A$.}\; On $A$ the estimator is exactly $\vg=\nabla f+\vxi$, so
\[
\langle\nabla f,\vg\rangle
=\langle\nabla f,\nabla f\rangle+\langle\nabla f,\vxi\rangle
=\|\nabla f\|^2+\langle\nabla f,\vxi\rangle
\;\ge\;\|\nabla f\|^2-\|\nabla f\|\,\|\vxi\|,
\]
using Cauchy--Schwarz on the noise term. Taking expectations over $A$,
$R_A\ge\|\nabla f\|^2\Pr(A)-\|\nabla f\|\,\E[\|\vxi\|\mathbf 1_A]$. Since $\|\nabla f\|>\theta\gamma$ gives
$\|\nabla f\|^2=\|\nabla f\|\cdot\|\nabla f\|\ge\theta\gamma\|\nabla f\|$, and $\E[\|\vxi\|\mathbf 1_A]\le\sigma$,
\begin{equation}\label{eq:corr-RA}
R_A\;\ge\;\theta\gamma\|\nabla f\|\,\Pr(A)-\sigma\|\nabla f\| .
\end{equation}

\smallskip
\noindent\emph{Clipped term $R_B$.}\; On $B$ the estimator is $\hat{\vg}=\gamma\,\vg/\|\vg\|$, a rescaling
of the unit vector $\vg/\|\vg\|$, so $\langle\nabla f,\hat{\vg}\rangle=\gamma\,\langle\nabla f,\vg/\|\vg\|\rangle$.
Applying lemma~\ref{lem:cutkoskyandmehta-ss} with $\vv=\vg$ (so $\vv-\nabla f=\vxi$),
$\langle\nabla f,\vg/\|\vg\|\rangle\ge\tfrac13\|\nabla f\|-\tfrac83\|\vxi\|$. Taking expectations over $B$
and using $\E[\|\vxi\|\mathbf 1_B]\le\sigma$,
\begin{equation}\label{eq:corr-RB}
R_B\;\ge\;\gamma\,\E\Big[\big(\tfrac13\|\nabla f\|-\tfrac83\|\vxi\|\big)\mathbf 1_B\Big]
=\tfrac13\gamma\|\nabla f\|\,\Pr(B)-\tfrac83\gamma\,\E[\|\vxi\|\mathbf 1_B]
\;\ge\;\tfrac13\gamma\|\nabla f\|\,\Pr(B)-\tfrac83\gamma\sigma .
\end{equation}

\smallskip
\noindent\emph{Combining.}\; Adding bounds \ref{eq:corr-RA} and \ref{eq:corr-RB} and grouping the two
probability-weighted terms, the coefficients $\theta$ and $\tfrac13$ combine through
$\Pr(A)+\Pr(B)=1$ as
\[
\theta\gamma\|\nabla f\|\,\Pr(A)+\tfrac13\gamma\|\nabla f\|\,\Pr(B)
\;\ge\;\min\{\theta,\tfrac13\}\,\gamma\|\nabla f\|\,\big(\Pr(A)+\Pr(B)\big)
\;\ge\;M\,\gamma\|\nabla f\| ,
\]
where the last step uses $\min\{\theta,\tfrac13\}\ge\min\{\tfrac{15}{16}\theta,\tfrac13\}=M$. Hence
\begin{equation}\label{eq:corr-precond}
\E\big[\langle\nabla f,\hat{\vg}\rangle\mid\vx\big]\;\ge\;M\gamma\|\nabla f\|-\sigma\|\nabla f\|-\tfrac83\gamma\sigma .
\end{equation}
It remains to show the two error terms are together at most $\tfrac{3M}{8}\gamma\|\nabla f\|$ once
$\gamma\ge\gamma_0=\max\{1,\tfrac{32}{3M\theta}\sigma\}$. Indeed, $\gamma\ge\tfrac{32}{3M\theta}\sigma\ge\tfrac{8}{M}\sigma$
(as $\tfrac{32}{3\theta}\ge8$ for $\theta<1$) gives $\sigma\le\tfrac M8\gamma$, so
$\sigma\|\nabla f\|\le\tfrac M8\gamma\|\nabla f\|$; and $\theta\gamma\ge\tfrac{32}{3M}\sigma$ together with
$\|\nabla f\|>\theta\gamma$ gives $\tfrac83\sigma\le\tfrac{M}{4}\|\nabla f\|$, so
$\tfrac83\gamma\sigma\le\tfrac M4\gamma\|\nabla f\|$. Substituting both into inequality \ref{eq:corr-precond},
\[
\E\big[\langle\nabla f,\hat{\vg}\rangle\mid\vx\big]
\;\ge\;\Big(M-\tfrac M8-\tfrac M4\Big)\gamma\|\nabla f\|
=\frac{5M}{8}\,\gamma\|\nabla f\| . \qedhere
\]
\end{proof}

\subsection{Proof of lemma \ref{lem:gclip-c2mom}}\label{subsec:gclip-c2mom-sec}
\begin{proof}
Immediate from $\|\hat{\vg}(\vx)\|\le\gamma$ (Definition~\ref{def:gclip-est}).
\end{proof}


\section{Experimental Validation}\label{subsec:experiments}

In our experiments with synthetic data we investigate the two central claims of this work, namely that the convergence rate
of SGD (Theorem~\ref{thm:SGD-proof}) is independent of the heavy-tail index $\alpha$ whenever
$\alpha\ge1+s$, and that G-Clip (Theorem~\ref{thm:gclip}) is the only method among those analysed for which we can give a convergence guarantee once $\alpha<1+s$.

\paragraph{Loss function.} For our tests we use an $(L,s)$-H\"older smooth loss with a known global minimiser. We choose $\vw^\star=\mathbf{1}_d\in\R^d$ and sample $n$ points $\vx_1,\dots,\vx_n\in\R^d$ i.i.d.\ from
$\mathcal N(\vzero,I_d/d)$ to form  the objective,
\begin{equation}\label{eq:exp-loss}
f(\vw) \;=\; \frac{1}{n}\sum_{i=1}^n \big|\langle \vw^\star-\vw,\ \vx_i\rangle\big|^{1+s},
\qquad s\in(0,1],
\end{equation}
with gradient chosen as $\nabla f(\vw) = -\frac{1+s}{n}\sum_{i=1}^n |r_i|^{s}\operatorname{sign}(r_i)\,\vx_i$,
where $r_i=\langle\vw^\star-\vw,\vx_i\rangle$. Since $r\mapsto|r|^{s}\operatorname{sign}(r)$ is
$s$-H\"older continuous, $\nabla f$ is $(L,s)$-H\"older continuous, and $f\ge0$ with the unique
global minimum $f(\vw^\star)=0$. We use $d=10$, $n=60$, and initialise every run at $\vw_0=\vzero$.
\paragraph{Noise.} At step $t$ the stochastic gradient is chosen as $\vg_t=\nabla f(\vw_t)+\vxi_t$, with
$\vxi_t$ drawn independently across iterations as $\vxi_t=R_t\,\vu_t$, where $\vu_t$ is uniform on
the unit sphere and $R_t\ge0$ is an independent scalar magnitude with a truncated Pareto
distribution, survival function $\Pr(R_t>r)=(r_0/r)^\alpha$ for $r_0\le r<R_{\max}$ and a point
mass at $R_{\max}=10^4$. This truncation guarantees $\E[\|\vxi_t\|^\alpha]<\infty$, as required by
Assumption~\ref{ass:noise-s}, and admits the closed form
$\E[\|\vxi_t\|^\alpha]=r_0^\alpha\big(\alpha\ln(R_{\max}/r_0)+1\big)$; for each $\alpha$ tested we
solve this equation for $r_0$ by root-finding so that $\E[\|\vxi_t\|^\alpha]=\sigma^\alpha$ exactly
(to machine precision) for a common target $\sigma$. \footnote{This avoids the severe Monte Carlo estimation
error that a sample-average calibration of this moment would otherwise incur precisely because
$\alpha$ is the noise's own tail index.}

\paragraph{Algorithms and schedule.} We run SGD, $\delta$-GClip with $\delta=0.1$, and G-Clip, each
with the constant, horizon-tuned step size (and, for the clipping methods, threshold) of the order
prescribed by the corresponding theorem for a given horizon $T$; for every $T$ in a grid ranging
from $500$--$1000$ up to $16{,}000$ (for upcoming Check~2) or $128{,}000$ (for upcoming Check~1). We run $100$--$150$ independent seeds and record the run-averaged stationarity metric
prescribed by each theorem, namely $\frac1T\sum_{t\le T}\|\nabla f(\vw_t)\|^2$ for SGD and
$\frac1T\sum_{t\le T}\min\{\|\nabla f(\vw_t)\|^2,\gamma\|\nabla f(\vw_t)\|\}$ for the clipping
methods, taking the median across seeds. \footnote{The mean is dominated by rare large deviations under heavy-tailed noise and is not representative.}.

\paragraph{Measured and theoretical slopes.} Write $M(T)$ for the run-averaged stationarity metric
above, evaluated at horizon $T$. For each algorithm and parameter setting we fit a straight line to
$\{(\log T,\log M(T))\}$ over the tested grid $T\in\{T_1,\dots,T_K\}$ by ordinary least squares, and
report the fitted slope
\begin{equation}\label{eq:measured-slope}
(\hat m, \cdot) \;:=\; \operatorname*{arg\,min}_{m,b\in\R}\ \sum_{k=1}^K\big(\log M(T_k)-m\log T_k-b\big)^2
\end{equation}
as the \emph{measured slope}. This is the
empirical exponent of $M(T)$ as a power of $T$, i.e.\ $M(T)\approx CT^{\hat m}$ for some constant
$C$. We compare $\hat m$ against the \emph{theoretical slope} $m^\star$, defined as the exponent of
$T$ in the $O(\cdot)$ rate of the theorem applicable to that algorithm and regime: $m^\star=-s/(1+s)$
for SGD under Theorem~\ref{thm:SGD-proof}, and
$m^\star=-\frac{2s(\alpha-1)}{(\alpha-1)+s(2\alpha-1)}$ for G-Clip under Theorem~\ref{thm:gclip} in
the regime $\alpha<1+s$. 

Table~\ref{tab:exp-summary} reports the close math between $\hat m$ (Eq.~\ref{eq:measured-slope})
against $m^\star$ for every case tested; a dash indicates no theorem in this work applies, so no
$m^\star$ exists to compare against.


\paragraph{Check 1: $\alpha$-independence of SGD.} {\em For a start,} we fix $s=0.25$ and scanning
$\alpha\in\{1.3,1.5,1.7,2.0\}$, all of which satisfy $\alpha\ge1+s$. The measured
slope $\hat m$ approaches the theoretical slope $m^\star=-s/(1+s)=-0.20$ as $\alpha$ moves away from
the boundary $\alpha=1+s$ ($\hat m=-0.166,-0.188,-0.193,-0.196$ for $\alpha=1.3,1.5,1.7,2.0$
respectively). We traced this to a genuine, theoretically consistent finite-horizon effect
rather than an artifact: at a fixed noise level $\sigma$, Lyapunov's inequality
$\E[\|\vxi\|^{1+s}]\le\sigma^{1+s}$ (used in the proof of Theorem~\ref{thm:SGD-proof}) is nearly
tight when $\alpha$ is close to $1+s$ and increasingly loose as $\alpha$ grows; concretely, for our
calibrated noise the realised moment $\E[\|\vxi\|^{1+s}]/\sigma^{1+s}$ equals $83\%$ at $\alpha=1.3$
but only $38\%$ at $\alpha=2.0$. Since the constant multiplying the theorem's rate depends on
$\sigma^{1+s}$, an $\alpha$ close to the boundary realises noise close to the worst case the bound
allows, inflating the constant (and hence the burn-in horizon) without changing $m^\star$ --- we
confirmed this by running $\alpha=1.3$ out to $T=512{,}000$, where the measured slope remains
pinned at $\hat m\approx-0.16$ throughout, consistent with a longer, not a different, transient.

{\em Secondly,} for a less smooth and less heavy-tailed case with  ($s=0.2$,
$\alpha\in\{1.5,1.7,1.85,2.0\}$, all with gap $\ge0.3$) gives measured slopes that agree with one
another to within $0.006$ ($\hat m=-0.150,-0.155,-0.155,-0.156$), which is the cleaner and more
direct empirical signature of the $\alpha$-independence asserted by Theorem~\ref{thm:SGD-proof}. 

Both the above experiments are reported in Figure~\ref{fig:check1}.

\begin{figure}[ht]
\centering
\includegraphics[width=0.98\linewidth]{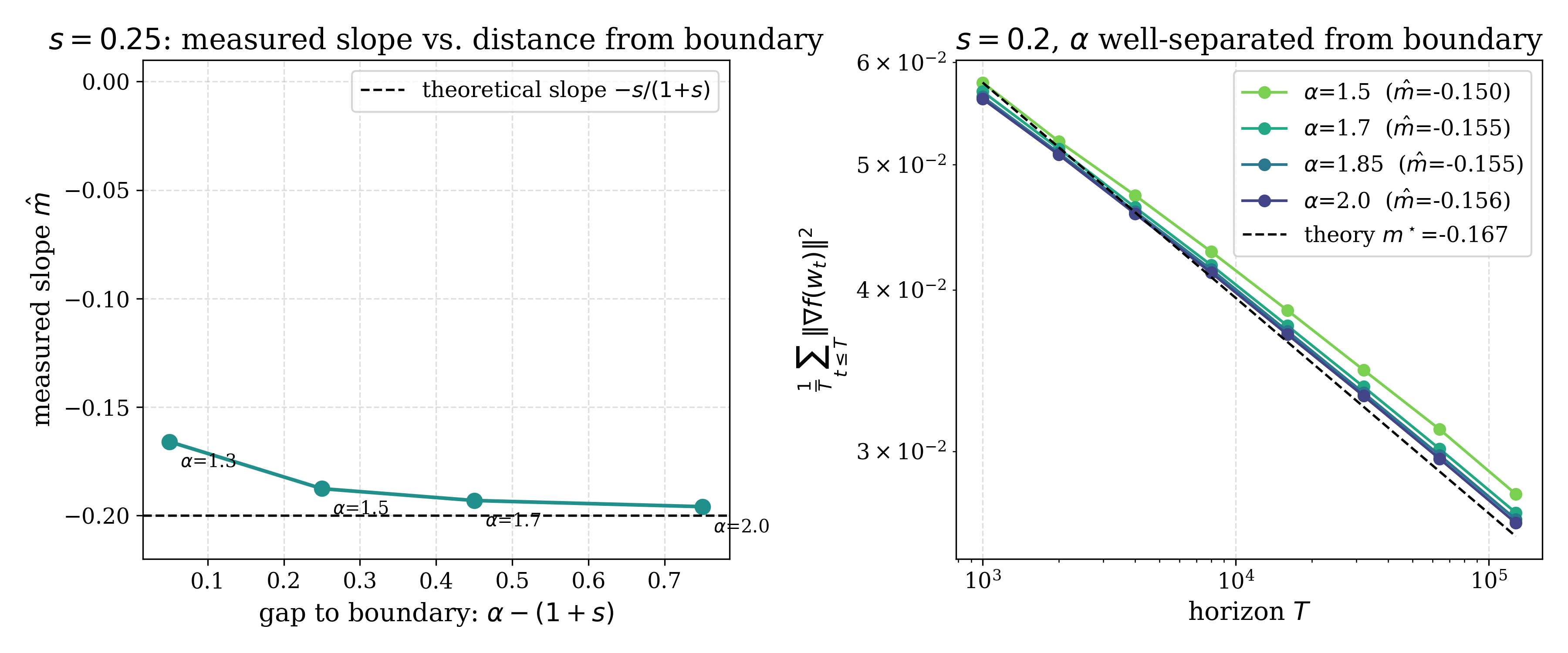}
\caption{Check 1. \emph{Left:} measured slope $\hat m$ (Eq.~\ref{eq:measured-slope}) of the
run-averaged $\|\nabla f(\vw_t)\|^2$ against $T$, plotted against the gap $\alpha-(1+s)$ to the
boundary of Assumption~\ref{ass:noise-s}, for $s=0.25$; $\hat m$ converges to the theoretical slope
$m^\star=-s/(1+s)=-0.20$ (dashed) as the gap grows, reflecting a longer finite-horizon transient
rather than a different asymptotic rate near the boundary. \emph{Right:} with $\alpha$ held at a
healthy distance from the boundary of $1+s$ with ($s=0.2$, $\alpha\in\{1.5,1.7,1.85,2.0\}$), the four curves
nearly coincide and track the theoretical slope $m^\star=-s/(1+s)=-0.167$ (dashed), directly
exhibiting the $\alpha$-independence of Theorem~\ref{thm:SGD-proof}.}
\label{fig:check1}
\end{figure}

\paragraph{Check 2: only G-Clip converges when $\alpha<1+s$.} Fixing $s=0.8$ and $\alpha=1.3$, so
that $1+s=1.8>\alpha$ ensures that neither Theorem~\ref{thm:SGD-proof} nor Theorem~\ref{thm:delta-gclip-s-3}
applies. In this setting we compare SGD, $\delta$-GClip with $\delta=0.1$, and G-Clip. SGD fails to converge: its
convergence metric does not decrease with $T$. 
G-Clip converges with $\hat m=-0.29$, almost matching its theoretical slope
$m^\star=-\frac{2s(\alpha-1)}{(\alpha-1)+s(2\alpha-1)}=-0.30$ (Theorem~\ref{thm:gclip}), with
essentially no gap between its final mean and median ($0.0188$ vs.\ $0.0188$). 

$\delta$-GClip with
$\delta=0.1$, for which no guarantee is available in this regime, nonetheless tracks G-Clip closely
in this experiment (with $\hat m=-0.28$). Thus the experiment is suggestive of there being a regime where $\delta$-GClip converges for $\alpha < 1+s$ but which is not yet visible in theory. 

\begin{figure}[ht]
\centering
\includegraphics[width=0.98\linewidth]{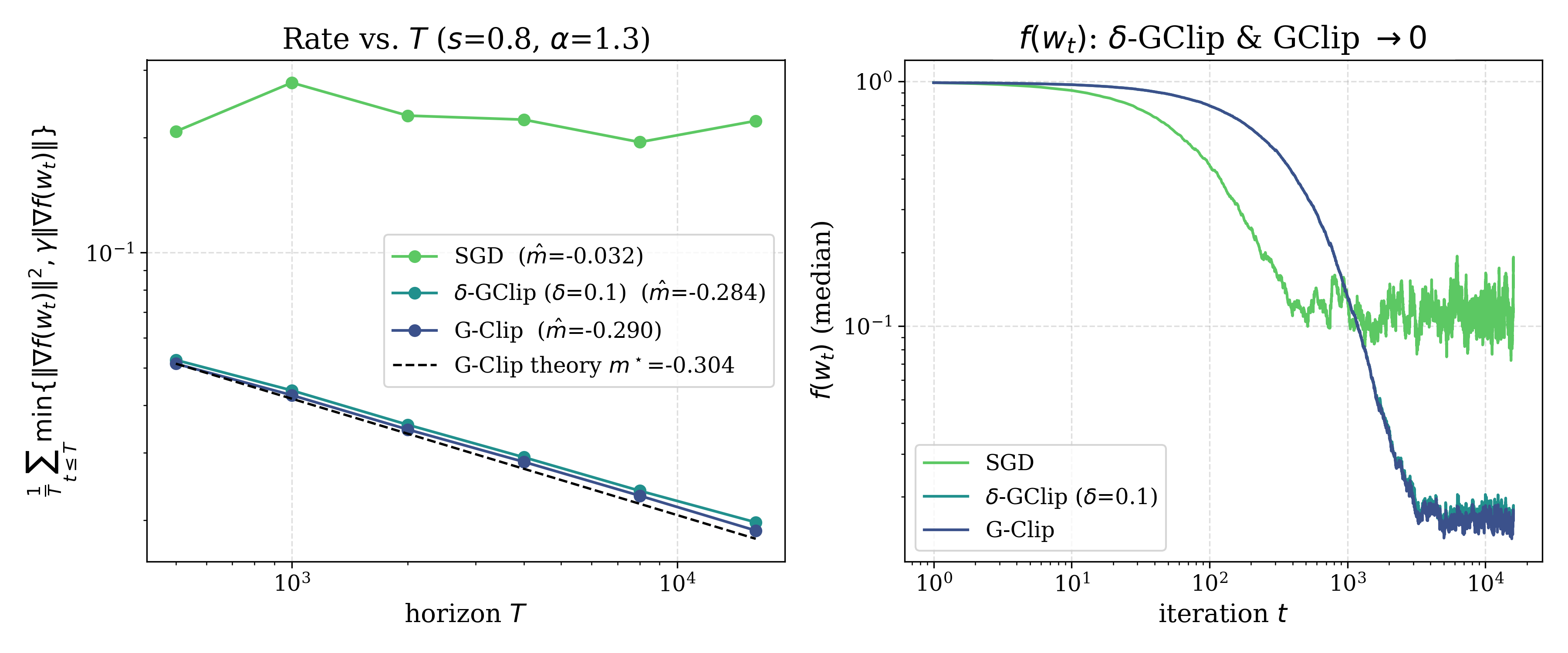}
\caption{Check 2 ($s=0.8$, $\alpha=1.3<1+s$). \emph{Left:} run-averaged stationarity metric against
the horizon $T$ for SGD, $\delta$-GClip ($\delta=0.1$), and G-Clip, with measured slopes $\hat m$
(Eq.~\ref{eq:measured-slope}) in the legend; only the clipping methods descend, with G-Clip tracking
its theoretical slope $m^\star=-0.30$ (dashed). \emph{Right:} the corresponding $f(\vw_t)$
trajectories, showing SGD stalling around $f\approx0.1$--$0.3$ while both clipping methods continue
to decrease toward the global minimum.}
\label{fig:check2}
\end{figure}



\begin{table}[ht]
\centering
\begin{tabular}{llccc}
\toprule
Check & Method & Measured slope $\hat m$ & Theory slope $m^\star$ & Guarantee? \\
\midrule
1 ($s=0.25$, near boundary) & SGD, $\alpha=1.3$ 
& $-0.166$ & $-0.20$ & \checkmark \\
1 ($s=0.25$, near boundary) & SGD, $\alpha=1.5$ 
& $-0.188$ & $-0.20$ & \checkmark \\
1 ($s=0.25$, near boundary) & SGD, $\alpha=1.7$ 
& $-0.193$ & $-0.20$ & \checkmark \\
1 ($s=0.25$, near boundary) & SGD, $\alpha=2.0$ 
& $-0.196$ & $-0.20$ & \checkmark \\
\midrule
1 ($s=0.2$, well separated) & SGD, $\alpha=1.5$ 
& $-0.150$ & $-0.167$ & \checkmark \\
1 ($s=0.2$, well separated) & SGD, $\alpha=1.7$ 
& $-0.155$ & $-0.167$ & \checkmark \\
1 ($s=0.2$, well separated) & SGD, $\alpha=1.85$ 
& $-0.155$ & $-0.167$ & \checkmark \\
1 ($s=0.2$, well separated) & SGD, $\alpha=2.0$ 
& $-0.156$ & $-0.167$ & \checkmark \\
\midrule
2 ($s=0.8,\alpha=1.3$) & SGD                            & $-0.03$ & --- & \ding{55} \\
2 ($s=0.8,\alpha=1.3$) & $\delta$-GClip ($\delta=0.1$)  & $-0.28$ & --- & \ding{55} \\
2 ($s=0.8,\alpha=1.3$) & G-Clip                         & $-0.29$ & $-0.30$ & \checkmark \\
\bottomrule
\end{tabular}
\caption{The run-averaged stationarity metric against horizon $T$ i.e the measured slope $\hat m$ (Eq.~\ref{eq:measured-slope}) vs.\ theoretical slope $m^\star$ of
, for every algorithm and parameter setting
tested. ``near boundary'' denotes that $\alpha$ is only slightly above $(1+s)$ while ``well-separate'' denotes the $\alpha$ being at least $0.3$ above $1+s$. A \checkmark\ in the last column indicates the corresponding theorem
in this work applies; a \ding{55}\ indicates no guarantee is available for that method in that
regime (so no $m^\star$ is listed).}
\label{tab:exp-summary}
\end{table}
\end{document}